\declaretheorem[name=Theorem,refname={Theorem,Theorems},Refname={Theorem,Theorems}]{theorem}
\declaretheorem[name=Lemma,refname={Lemma,Lemmas},Refname={Lemma,Lemmas},sibling=theorem]{lemma}
\declaretheorem[name=Corollary,refname={Corollary,Corollaries},Refname={Corollary,Corollaries},sibling=theorem]{corollary}
\declaretheorem[name=Proposition,refname={Proposition,Propositions},Refname={Proposition,Propositions},sibling=theorem]{proposition}
\declaretheorem[name=Definition,refname={Definition,Definitions},Refname={Definition,Definitions},sibling=theorem]{definition}
\newcommand{\todob}[2][]{\todo[color=red!20!white,size=\scriptsize,inline,#1]{B: #2}} %
\newcommand{\jtodo}[2][]{\todo[color=blue!20!white,size=\scriptsize,#1]{J: #2}} %
\newcommand{\jnote}[2][]{\todo[color=blue!20!white,size=\small,#1]{J: #2}} %
\newcommand{\abs}[1]{\left|#1\right|}                   %
\newcommand{\vnormf}[1]{\|#1\|}                         %
\newcommand{\tr}[1]{\mbox{tr}#1}
\newcommand{\E}{\mathbb{E}}
\renewcommand{\P}{\mathbf{P}}
\newcommand{\R}{\mathbb{R}} %
\DeclareMathOperator*{\argmax}{\arg\max}
\newcommand{\A}{\mathcal{A}}
\newcommand{\ca}[1]{\mathcal{#1}}
\newcommand{\T}{^\top}
\newcommand{\diam}{\mathrm{diam}}
\newcommand{\hatTheta}{\hat{\theta}_* := \Vmi\sum_{s=1}^{m_0}\sum_{i=1}^da_iy_{s,i}}
\newcommand{\VmDef}{V_{m_0}:={m_0}\sum_{i=1}^d a_ia_i\T}
\pgfplotsset{compat=1.17}
\newcommand{\Alg}{{\tt alg}\xspace} %
\newcommand{\alg}{{\tt alg}\xspace} %
\newcommand{\expl}{{\tt explore}\xspace}
\newcommand{\tv}{\mathrm{TV}\xspace}
\newcommand{\SR}{\mathrm{SR}\xspace}
\newcommand{\BSR}{\mathrm{BSR}\xspace}
\newcommand{\parr}{~||~}
\newcommand{\Ahat}{\hat{A}}
\newcommand{\I}[1]{\mathrm{I}\left(#1\right)}
\newcommand{\Info}{\mathrm{I}}
\newcommand{\da}{\mathrm{d}}
\newcommand{\Vmi}{V_{m_0}^{-1}}
\newcommand{\Indic}{\textbf{1}}
\newcommand{\TS}{{\tt TS}\xspace}
\newcommand{\ucb}{{BayesUCB}\xspace}
\newcommand{\KL}[1]{\mbox{KL}#1}
\newcommand{\Ngaus}{\mathcal{N}}
\newcommand{\Spn}{\mathrm{Span}}
\tikzstyle{block} = [draw=black, thick, text width=1.7cm, minimum height=1.1cm, align=center, fill=none]  
\tikzstyle{arrow} = [thick,->,>=stealth]
\newcommand{\bMetaSRM}{{\tt B-metaSRM}\xspace}
\newcommand{\fMetaSRM}{{\tt f-metaSRM}\xspace}
\newcommand{\MisbMetaSRM}{{\tt MisB-metaSRM}\xspace}
\newcommand{\OTS}{{\tt OracleTS}\xspace}
\newcommand{\SH}{{\tt SH}\xspace} %
\newcommand{\LinSH}{{\tt Lin-SH}\xspace} %
\newcommand{\GSE}{{\tt GSE}\xspace} %
\newcommand{\LinGapE}{{\tt LinGapE}\xspace} %
\newcommand{\MLin}{{m_0}\geq \left(\frac{  d \log(2/\delta) \sum_{i=1}^d\sigma_i^2 }{2\sigma_0\lambda_d^{4}(\sum_{i=1}^d a_i a_i\T) \epsilon^4}\right)^{1/3}}
\newcommand{\MBern}{{m_0}\geq \frac{C|\A|^2\log(|\A|/\delta)}{\epsilon^2}}
\newcommand{\tnorm}{\lambda_d^{-1}\left(\sum_{i=1}^d a_i a_i\T\right) \left(\frac{d}{M^3}\log(2/\delta) \sum_{i=1}^d\sigma_i^2\right)^{1/4}}
\date{}
\author{MohammadJavad Azizi \\ 
{\small University of Southern California}\\
{\small\tt azizim@usc.edu} 
\and
Branislav Kveton \\ 
{\small Amazon}
\\{\small\tt bkveton@amazon.com}
\and
Mohammad Ghavamzadeh \\ 
{\small Google Research}
\\{\small\tt ghavamza@google.com} 
\and
Sumeet Katariya \\ 
{\small Amazon}
\\{\small\tt  katsumee@amazon.com} 
}
\date{}  
\begin{document}

\title{\bf Meta-Learning for Simple Regret Minimization}
\maketitle

\begin{abstract}
We develop a meta-learning framework for simple regret minimization in bandits. In this framework, a learning agent interacts with a sequence of bandit tasks, which are sampled i.i.d.\ from an unknown prior distribution, and learns its meta-parameters to perform better on future tasks. We propose the first Bayesian and frequentist meta-learning algorithms for this setting. The Bayesian algorithm has access to a prior distribution over the meta-parameters and its meta simple regret over $m$ bandit tasks with horizon $n$ is mere $\tilde{O}(m / \sqrt{n})$. On the other hand, the meta simple regret of the frequentist algorithm is $\tilde{O}(\sqrt{m} n + m/ \sqrt{n})$. While its regret is worse, the frequentist algorithm is more general because it does not need a prior distribution over the meta-parameters. It can also be analyzed in more settings. We instantiate our algorithms for several classes of bandit problems. Our algorithms are general and we complement our theory by evaluating them empirically in several environments.
\end{abstract}

\section{Introduction}
\label{sec:introduction}

We study the problem of \emph{simple regret minimization (SRM)} in a \emph{fixed-horizon (budget) setting} \citep{audibert-2010-BAI,kaufmann16complexity}. The learning agent interacts sequentially with $m$ such tasks, where each task has a horizon of $n$ rounds. The tasks are sampled i.i.d.\ from a prior distribution $P_*$, which makes them similar. We study a meta-learning \citep{thrun96explanationbased,thrun98lifelong,baxter98theoretical,baxter00model} variant of the problem, where the prior distribution $P_*$ is unknown, and the learning agent aims to learn it to reduce its regret on future tasks.

This problem is motivated by practical applications, such as online advertising, recommender systems, hyper-parameter tuning, and drug repurposing \citep{pmlr-v33-hoffman14,NEURIPS2020_edf0320a,reda2021top, pmlr-v139-alieva21a}, where bandit models are popular due to their simplicity and efficient algorithms. These applications include a test phase separated from the commercialization phase, and one aims at minimizing the regret of the commercialized product (simple regret) rather than the cumulative regret in the test phase \citep{audibert-2010-BAI}. In all of these, the exploration phase is limited by a fixed horizon: the budget for estimating click rates on ads is limited, or a hyper-parameter tuning task has only a limited amount of resources \citep{pmlr-v139-alieva21a}. Meta-learning can result in more efficient exploration when the learning agent solves similar tasks over time.

To understand the benefits of meta-learning, consider the following example. Repeated A/B tests are conducted on a website to improve customer engagement. Suppose that the designers always propose a variety of website designs to test. However, dark designs tend to perform better than light ones, and thus a lot of customer traffic is repeatedly wasted to discover the same pattern. One solution to reducing waste is that the designers to stop proposing light designs. However, these designs are sometimes better. A more principled solution is to automatically adapt the prior $P_*$ in A/B tests to promote dark designs unless proved otherwise by evidence. This is the key idea in the proposed solution in this work.

We make the following contributions. First, we propose a general meta-learning framework for fixed-horizon SRM in \cref{sec:Setup}. While several recent papers studied this problem in the cumulative regret setting \citep{bastani19meta,cella20metalearning,kveton2021metathompson,basu2021no,simchowitz2021bayesian}, this work is the first application of meta-learning to SRM. We develop general Bayesian and frequentist algorithms for this problem in \cref{sec:BMBAI,sec:fMBAI}. Second, we show that our Bayesian algorithm, which has access to a prior over the meta-parameters of $P_*$, has meta simple regret $\tilde{O}(m / \sqrt{n})$ over $m$ bandit tasks with horizon $n$. Our frequentist algorithm is more general because it does not need a prior distribution over the meta-parameters. However, we show that its meta simple regret is $\tilde{O}(\sqrt{m} n + m / \sqrt{n})$, and thus, worse than that of the Bayesian algorithm. In \cref{sec:fMBAI-LB}, we present a lower bound showing that this is unimprovable in general. Third, we instantiate both algorithms in multi-armed and linear bandits in \cref{sec:examples}. These instances highlight the trade-offs of the Bayesian and frequentist approaches, a provably lower regret versus more generality. Finally, we complement our theory with experiments (\cref{sec:experiment}), which show the benefits of meta-learning and confirm that the Bayesian approaches are superior whenever implementable.

Some of our contributions are of independent interest. For instance, our analysis of the meta SRM algorithms is based on a general reduction from cumulative regret minimization in \cref{sec:Cum2Sim}, which yields novel and easily implementable algorithms for Bayesian and frequentist SRM, based on \emph{Thompson sampling (TS)} and \emph{upper confidence bounds (UCBs)} \citep{lu2019information}. To the best of our knowledge, only \citet{komiyama21optimal} studied Bayesian SRM before (\cref{sec:related-work}). In \cref{sec:LinGaus}, we also extend the analysis of frequentist meta-learning in \citet{simchowitz2021bayesian} to structured bandit problems.

\section{Problem Setup}\label{sec:Setup}

In meta SRM, we consider $m$ bandit problems with arm set $\ca{A}$ that appear sequentially and each is played for $n$ rounds. At the beginning of each task (bandit problem) $s \in [m]$, the mean rewards of its arms $\mu_s \in \mathbb{R}^{\ca{A}}$ are sampled i.i.d.\ from a prior distribution $P_*$. We define $[m]=\{1,2,\cdots,m\}$ for any integer $m$. We apply a base SRM algorithm, \alg, to task $s$ and denote this instance by $\Alg_s$. The algorithm interacts with task $s$ for $n$ rounds. In round $t \in [n]$ of task $s$, $\Alg_s$ pulls an arm $A_{s,t}\in\A$ and observes its reward $Y_{s,t}(A_{s,t})$, where $\E[Y_{s,t}(a)]=\mu_s(a)$. We assume that $Y_{s,t}(a)\sim \nu(a;\mu_s)$ where $\nu(\cdot;\mu_s)$ is the reward distribution of all arms with parameter (mean) $\mu_s$. After the $n$ rounds the algorithm returns arm $\Ahat_{\Alg_s}$ or simply $\Ahat_s$ as the {\em best arm}. Let $A^*_s=\argmax_{a\in\A}\mu_s(a)$ be the best arm in task $s$. We define the \emph{per-task simple regret} for task $s$ as
\begin{equation}
\label{eq:per-task-SR}
\SR_s(n, P_*) = {\E}_{\mu_s\sim P_*}\E_{\mu_s}[\Delta_{s}],    
\end{equation}
where $\Delta_{s}=\mu_s(A^*_s)-\mu_s(\Ahat_s)$. 
The outer expectation is w.r.t.~the randomness of the task instance, and the inner one is w.r.t.~the randomness of rewards and algorithm. This is the common frequentist simple regret averaged over instances drawn from $P_*$.

In the {\em frequentist} setting, we assume that $P_*$ is unknown but fixed, and define the \emph{frequentist meta simple regret} as
\begin{equation}
\label{eq:frequentist-meta-SR}
    \SR(m,n,P_*)=\sum_{s=1}^m\SR_s(n, P_*)\;.
\end{equation}
In the {\em Bayesian} setting, we still assume that $P_*$ is unknown. However, we know that it is sampled from a known \emph{meta prior} $Q$. We define \emph{Bayesian meta simple regret} as
\begin{equation}
\label{eq:Bayesian-meta-SR}
    \BSR(m,n)= {\E}_{P_*\sim Q}[\SR(m,n,P_*)]\;.
\end{equation}

\section{Bayesian Meta-SRM}\label{sec:BMBAI}

In this section, we present our Bayesian meta SRM algorithm (\bMetaSRM), whose pseudo-code is in \cref{alg:B-MBAI}. The key idea is to deploy \alg for each task with an adaptively refined prior learned from the past interactions, which we call an \emph{uncertainty-adjusted prior}, $P_s(\mu)$. This is an approximation to $P_*$ and it is the posterior density of $\mu_s$ given the history up to task $s$. At the beginning of task $s$, \bMetaSRM instantiates \alg with $P_s$, denoted as $\alg_s=\alg(P_s)$, and uses it to solve task $s$. 

The base algorithm \alg is \emph{Thompson Sampling (TS)} or \emph{Bayesian UCB (\ucb)} \citep{lu2019information}. During its execution, $\alg_s$ keeps updating its posterior over $\mu_s$ as $P_{s,t}(\mu_s)\propto \ca{L}_{s,t}(\mu_s)P_s(\mu_s)$, where $\ca{L}_{s,t}(\mu_s)=\prod_{\ell=1}^t \P(Y_{s,\ell}|A_{s,\ell},\mu_s)$ is the likelihood of observations in task $s$ up to round $t$ under task parameter $\mu_s$. TS pulls the arms proportionally to being the best w.r.t.~the posterior. More precisely, it samples $\tilde{\mu}_{s,t}\sim P_{s,t}$ and then pulls arm $A_{s,t}\in\argmax_{a\in\A}\tilde{\mu}_{s,t}(a)$. \ucb is the same but it pulls the arm with largest Bayesian upper confidence bound (see \cref{app:BayesReg-pfs} and \cref{eq:Ucb} for details).

The critical step is how $P_s$ is updated. Let $\theta_*$ be the parameter of $P_*$~. At task $s$, \bMetaSRM maintains a posterior density over the parameter $\theta_*$, called \emph{meta-posterior} $Q_s(\theta)$, and uses it to compute $P_s(\mu)$.
We use the following recursive rule from Proposition 1 of \citet{basu2021no} to update $Q_s$ and $P_s$. 
\begin{proposition}\label{prop:posterior}
Let $\ca{L}_{s-1}(\cdot)=\ca{L}_{s-1,n}(\cdot)$ be the likelihood of observations right before the start of task $s$. We let $P_\theta$ be the prior distribution parameterized by $\theta$. Then \alg computes $Q_s$ and $P_s$ as
\begin{align}
     Q_s(\theta)&=\int_{\mu}\ca{L}_{s-1}(\mu)P_{\theta}(\mu)\da\kappa_2(\mu)Q_{s-1}(\theta)\label{eq:Qs},\quad & \forall \theta\\
     P_s(\mu) &= \int_{\theta}P_{\theta}(\mu) Q_s(\theta)\da\kappa_1(\theta), \quad & \forall \mu\label{eq:Ps}
\end{align}
where $\kappa_1$ and $\kappa_2$ are the probability measures of $\theta$ and $\mu$. \todob{The role of $P_\theta$ is unclear based on this description.} We initialize \cref{eq:Qs} with $\ca{L}_0=1$ and $Q_0=Q$, where $Q$ is the meta prior.
\end{proposition}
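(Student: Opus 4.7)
The plan is to derive both displays directly from the generative model in \cref{fig:set} via Bayes' rule and the law of total probability, exploiting two conditional-independence facts: (i) given $\theta_*$, the task parameters $\mu_1, \mu_2, \ldots$ are i.i.d.\ from $P_{\theta_*}$, and (ii) within any task $s$ the reward likelihood depends only on $\mu_s$ and the chosen actions. Let $H_s$ denote the history at the start of task $s$, i.e.\ all action--reward pairs from tasks $1, \ldots, s-1$. By construction, $Q_s$ is the conditional density of $\theta_*$ given $H_s$, and $P_s$ is the conditional density of $\mu_s$ given $H_s$.

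For \eqref{eq:Qs}, I would view $Q_{s-1}$ as the prior on $\theta_*$ entering task $s-1$ and update it by incorporating the $n$ rewards collected during that task. Applying Bayes' rule and integrating out the latent $\mu_{s-1}$ with respect to its conditional distribution $P_\theta$ yields
\begin{equation*}
Q_s(\theta) \;\propto\; \P(\text{data}_{s-1}\mid \theta_*=\theta, H_{s-1})\, Q_{s-1}(\theta) \;\propto\; \left(\int_\mu \ca{L}_{s-1}(\mu)\, P_\theta(\mu)\, \da\kappa_2(\mu)\right) Q_{s-1}(\theta),
\end{equation*}
which is \eqref{eq:Qs} up to the usual normalization. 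The first step is Bayes' rule with $Q_{s-1}$ as prior; the second marginalizes out $\mu_{s-1}$ using that $\mu_{s-1}\mid \theta_*=\theta$ has density $P_\theta$ and that the joint conditional density of task $s-1$'s rewards given $\mu_{s-1}$ and the actions is exactly $\ca{L}_{s-1}(\mu_{s-1})$. The induction is anchored at $Q_0=Q$ with $\ca{L}_0\equiv 1$, which is immediate.

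For \eqref{eq:Ps}, the plan is to condition on $\theta_*$ and apply the law of total probability,
\begin{equation*}
P_s(\mu) \;=\; \int_\theta \P(\mu_s=\mu\mid \theta_*=\theta, H_s)\, Q_s(\theta)\, \da\kappa_1(\theta) \;=\; \int_\theta P_\theta(\mu)\, Q_s(\theta)\, \da\kappa_1(\theta),
\end{equation*}
where the final equality uses that $\mu_s\sim P_{\theta_*}$ is independent of $H_s$ given $\theta_*$, since $H_s$ involves only tasks $1, \ldots, s-1$.

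The main obstacle I anticipate is handling the action-selection process of \alg cleanly. The actions $A_{s-1,t}$ are chosen adaptively from $H_{s-1}$ and from the running meta-posterior $P_{s-1}$, and hence depend on $\theta_*$ indirectly through past rewards. The standard resolution is to condition throughout on the realized action sequence and observe that, given the observed rewards, actions carry no additional information about $\theta_*$; equivalently, the policy-induced factor cancels from numerator and denominator in Bayes' rule. Making this rigorous amounts to a measurability check for the policy and an appeal to Fubini's theorem to justify the order of integration, both of which are routine once the conditional-independence structure above is spelled out.
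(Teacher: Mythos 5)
Your derivation is correct: Bayes' rule with $Q_{s-1}$ as the prior for task $s-1$'s data, marginalizing out $\mu_{s-1}$ through its conditional density $P_{\theta}$, the conditional independence of $\mu_s$ from the past history given $\theta_*$, and the cancellation of the history-measurable action-selection factors (so the trajectory likelihood reduces to $\ca{L}_{s-1}(\mu)$ up to normalization) are exactly the ingredients this update rests on, and your ``equality up to normalization'' reading matches the paper's usage of these densities. The paper states this proposition without an explicit proof, treating it as the standard hierarchical-posterior computation from the meta-learning Thompson sampling literature it builds on, and your argument supplies that computation in essentially the same way, so there is no substantive divergence to report.
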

Note that this update rule is computationally efficient for Gaussian prior with Gaussian meta-prior, but not many other distributions. This computational issue can limit the applicability of our Bayesian algorithm.

When task $s$ ends, $\alg_s$ returns the best arm $\Ahat_{\Alg_s}$ by sampling from the distribution 
\begin{equation}
\label{eq:best-arm-BMetaSRM}
\Ahat_{\Alg_s}\sim\rho_s, \qquad \rho_s(a):=\frac{N_{a,s}}{n},
\end{equation}
where $N_{a,s}:=|\{t \in [n]:A_{s,t}=a\}|$ is the number of rounds where arm $a$ is pulled. That is, the algorithm chooses the arms proportionally to their number of pulls. This decision rule facilitates the analysis of our algorithms based on a reduction from cumulative to simple regret. We develop this reduction in \cref{sec:Cum2Sim} and show that per-task simple regret is essentially the cumulative regret divided by $n$. This yields novel algorithms for Bayesian and frequentist SRM with guarantees.

\subsection{Cumulative to Simple Regret Reduction}\label{sec:Cum2Sim}

Fix task $s$ and consider an algorithm that pulls a sequence of arms $(A_{s,t})_{t\in[n]}$. Let its per-task cumulative regret with prior $P$ be
$$R_s(n, P):=\E_{\mu_s\sim P}\E_{\mu_s}\left[n\mu_s(A^*_s) - \sum_{t=1}^n\mu_s(A_{s,t})\right]\;,$$
where the inner expectation is taken over the randomness in the rewards and algorithm. Now suppose that at the end of the task, we choose arm $a$ with probability $\rho_s(a)$ and declare it to be the best arm $\Ahat_s$. Then the per-task simple regret of this procedure is bounded as follows.
\begin{restatable}[Cumulative to Simple Regret]{proposition}{cumTosim}
\label{prop:cum2sim}
For task $s$ with $n$ rounds, if we return an arm with probability proportional to its number of pulls as the best arm, the per-task simple regret with prior $P$ is $\SR_s(n, P) = R_s(n, P)/n$.
\end{restatable}
We prove this proposition in \cref{app:Cum2Sim} using the linearity of expectation and properties of $\rho_s$. Note that \cref{prop:cum2sim} applies to both frequentist and Bayesian \emph{meta} simple regret. This is because the former is a summation of $\SR_s$ over tasks, and the latter is achieved by taking an expectation of the former over $P_*$. 

\setlength{\textfloatsep}{5pt}%
\begin{algorithm}[tb]
   \caption{Bayesian Meta-SRM (\bMetaSRM)}
    \label{alg:B-MBAI}
\begin{algorithmic}

    \STATE {\bfseries Input:} Meta prior $Q$, base algorithm \alg
    \STATE {\bf Initialize:} Meta posterior $Q_0\gets Q$
    \FOR{$s=1,\dots,m$}{
    \STATE Receive the current task $s,\;\;\mu_s\sim P_*$ 
    \STATE Compute meta posterior $Q_s$ using \cref{eq:Qs}
    \STATE Compute uncertainty-adjusted prior $P_s$ using \cref{eq:Ps}
    \STATE Instantiate \alg for task $s,\;\;\Alg_s\gets\alg(P_s)$
    \STATE Run $\Alg_s$ for $n$ rounds 
    \STATE Return the best arm $\Ahat_{\Alg_s}\sim\rho_s$ using~\cref{eq:best-arm-BMetaSRM}
    
    }
    \ENDFOR
\end{algorithmic}
\end{algorithm}

\subsection{Bayesian Regret Analysis}
\label{sec:BayesReg}

Our analysis of \bMetaSRM is based on results in~\citet{basu2021no} and~\citet{lu2019information}, combined with \cref{sec:Cum2Sim}. Specifically, let $\Gamma_{s,t}$ be an information-theoretic constant independent of $m$ and $n$ that bounds the instant regret of the algorithm at round $t$ of task $s$. We defer its precise definition to \cref{app:BayesReg-pfs} as it is only used in the proofs. The following generic bound for the Bayesian meta simple regret of \bMetaSRM holds.

\begin{restatable}[Information Theoretic Bayesian Bound]{theorem}{infoBound}
\label{lem:infoBd}
Let $\{\Gamma_s\}_{s\in[m]}$ and $\Gamma$ be non-negative constants, such that $\Gamma_{s,t}\leq \Gamma_s\leq \Gamma$ holds for all $s\in[m]$ and $t\in[n]$ almost surely. Then, the Bayesian meta simple regret (Eq.~\ref{eq:Bayesian-meta-SR}) of \bMetaSRM satisfies
\begin{align}
\label{eq:Bayes-meta-SR-bound}
    \BSR(m,n)& \leq \Gamma\sqrt{\frac{m}{n}\;\Info(\theta_*;\tau_{1:m})}
    \\&\!\!\!\!\!\!\!\!+\sum_{s=1}^m\Gamma_s\sqrt{\frac{\Info(\mu_s;\tau_s|\theta_*,\tau_{1:s-1})}{n}}
    +\sum_{s=1}^m\sum_{t=1}^n\frac{\E[\beta_{s,t}]}{n},\nonumber
\end{align}
where $\tau_{1:s}=\oplus_{\ell=1}^s (A_{\ell,1}, Y_{\ell,1},\cdots, A_{\ell,n},Y_{\ell,n})$ is the trajectory up to task $s$, $\tau_s$ is similarly defined for the history only in task $s$, and 
$\Info(\cdot;\cdot)$ and $\Info(\cdot;\cdot|\cdot)$ are mutual information and conditional mutual information, respectively.
\end{restatable}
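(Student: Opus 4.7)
The proof rests on the observation that, because \bMetaSRM{} recommends $\Ahat_{\Alg_s}$ by sampling from the empirical play distribution $\rho_s(a) = N_{a,s}/n$, the per-task simple regret equals the per-task \emph{cumulative} regret divided by $n$. Indeed, conditioning on $\mu_s$ and the task trajectory $\tau_s$,
\begin{align*}
\E\bigl[\mu_s(A^*_s)-\mu_s(\Ahat_{\Alg_s}) \,\big|\, \mu_s,\tau_s\bigr] = \mu_s(A^*_s)-\frac{1}{n}\sum_{t=1}^n \mu_s(A_{s,t}),
\end{align*}
so after taking outer expectations, $\BSR(m,n) = \tfrac{1}{n}\sum_{s=1}^m\sum_{t=1}^n\E[\mu_s(A_s^*)-\mu_s(A_{s,t})]$. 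This is the cumulative-to-simple reduction promised in \cref{sec:Cum2Sim}, applied here in the Bayesian meta setting.

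Next, I would invoke the per-round information bound~\eqref{eq:info}, take outer expectations by the tower property, and sum over $t\in[n]$ and $s\in[m]$ to obtain
\begin{align*}
n\cdot\BSR(m,n) \leq \sum_{s=1}^m \sum_{t=1}^n \E\bigl[\Gamma_{s,t}\sqrt{\Info_{s,t}(\mu_s;A_{s,t},Y_{s,t})}\bigr] + \sum_{s=1}^m\sum_{t=1}^n \E\beta_{s,t}.
\end{align*}
Bounding $\Gamma_{s,t}\leq \Gamma_s$ almost surely, applying Cauchy--Schwarz within each task, and using the chain rule of conditional mutual information yields $\sum_{t=1}^n \E[\sqrt{\Info_{s,t}}] \leq \sqrt{n\cdot\Info(\mu_s;\tau_s\mid\tau_{1:s-1})}$, where the sum of the one-step conditional mutual informations telescopes to $\Info(\mu_s;\tau_s\mid\tau_{1:s-1})$.

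I would then introduce the meta-parameter $\theta_*$ into the conditioning. Using
\begin{align*}
\Info(\mu_s;\tau_s\mid\tau_{1:s-1}) \leq \Info(\theta_*,\mu_s;\tau_s\mid\tau_{1:s-1}) = \Info(\theta_*;\tau_s\mid\tau_{1:s-1}) + \Info(\mu_s;\tau_s\mid\theta_*,\tau_{1:s-1})
\end{align*}
together with $\sqrt{a+b}\leq \sqrt{a}+\sqrt{b}$ splits the main term into two pieces. The piece containing $\Info(\mu_s;\tau_s\mid\theta_*,\tau_{1:s-1})$ is precisely the middle summand of~\eqref{eq:Bayes-meta-SR-bound} after the overall division by $n$. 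For the piece containing $\Info(\theta_*;\tau_s\mid\tau_{1:s-1})$ I would bound $\Gamma_s\leq \Gamma$, apply Cauchy--Schwarz across $s\in[m]$, and invoke the chain rule $\sum_{s=1}^m \Info(\theta_*;\tau_s\mid\tau_{1:s-1}) = \Info(\theta_*;\tau_{1:m})$ to recover the meta-information term $\Gamma\sqrt{nm\cdot\Info(\theta_*;\tau_{1:m})}$. Dividing through by $n$ then produces exactly the bound in the statement.

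The main obstacle is the conditional-information bookkeeping: because $\Gamma_{s,t}$ and $\Info_{s,t}$ are trajectory-dependent random variables, every Cauchy--Schwarz step and every application of the chain rule must live inside the correct conditional expectation. I would follow the template of \citet{Lu-InfoBds-19} and \citet{basu2021no} closely, verifying in particular that $\E[\sum_{t=1}^n \Info_{s,t}(\mu_s;A_{s,t},Y_{s,t})] = \Info(\mu_s;\tau_s\mid\tau_{1:s-1})$ via telescoping of one-step conditional mutual informations, and that replacing $\mu_s$ by the pair $(\mu_s,\theta_*)$ does not inflate the $n$-dependence. Once this bookkeeping is in place, the remaining computations are routine algebra and the claim follows.
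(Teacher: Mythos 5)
Your proposal is correct and follows essentially the same route as the paper: the cumulative-to-simple reduction of \cref{prop:cum2sim} applied under the outer expectation over $P_*\sim Q$, followed by the information-theoretic bound on the Bayesian cumulative regret, and a final division by $n$. The only difference is that the paper simply cites Lemma~2 of \citet{basu2021no} for the cumulative-regret bound, whereas you re-derive that lemma (per-round bound, Cauchy--Schwarz, chain-rule telescoping, and the $\Info(\mu_s;\tau_s\mid\tau_{1:s-1})\leq \Info(\theta_*;\tau_s\mid\tau_{1:s-1})+\Info(\mu_s;\tau_s\mid\theta_*,\tau_{1:s-1})$ split) in-line, and your bookkeeping there is consistent with that reference.
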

The proof is in \cref{app:BayesReg-pfs}. It builds on the analysis in \citet{basu2021no} and uses our reduction in \cref{sec:Cum2Sim}. Our reduction readily applies to Bayesian meta simple regret by linearity of expectation.

The first term in \cref{eq:Bayes-meta-SR-bound} is the price for learning the prior parameter $\theta_*$ and the second one is the price for learning the mean rewards of tasks $(\mu_s)_{s \in [m]}$ given known $\theta_*$. It has been shown in many settings that the mutual information terms grow slowly with $m$ and $n$ \citep{lu2019information,basu2021no}, and thus the first term is $\Tilde{O}(\sqrt{m/n})$ and negligible. The second term is $\Tilde{O}(m/\sqrt{n})$, since we solve $m$ independent problems, each with $\tilde{O}(1 / \sqrt{n})$ simple regret. In \cref{sec:LinGaus}, we discuss a bandit environment where $\Gamma_{s,t}$ and $\beta_{s,t}$ are such that the last term of the bound is comparable to the rest. This holds in several other environments discussed in \citet{lu2019information,basu2021no}, and~\citet{liu2022gaussian}.

\section{Frequentist Meta-SRM}\label{sec:fMBAI}

In this section, we present our frequentist meta SRM algorithm (\fMetaSRM), whose pseudo-code is in \cref{alg:f-MBAI}. Similarly to \bMetaSRM, \fMetaSRM uses TS or UCB as its base algorithm \alg. However, it directly estimates its prior parameter, instead of maintaining a meta-posterior. At the beginning of task $s\in[m]$, \fMetaSRM explores the arms for a number of rounds using an \emph{exploration strategy} denoted as \expl. This strategy depends on the problem class and we specify it for two classes in \cref{sec:examples}. \fMetaSRM uses samples collected in the exploration phase of all the tasks up to task $s$, $\tilde{\tau}_s$, to update its estimate of the prior parameter $\hat{\theta}_s$. Then, it instantiates the base algorithm with this estimate, denoted as $\Alg_s=$\alg$(\hat{\theta}_s)$, and uses $\Alg_s$ for the rest of the rounds of task $s$. Here $\alg(\theta):=\alg(P_{\theta})$ is the base algorithm \alg instantiated with prior parameter $\theta$ (Note that we used a slightly different parameterization of $\alg$ compared to \cref{sec:BMBAI}). When task $s$ ends, $\Alg_s$ returns the best arm $\Ahat_{\Alg_s}$ by sampling from the probability distribution $\rho_s$ defined in \cref{eq:best-arm-BMetaSRM}. 

While \bMetaSRM uses a Bayesian posterior to maintain its estimate of $\theta_*$, \fMetaSRM relies on a frequentist approach. Therefore, it applies to settings where computing the posterior is not computationally feasible. Moreover, we can analyze \fMetaSRM for general settings beyond Gaussian bandits.

\begin{algorithm}[tb]
   \caption{Frequentist Meta-SRM (\fMetaSRM)}
    \label{alg:f-MBAI}
    \begin{algorithmic}
    \STATE {\bfseries Input:} Exploration strategy \expl, base algorithm \alg
    \STATE {\bfseries Initialize:} $\tilde{\tau}_1\gets\emptyset$
    \FOR{$s=1,\dots,m$}{
    \STATE Receive the current task $s,\;\;\mu_s\sim P_*$
    \STATE Explore the arms using \expl 
    \STATE Append explored arms and their observations to $\tilde{\tau}_s$
    \STATE Compute $\hat{\theta}_s$ using $\tilde{\tau}_s$ as an estimate of $\theta_*$ 
    \STATE Instantiate \alg for task $s,\;\;\Alg_s \gets \alg(\hat{\theta}_s)$
    \STATE Run $\Alg_s$ for the rest of the $n$ rounds
    \STATE Return the best arm $\Ahat_{\Alg_s}\sim\rho_s$ using~\cref{eq:best-arm-BMetaSRM}
    \STATE $\tilde{\tau}_{s+1} \gets \tilde{\tau}_s$
    }
    \ENDFOR
\end{algorithmic}
\end{algorithm}

\subsection{Frequentist Regret Analysis}\label{sec:frqAny}

In this section, we prove an upper bound for the frequentist meta simple regret (Eq.~\ref{eq:frequentist-meta-SR}) of \fMetaSRM with TS \alg. To start, we bound the per-task simple regret of \alg relative to {\em oracle} that knows $\theta_*$. To be more precise, this is the difference between the means of arms returned by \alg instantiated with some prior parameter $\theta$ and the true prior parameter $\theta_*$.

The \emph{total variation} (TV) distance for two distributions $P$ and $P'$ over the same probability space $(\Omega,\ca{F})$\footnote{$\Omega$ is the sample space and $\ca{F}$ is the sigma-algebra.} is defined as
$
    \tv(P\parr P') := \sup_{E\in\ca{F}}\;|P(E)-P'(E)|
    .
    $
We use TV to measure the distance between the estimated and true priors. We fix task $s$ and drop subindexing by $s$. In the following, we bound the per-task simple regret of $\alg(\theta)$ relative to oracle $\alg(\theta_*)$. 

\begin{restatable}{theorem}{thmfrqMSR}
\label{thm:frqMSR}%
Suppose $P_{\theta_*}$ is the true prior
of the tasks and satisfies $P_{\theta_*}(\diam(\mu)\leq B)=1$, where $\diam(\mu):=\sup_{a\in\A}\mu(a)-\inf_{a\in\A}\mu(a)$. Let $\theta$ be a prior parameter, such that $\tv(P_{\theta_*}\parr P_{\theta})=\epsilon$. Also, let $\Ahat_{\alg(\theta_*)}$ and $\Ahat_{\alg(\theta)}$ be the arms returned by $\alg(\theta_*)$ and $\alg(\theta)$, respectively. Then we have
\begin{align}
\E_{\mu_\sim P_{\theta_*}}\E\big[\mu(\Ahat_{\alg(\theta_*)})-\mu(\Ahat_{\alg(\theta)})\big]\leq 2n\epsilon B.
\label{eq:BB}
\end{align}
Moreover, if the prior is coordinate-wise $\sigma_0^2$-sub-Gaussian (\cref{defi:Tails} in \cref{sec:frqPfs}), then we may write the RHS of \cref{eq:BB} as
$
2n\epsilon 
\Big(\diam\big(\E_{\theta_*}[\mu]\big)+
\sigma_0\Big(8+5\sqrt{\log\frac{|\A|}{\min(1,2n \epsilon)}}\Big)\Big),
$
where $\E_{\theta_*}[\mu]$ is the expectation of the mean reward of the arms, $\mu$, given the true prior $\theta_*$.%
\end{restatable}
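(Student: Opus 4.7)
The plan is to reduce the simple-regret gap to a total-variation bound on the algorithms' trajectory distributions and show this TV is at most $2n\epsilon$ via a chain-rule decomposition together with a key averaging identity. First, conditioning on the task mean $\mu$, the gap $\E[\mu(\Ahat_{\alg(\theta_*)})-\mu(\Ahat_{\alg(\theta)})\mid\mu]=\sum_{a\in\A}\mu(a)\bigl(\P(\Ahat_{\alg(\theta_*)}=a\mid\mu)-\P(\Ahat_{\alg(\theta)}=a\mid\mu)\bigr)$ is a linear functional of a signed measure of zero total mass, and is therefore bounded by $\diam(\mu)\cdot\tv(\P(\Ahat_{\alg(\theta_*)}\mid\mu)\parr\P(\Ahat_{\alg(\theta)}\mid\mu))$. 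Applying $\diam(\mu)\le B$ almost surely under $P_{\theta_*}$, along with the data-processing inequality (since $\Ahat$ is a function of the trajectory $\tau$), the left-hand side is at most $B\cdot\E_{\mu\sim P_{\theta_*}}\bigl[\tv(\P_{\alg(\theta_*)}(\tau\mid\mu)\parr\P_{\alg(\theta)}(\tau\mid\mu))\bigr]$.

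Next, I would bound the trajectory TV. Given $\mu$, both laws share the reward kernel $\nu(\cdot\mid a,\mu)$ and differ only in the arm-selection probabilities, which depend on the algorithm's posterior and not on $\mu$ directly. The standard chain rule for TV on sequential distributions, combined with data processing on the argmax, reduces the trajectory TV to $\sum_{t=1}^{n}\E_{\tau_{1:t-1}}\bigl[\tv(P_{\theta_*}(\mu\mid\tau_{1:t-1})\parr P_\theta(\mu\mid\tau_{1:t-1}))\bigr]$, with the outer expectation under the oracle marginal on histories. The key step is to show this expected posterior TV is at most $2\epsilon$ per round. Writing $P_{\theta'}(\mu\mid\tau)\propto\ca{L}(\mu;\tau)P_{\theta'}(\mu)$ with $Z_{\theta'}(\tau)=\int\ca{L}(\mu;\tau)P_{\theta'}(\mu)\,d\mu$ and using the pointwise bound $|Z_\theta P_{\theta_*}(\mu)-Z_*P_\theta(\mu)|\le Z_\theta|P_{\theta_*}(\mu)-P_\theta(\mu)|+P_\theta(\mu)|Z_*-Z_\theta|$, the per-round posterior TV is at most $A(\tau)/Z_*(\tau)$, where $A(\tau)=\int\ca{L}(\mu;\tau)|P_{\theta_*}(\mu)-P_\theta(\mu)|\,d\mu$. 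Averaging against the oracle trajectory density $\P_{\alg(\theta_*)}(\tau)=(\text{arm factors})\cdot Z_*(\tau)$ cancels the normalizer $Z_*$; Fubini together with the identity $\int\P_{\alg(\theta_*)}(\tau\mid\mu)\,d\tau=1$ then collapses the expected TV to $\int|P_{\theta_*}(\mu)-P_\theta(\mu)|\,d\mu=2\epsilon$. Summing over the $n$ rounds and multiplying by $B$ gives $2n\epsilon B$.

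For the sub-Gaussian refinement, I would truncate $\mu$ to the event $E=\{\max_{a\in\A}|\mu(a)-\E_{\theta_*}[\mu(a)]|\le\sigma_0\sqrt{c\log(|\A|/\delta)}\}$ with $\delta=\min(1,2n\epsilon)$ and a suitable absolute constant $c$. On $E$ the diameter is at most $\diam(\E_{\theta_*}[\mu])+O(\sigma_0\sqrt{\log(|\A|/\delta)})$ and the bounded-diameter bound applies with this effective $B$, while on $E^c$ a standard sub-Gaussian maximal inequality over the $|\A|$ arms controls the tail contribution; tuning $c$ and $\delta$ recovers the specific constants $8$ and $5$ in the statement. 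The main obstacle is the expected-posterior-TV step: at a fixed trajectory the posterior TV can be far larger than the prior TV $\epsilon$ (for instance when the likelihood concentrates on the region where the two priors disagree), so the $2\epsilon$ bound must emerge from averaging under the oracle trajectory law, where the exact cancellation between $Z_*(\tau)$ and the oracle density is what drives the identity.
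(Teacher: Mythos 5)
Your proposal is correct in substance, but it follows a genuinely different route from the paper. The paper proves the theorem in two modular steps imported from \citet{simchowitz2021bayesian}: first, a coupling argument (\cref{lem:CoupleTransport}, used in \cref{lem:MR}) bounds the recommendation gap by $\delta\,\Psi_{\theta_*}(\delta)$, where $\delta$ is the TV distance between the joint laws of $(\mu,\tau_n)$ under the oracle and the misspecified algorithm and $\Psi_{\theta_*}$ is the upper-tail-bound functional of \cref{defi:UTB}; second, it cites \cref{prop:TV} to get $\delta\le 2\gamma n\epsilon$ and then invokes monotonicity of $p\mapsto p\Psi_{\theta_*}(p)$ together with \cref{lem:UTBconds} to obtain both the $B$-bounded and sub-Gaussian forms. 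You instead condition on $\mu$, bound the gap by $\diam(\mu)$ times the conditional trajectory TV via data processing, and then re-derive the trajectory TV bound from scratch through a hybrid/chain-rule decomposition plus the prior-to-posterior averaging identity (the cancellation of $Z_*(\tau)$ against the oracle trajectory density), which is essentially the content of Proposition 3.4 of \citet{simchowitz2021bayesian} that the paper uses as a black box; you then replace the $\Psi$ machinery by a truncation argument in the sub-Gaussian case. Your route is more self-contained and makes transparent where the $2n\epsilon$ factor comes from, and your averaging identity is sound (the per-round expected posterior TV under the oracle history law is indeed at most $2\epsilon$, and your bound $A(\tau)/Z_*(\tau)$ on the posterior TV is correct with the half-$L_1$ convention). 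What the paper's route buys is generality and exact constants: it covers arbitrary $\gamma$-Monte Carlo base algorithms (not just posterior-sampling selection) and sub-Gamma priors with no extra work, and the constants $8$ and $5$ fall out of \cref{lem:UTBconds} directly, whereas in your truncation argument they are asserted to emerge from tuning $c$ and $\delta$ rather than derived; that last step would need to be carried out explicitly (or the bound stated with unspecified absolute constants) for a complete proof of the second claim.
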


The proof in \cref{sec:frqPfs} uses the fact that TS is a $1$-Monte Carlo algorithm, as defined by~\citet{simchowitz2021bayesian}. It builds on \citet{simchowitz2021bayesian} analysis of the cumulative regret, and extends it to simple regret. We again use our reduction in \cref{sec:Cum2Sim}, which shows how it can be applied to a frequentist setting.

\cref{thm:frqMSR} shows that an $\epsilon$ prior misspecification leads to $O(n \epsilon)$ simple regret cost in \fMetaSRM. The constant terms in the bounds depend on the prior distribution. In particular, for a bounded prior, they reflect the variability (diameter) of the expected mean reward of the arms. Moreover, under a sub-Gaussian prior, the bound depends logarithmically on the number of arms $|\A|$ and sub-linearly on the prior variance proxy $\sigma_0^2$.

Next, we bound the frequentist meta simple regret (Eq.~\ref{eq:frequentist-meta-SR}) of \fMetaSRM.

\begin{restatable}[Meta Simple Regret of \fMetaSRM]{cor}{corfrqSR}\label{cor:frqSR}
Let the \expl strategy in \cref{alg:f-MBAI} be such that $\epsilon_s=\tv(P_{\theta_*}\parr P_{\hat{\theta}_s})=O(1/\sqrt{s})$ for each task $s\in[m]$. Then the frequentist meta simple regret of \fMetaSRM is bounded as %
\begin{align}
\label{eq:fMeta-SRM-bound}
\SR(m,n,P_{\theta_*})=O\left(2\sqrt{m} nB + m\sqrt{|\A|/n}\right).
\end{align}
\end{restatable}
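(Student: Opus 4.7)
The plan is to decompose the per-task simple regret into two pieces: an \emph{oracle} piece, which is the simple regret of running $\alg(\theta_*)$ when the true prior is $P_{\theta_*}$, and a \emph{prior-estimation} piece, which captures the loss incurred by running $\alg(\hat{\theta}_s)$ instead of $\alg(\theta_*)$. Concretely, conditioning on the history $\mathcal{F}_{s-1}$ of samples from tasks $1,\dots,s-1$ (so that $\hat{\theta}_s$ is $\mathcal{F}_{s-1}$-measurable), I would write
\begin{align*}
\SR_s(n, P_{\theta_*})
&= \E_{\mu_s \sim P_{\theta_*}}\E\bigl[\mu_s(A^*_s)-\mu_s(\Ahat_{\alg(\theta_*)})\bigr] \\
&\quad + \E_{\mu_s \sim P_{\theta_*}}\E\bigl[\mu_s(\Ahat_{\alg(\theta_*)})-\mu_s(\Ahat_{\alg(\hat{\theta}_s)})\bigr].
\end{align*}
The second term is exactly what \cref{thm:frqMSR} controls with $\theta=\hat{\theta}_s$ and $\epsilon=\epsilon_s$, yielding a bound of $2n\epsilon_s B$ once taken in expectation over $\mathcal{F}_{s-1}$ (using $\epsilon_s=O(1/\sqrt s)$ either almost surely or in expectation, as guaranteed by the \expl strategy).

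For the first term, I would invoke the cumulative-to-simple-regret reduction from \cref{sec:Cum2Sim} (together with the recommendation rule~\eqref{eq:best-arm-BMetaSRM}), which shows that the per-task simple regret is essentially the per-task Bayesian cumulative regret divided by $n$. Since $\alg(\theta_*)$ is Thompson sampling or \ucb under the \emph{correct} prior $P_{\theta_*}$, standard Bayesian regret bounds give a cumulative regret of $\tilde{O}(\sqrt{|\A| n})$ for a $|\A|$-armed bandit, hence an oracle simple regret of order $\sqrt{|\A|/n}$ (up to logarithmic factors absorbed in the big-$O$).

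Summing over $s\in[m]$ then yields
\begin{align*}
\SR(m,n,P_{\theta_*})
&\leq \sum_{s=1}^m O\!\bigl(\sqrt{|\A|/n}\bigr) + \sum_{s=1}^m 2 n B \, \E[\epsilon_s]\\
&= O\!\bigl(m\sqrt{|\A|/n}\bigr) + O\!\Bigl(n B\sum_{s=1}^m 1/\sqrt s\Bigr)\\
&= O\!\bigl(\sqrt{m}\,nB + m\sqrt{|\A|/n}\bigr),
\end{align*}
using $\sum_{s=1}^m s^{-1/2}=O(\sqrt m)$. This matches the claimed bound.

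The main obstacle will be the measurability bookkeeping around $\hat{\theta}_s$: \cref{thm:frqMSR} is stated for a \emph{deterministic} surrogate prior $\theta$, so one must condition on $\mathcal{F}_{s-1}$ before applying it, and then use the hypothesis $\epsilon_s=O(1/\sqrt s)$ to push the resulting bound through the outer expectation. A minor secondary issue is clearly attributing the oracle term to the cumulative-to-simple reduction in \cref{sec:Cum2Sim}, since the reduction applies pointwise to each task under the true prior. Once these two points are set up, the remaining computation is just the telescoping sum above.
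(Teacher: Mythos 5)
Your proposal is correct and follows essentially the same route as the paper's own proof: the same oracle-versus-estimated-prior decomposition of the per-task simple regret, with the relative term bounded by \cref{thm:frqMSR} and summed via $\sum_{s=1}^m s^{-1/2}=O(\sqrt{m})$, and the oracle term bounded through the cumulative-to-simple reduction of \cref{prop:cum2sim} together with the $O(\sqrt{n|\A|})$ cumulative regret of Thompson sampling. Your extra care about conditioning on the history before applying \cref{thm:frqMSR} to the data-dependent $\hat{\theta}_s$ is a reasonable refinement that the paper glosses over, but it does not change the argument.
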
 

The proof is in \cref{sec:frqPfs} and decomposes the frequentist meta simple regret into two terms: (i) the per-task simple regret of $\alg(\hat{\theta}_s)$ relative to oracle $\alg(\theta_*)$ in task $s$, which we bound in \cref{thm:frqMSR}, and (ii) the meta simple regret of the oracle $\alg(\theta_*)$, which we bound using our cumulative regret to simple regret reduction (\cref{sec:Cum2Sim}).

The $O(\sqrt{m}n)$ term is the price of estimating the prior parameter, because it is the per-task simple regret relative to the oracle. The $O(m\sqrt{|\A|/n})$ term is the meta simple regret of the oracle over $m$ tasks.

Comparing to our bound in \cref{lem:infoBd}, \bMetaSRM has a lower regret of $O(\sqrt{m/n}+m/\sqrt{n})=O(m/\sqrt{n})$. 
More precisely, only the price for learning the prior is different as both bounds have $O(m / \sqrt{n})$ terms. Note that despite its smaller regret bound, \bMetaSRM may not be computationally feasible for arbitrary distributions and priors, while \fMetaSRM is since it directly estimates the prior parameter using frequentist techniques.

\subsection{Lower Bound}\label{sec:fMBAI-LB}

In this section, we prove a lower bound on the relative per-task simple regret of a $\gamma$-shot TS algorithm, i.e., a TS algorithm that takes $\gamma\in\mathbb{N}$ samples (instead of $1$) from the posterior in each round. This lower bound compliments our upper bound in \cref{thm:frqMSR} and shows that \cref{eq:BB} is near-optimal. The proof of our lower bound builds on a cumulative regret lower bound in Theorem 3.3 of \citet{simchowitz2021bayesian} and extends it to simple regret. We present the proof in \cref{app:LB}.

\begin{theorem}[Lower Bound]
\label{thm:lower bound}
Let $\TS_{\gamma}(\theta)$ be a $\gamma$-shot TS algorithm instantiated with the prior parameter $\theta$. Also let $P_\theta$ and $P_{\theta'}$ be two task priors. Let $\mu \in [0, 1]^{\A}$ and fix a tolerance $\eta\in(0,\frac{1}{4})$. Then there exists a universal constant $c_0$ such that for any horizon $n \geq \frac{c_0}{\eta}$, number of arms $|\A| = n\lceil\frac{c_0}{\eta}\rceil$, and error $\epsilon\leq \frac{\eta}{c_0\gamma n}$, we have $\tv(P_{\theta}\parr P_{\theta'}) =\epsilon$ and the difference of per-task simple regret of $\TS_{\gamma}(\theta)$ and $\TS_{\gamma}(\theta')$ satisfies $\E[\mu(\Ahat_{\TS_{\gamma}(\theta)})] - \E[\mu(\Ahat_{\TS_{\gamma}(\theta')})] \geq (\frac{1}{2}-\eta) \gamma n\epsilon$.
\end{theorem}
This lower bound holds for any setting with large enough $n$ and $|\A|=O(n^2)$, and a small prior misspecification error $\epsilon=O(1/n^2)$. This makes it relatively general.

\section{Meta-Learning Examples}\label{sec:examples}

In this section, we apply our algorithms to specific priors and reward distributions. The main two are the Bernoulli and linear (contextual) Gaussian bandits. We analyze \fMetaSRM in an \emph{explore-then-commit} fashion, where \fMetaSRM estimates the prior using \expl in the first $m_0$ tasks and then commits to it. This is without loss of generality and only for simplicity.

\subsection{Bernoulli Bandits}\label{sec:Examp-Bern}

We start with a Bernoulli multi-armed bandit (MAB) problem, as TS was first analyzed in this setting \citep{agrawal2012analysis}. Consider Bernoulli rewards with beta priors for $\A=[K]$ arms. In particular, assume that the prior is $P_{*}=\bigotimes_{a\in\A}\text{Beta}(\alpha^*_a,\beta^*_a)$. 
Therefore, $\alpha^*_a$ and $\beta^*_a$ are the prior parameters of arm $a$ and the arm mean $\mu_s(a)$ is the probability of getting reward 1 for arm $a$ when it is pulled. $\text{Beta}(\alpha,\beta)$ is the beta distribution 
with a support on $(0, 1)$ with parameters $\alpha>0$ and $\beta>0$. 

\bMetaSRM in this setting does not have a computationally tractable meta-prior \citep{basu2021no}. We can address this in practice by discretization and using TS as described in Section 3.4 of \citet{basu2021no}. However, the theoretical analysis for this case does not exist. This is because a computationally tractable prior for a product of beta distributions does not exist. 
It is challenging to generalize our Bayesian approach to this class of distributions as we require more than the standard notion of conjugacy. 

In the contrary, \fMetaSRM directly estimates the beta prior parameters, $(\alpha^*_a)_{a \in \A}$ and $(\beta^*_a)_{a \in \A}$ based on the observed Bernoulli rewards as follows. The algorithm explores only in $m_0\leq m$ tasks. \expl samples arm 1 in the first $t_0$ rounds of first $m_0/K$ tasks, and arm 2 in the next $m_0/K$ tasks similarly, and so on for arm 3 to $K$. In other words, \expl samples arm $a\in[K]$ in the first $t_0$ rounds of $a$'th batch of size $m_0/K$ tasks. Let $X_s$ denote the cumulative reward collected in the first $t_0$ rounds of task $s$. Then, the random variables $X_1, \ldots , X_{m_0/K}$ are i.i.d.\ draws from a Beta-Binomial distribution (BBD) with parameters $(\alpha^*_1, \beta^*_1, t_0)$, where $t_0$ denotes the number of trials of the binomial component. Similarly, $X_{(m_0/K)+1},\cdots,X_{2m_0/K}$ are i.i.d.\ draws from a BBD with parameters $(\alpha^*_2, \beta^*_2, t_0)$. In general, $X_{(a-1)(m_0/K)+1},\cdots,X_{a m_0/K}$ are i.i.d.\ draws from a BBD with parameters $(\alpha^*_a, \beta^*_a, t_0)$. Knowing this, it is easy to calculate the prior parameters for each arm using the method of moments \citep{tripathi1994estimation}. The detailed calculations are in \cref{app:MoM-Ber}. We prove the following result in \cref{sec:freBern-pfs}. 

\begin{restatable}[Frequentist Meta Simple Regret, Bernoulli]{cor}{corBernSR}
\label{cor:frqSR-Bern}
Let $\alg$ be a TS algorithm that uses the method of moments described and detailed in \cref{app:MoM-Ber}, to estimate the prior parameters with $\MBern$ exploration tasks (explore-then-commit). Then the frequentist meta simple regret of \fMetaSRM satisfies $\SR(m,n,P_{\theta_*})=O\big(2 m n \epsilon+ m\sqrt{\frac{ |\A|\log(n)}{n}} + m_0 \big),$ for $m\geq m_0$ with probability at least $1-\delta$.
\end{restatable}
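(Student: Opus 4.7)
The plan is to combine an explore-then-commit argument with \cref{thm:frqMSR}. The first $m_0$ tasks are devoted entirely to \expl; since every Bernoulli mean lies in $[0,1]$, the per-task simple regret on each exploration task is trivially at most one, contributing $\le m_0$ to the total. For the remaining $m - m_0$ committed tasks, the algorithm runs $\Alg_s = \alg(\hat{\theta}_{m_0})$, and it suffices to establish that, with probability at least $1-\delta$, $\tv(P_{\theta_*} \parr P_{\hat{\theta}_{m_0}}) \leq \epsilon$ provided $m_0 \geq C |\A|^2 \log(|\A|/\delta)/\epsilon^2$.

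To establish this TV bound I would argue in three steps. First, for each arm $a$ the batch of $m_0/K$ observations produces i.i.d.\ Beta--Binomial variables $X_s \in [0, t_0]$, so Hoeffding's inequality combined with a union bound over the two empirical moments and the $K$ arms yields $|\widehat{m}_j - m_j| = O(t_0^{j}\sqrt{K \log(K/\delta)/m_0})$ uniformly. Second, inverting the moment equations \eqref{eq:BernEst} expresses $(\alpha_a, \beta_a)$ as a smooth rational function of the first two moments, and on any compact region of parameter space bounded away from degeneracy this inversion is locally Lipschitz, so the parameter error inherits the concentration rate. Third, on the same compact region, the explicit KL divergence between two Beta densities is smooth in the parameters, so Pinsker's inequality yields a per-arm TV bound linear in the parameter difference; sub-additivity of TV on product measures, $\tv(\otimes_a P_a \parr \otimes_a P'_a) \le \sum_a \tv(P_a \parr P'_a)$, then contributes a factor of $|\A|$. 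Combining and solving for $m_0$ produces the stated exploration budget.

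Conditioning on the high-probability event $\tv(P_{\theta_*} \parr P_{\hat{\theta}_{m_0}}) \leq \epsilon$, \cref{thm:frqMSR} with $B = 1$ bounds the per-task simple regret of $\alg(\hat{\theta}_{m_0})$ relative to the oracle $\alg(\theta_*)$ by $2 n \epsilon$ for every $s > m_0$. The oracle's own per-task simple regret is handled by the cumulative-to-simple reduction of \cref{sec:Cum2Sim}: standard Bayesian TS analysis with a Beta prior delivers cumulative regret $O(\sqrt{|\A| n \log n})$ on each task, and the reduction divides by $n$ to give $O(\sqrt{|\A| \log(n)/n})$. Summing the $m_0$ exploration cost, the $(m - m_0) \cdot 2 n \epsilon$ prior-estimation cost, and the $(m - m_0) \cdot O(\sqrt{|\A| \log(n)/n})$ oracle cost produces the bound in the corollary.

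The principal obstacle is the simultaneous control of the moment-to-parameter map and the parameter-to-TV map: both the rational inversion of \eqref{eq:BernEst} and the TV distance between Beta densities fail to be globally Lipschitz in $(\alpha,\beta)$, so the argument must be localized to a compact subset of $(\mathbb{R}^+)^2$ on which the true prior parameters are assumed to live, with the resulting Lipschitz constants absorbed into $C$. A secondary nuisance is the requirement $t_0 \geq 2$ noted after \eqref{eq:BernEst}, which affects only absolute constants and is accommodated by choosing $t_0$ to be a small fixed integer.
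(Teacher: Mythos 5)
Your regret decomposition is exactly the paper's: the $m_0$ exploration tasks cost at most $m_0$ because the Bernoulli means are $1$-bounded; for the committed tasks you split the simple regret into the oracle's part, handled by \cref{prop:cum2sim} together with a problem-independent cumulative regret bound for TS to give $O(m\sqrt{|\A|\log(n)/n})$, and the regret relative to the oracle, handled by \cref{thm:frqMSR} with $B=1$ to give $2mn\epsilon$ on the event $\tv(P_{\theta_*}\parr P_{\hat{\theta}_*})\le\epsilon$. The only place you deviate is that TV guarantee itself: the paper does not prove it, it simply invokes Theorem 4.1 of \citet{simchowitz2021bayesian} to conclude $\tv(P_{\theta_*}\parr P_{\hat{\theta}_*})\le\epsilon$ under $\MBern$.

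Your self-contained replacement for that citation has a quantitative gap. With $m_0/|\A|$ tasks per arm, Hoeffding gives per-arm moment (hence, after the local inversion, parameter) error of order $\sqrt{|\A|\log(|\A|/\delta)/m_0}$, and applying Pinsker arm by arm converts this into a per-arm TV of the same order; summing over arms via TV sub-additivity of the product then yields $\tv = O\big(|\A|^{3/2}\sqrt{\log(|\A|/\delta)/m_0}\big)$, which forces $m_0 \gtrsim |\A|^{3}\log(|\A|/\delta)/\epsilon^2$ --- a factor $|\A|$ worse than the exploration budget assumed in the corollary, so your chain of bounds does not establish the statement as written. The repair is to tensorize at the KL level rather than the TV level: the KL divergence of the product prior is the sum of the per-arm KLs, each quadratic in the per-arm parameter error, so a single application of Pinsker at the product level gives $\tv = O\big(|\A|\sqrt{\log(|\A|/\delta)/m_0}\big)$ and recovers the stated $|\A|^2$ budget (up to the localization and Lipschitz caveats you already flag, which are likewise absorbed into the constant $C$ in the cited result).
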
 
With small enough $\epsilon$, the bound shows $\Tilde{O}(m/\sqrt{|\A|/n})$ scaling which we conjecture is the best an oracle that knows the correct prior of each task could do in expectation.
The bound seems to be only sublinear in $n$ if $\epsilon = O(1 / n^{3 / 2})$.
However, since $\epsilon\propto m_0^{-1/2}$ and we know $\sum_{z=1}^m z^{-1/2}=m^{1/2}$, if the exploration continues in all tasks, the regret bound above simplifies to $O\Big( \sqrt{m} n+ m\sqrt{\frac{ |\A|\log(n)}{n}} \Big)$.

\subsection{Linear Gaussian Bandits}\label{sec:LinGaus}

In this section, we consider linear contextual bandits. 
Suppose that each arm $a\in\A$ is a vector in $\mathbb{R}^d$ and $|\A|=K$. Also, assume $\nu_s(a;\mu_s)=\Ngaus(a\T\mu_s,\sigma^2)$, i.e., with a little abuse of notation $\mu_s(a)=a\T\mu_s$, where $\mu_s$ is the parameter of our linear model. A conjugate prior for this problem class is $P_* = \Ngaus(\theta_*, \Sigma_0)$, where $\Sigma_0\in \mathbb{R}^{d\times d}$ is known and we learn $\theta_*\in \mathbb{R}^d$.

In the Bayesian setting, we assume that the meta-prior is $Q=\Ngaus(\psi_q,\Sigma_q)$, where $\psi_q\in\mathbb{R}^d$ and $\Sigma_q\in\mathbb{R}^{d\times d}$ are both known. In this case, the meta-posterior is $Q_s=\Ngaus(\hat{\theta}_s,\hat{\Sigma}_s)$, where $\hat{\theta}_s\in\R^d$ and $\hat{\Sigma}_s\in\R^{d\times d}$ are calculated as
\begin{align*}
  \hat{\theta}_s
  & = \hat{\Sigma}_s \Big(\Sigma_q^{-1} \psi_q +
  \sum_{\ell = 1}^{s - 1} \frac{B_\ell}{\sigma^2} - \frac{V_\ell}{\sigma^2}
  \Big(\Sigma_0^{-1} + \frac{V_\ell}{\sigma^2}\Big)^{-1} \frac{B_\ell}{\sigma^2}\Big), \\
  \hat{\Sigma}_s^{-1}
  & = \Sigma_q^{-1} +
  \sum_{\ell = 1}^{s - 1} \frac{V_\ell}{\sigma^2} - \frac{V_\ell}{\sigma^2}
  \Big(\Sigma_0^{-1} + \frac{V_\ell}{\sigma^2}\Big)^{-1} \frac{V_\ell}{\sigma^2},
\end{align*}
where $V_\ell = \sum_{t = 1}^n A_{\ell, t} A_{\ell, t}\T$ is the outer product of the feature vectors of the pulled arms in task $\ell$ and $B_\ell = \sum_{t = 1}^n A_{\ell, t} Y_{\ell, t}(A_{\ell, t})$ is their sum weighted by their rewards (see Lemma 7 of \citet{kveton2021metathompson} for more details).
By \cref{prop:posterior}, we can calculate the task prior for task $s$ as $P_s = \Ngaus( \hat{\theta}_s, \hat{\Sigma}_s + \Sigma_0)$. When $K = d$ and $\A$ is the standard Euclidean basis of $\R^d$, the linear bandit reduces to a $K$-armed bandit.

Assuming that $\max_{a\in\A}\vnormf{a}\leq 1$ by a scaling argument, the following result holds by an application of our reduction in \cref{sec:Cum2Sim}, 
and we prove it in \cref{app:LinBan-Bay-pfs}. For a matrix $A\in\R^{d\times d}$, let $\lambda_{1}(A)$ denote its largest eigenvalue.

\begin{restatable}[Bayesian Meta Simple Regret, Linear Bandits]{cor}{corLinBanBayes}
\label{cor:linBan-Bayes}
    For any $\delta\in(0,1]$, the Bayesian meta simple regret of \bMetaSRM in the setting of \cref{sec:LinGaus} with TS \alg is bounded as
    $
       \BSR(m,n)\leq c_1\sqrt{d m /n}+ (m+c_2)\SR_{\delta}(n) + c_3 d m/n,
     $
    where $c_1=O(\sqrt{\log(K/\delta)\log m})$, $c_2=O(\log m)$, and $c_3$ is a constant in $m$ and $n$. Also $\SR_{\delta}(n)$ is the per-task simple regret bounded as $\SR_{\delta}(n)\leq c_4\sqrt{\tfrac{d}{n}}+\sqrt{2\delta \lambda_1(\Sigma_0)}$, where $c_4=O\Big(\sqrt{\log(\tfrac{K}{\delta})\log n}\Big)$.
\end{restatable}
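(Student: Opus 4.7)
The plan is to apply the generic information-theoretic Bayesian bound of \cref{lem:infoBd} to the linear Gaussian setting, controlling each of its three summands by exploiting the closed-form conjugacy highlighted in \cref{sec:LinGaus}. First, I would identify the random quantities $\Gamma_{s,t}$ and $\beta_{s,t}$ satisfying \eqref{eq:info} for Thompson sampling as \alg. Following the machinery of \citet{Lu-InfoBds-19} adapted to the meta setting in \citet{basu2021no}, one chooses a union-bound based confidence ellipsoid at level $\delta$ over the $K$ arms and sets $\Gamma_{s,t} = O(\sqrt{\log(K/\delta)})$ on the good event, while $\beta_{s,t}$ is the deviation on the complementary event and ultimately contributes the residual $c_3\, dm/n$ together with a $\sqrt{2\delta\lambda_1(\Sigma_0)}$ correction inside $\SR_\delta(n)$.

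Next, I would bound the two mutual-information quantities appearing in \eqref{eq:Bayes-meta-SR-bound}. Conditionally on $\theta_*$, task $s$ is an ordinary Gaussian linear bandit with prior $\Ngaus(\theta_*,\Sigma_0)$, so $\Info(\mu_s;\tau_s\mid\theta_*,\tau_{1:s-1}) = \tfrac{1}{2}\log\det(I+\sigma^{-2}\Sigma_0 V_s)$, which under $\vnormf{a}\leq 1$ and the elliptical-potential inequality yields an $O(d\log n)$ bound. A parallel argument for the marginal information about $\theta_*$, using the aggregated update of $\hat{\Sigma}_m^{-1}$ in \cref{sec:LinGaus}, yields $\Info(\theta_*;\tau_{1:m}) = O(d\log m)$. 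Substituting into \eqref{eq:Bayes-meta-SR-bound} produces the first summand $c_1\sqrt{dm/n}$ with $c_1 = O(\sqrt{\log(K/\delta)\log m})$.

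To obtain the middle term $(m+c_2)\SR_\delta(n)$, I would package the per-task contribution $\Gamma_s\sqrt{\Info(\mu_s;\tau_s\mid\theta_*,\tau_{1:s-1})/n}$ via the reduction of \cref{sec:Cum2Sim} into the per-task simple regret $\SR_\delta(n) = O(c_4\sqrt{d/n}+\sqrt{2\delta\lambda_1(\Sigma_0)})$; the additive $\sqrt{2\delta\lambda_1(\Sigma_0)}$ piece captures the event that the Gaussian posterior sample falls outside the $\delta$-confidence ellipsoid, on which the reward gap is bounded via $\max_a \vnormf{a}\leq 1$ and $\Sigma_0\preceq \lambda_1(\Sigma_0) I$. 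Summation across $s=1,\dots,m$ delivers $m\cdot\SR_\delta(n)$, and the additive $c_2 = O(\log m)$ arises because $\Gamma_s$ grows logarithmically with the cumulative information harvested by the meta posterior across prior tasks, picked up when invoking Cauchy–Schwarz across tasks.

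The main obstacle will be keeping the constants consistent across the three summands. In particular, one must verify that a single union bound at level $\delta$ used to define $\Gamma_{s,t}$ and $\beta_{s,t}$ yields simultaneously the $\sqrt{\log(K/\delta)}$ factor inside $c_1$ and $c_4$, the $\sqrt{\log m}$ factor inside $c_1$, and the $d/n$ rate inside the $c_3\, dm/n$ residual, rather than doubly charging for the same failure event. A secondary subtlety is the split of the per-task bound into the oracle-cumulative-regret piece and the posterior-tail piece inside $\SR_\delta(n)$: one must ensure that the reduction from cumulative to simple regret in \cref{sec:Cum2Sim} does not introduce a spurious dependence on $K$ beyond the logarithmic confidence factor already absorbed into $c_4$.
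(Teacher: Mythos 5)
Your approach is sound, but it is not the route the paper actually takes for this corollary: the paper's proof is a one-liner that applies the cumulative-to-simple-regret reduction of \cref{prop:cum2sim} to the already-established cumulative meta-regret bound for Thompson sampling in linear Gaussian bandits (Theorem 5 of \citealt{basu2021no}), i.e., it simply divides that bound by $n$, inheriting $c_1,c_2,c_3$ and $\SR_\delta(n)$ wholesale. What you propose — instantiating the generic bound of \cref{lem:infoBd} with explicit $\Gamma_{s,t}$, $\beta_{s,t}$ from \citet{Lu-InfoBds-19} and bounding $\Info(\theta_*;\tau_{1:m})=O(d\log m)$ and $\Info(\mu_s;\tau_s\mid\theta_*,\tau_{1:s-1})=O(d\log n)$ — is exactly the alternative the paper mentions in passing and carries out in full only for the BayesUCB base algorithm (\cref{thm:linearinfoUCB} in \cref{sec:ucb-proofs}); in effect you are re-deriving the cited theorem rather than citing it. Your route buys a self-contained argument and makes transparent where each constant comes from, at the cost of having to verify the bookkeeping that the citation outsources; the paper's route buys brevity. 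One detail to fix if you carry this out: $c_2$ does not arise because $\Gamma_s$ \emph{grows} with accumulated information — on the contrary, $\Gamma_s$ exceeds its limiting value by an amount decaying in $s$ (via the $\sigma^2_{\max}(\hat\Sigma_{s,t})$ shrinkage of the meta-posterior), and $c_2$ is the sum of these excesses across tasks, which is how the $O(\log m)$ (for TS) emerges; the $\sqrt{2\delta\lambda_1(\Sigma_0)}$ and $c_3$ pieces do indeed come from the $\beta_{s,t}$/bad-event terms as you say.
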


The first term in the regret is $\tilde{O}(\sqrt{d m/n})$ and represents the price of learning $\theta_*$. The second term is the simple regret of $m$ tasks when $\theta_*$ is known and is $\tilde{O}(m\sqrt{d/ n})$. The last term is the price of the forced exploration and is negligible, $\tilde{O}(m/n)$. Comparing to the analysis in \citet{basu2021no}, we prove a similar bound for \bMetaSRM with \ucb base algorithm in \cref{sec:ucb-proofs}.

In the frequentist setting, we simplify the setting to $P_*= \Ngaus(\theta_*, \sigma_0^2 I_d)$. 
The case of general covariance matrix for the MAB Gaussian is dealt with in \citet{simchowitz2021bayesian}. We extend the results of \citet{simchowitz2021bayesian} for meta-learning to linear bandits.
Our estimator of $\theta_*$, namely $\hat{\theta}_s$, is such that $\tv\left(P_{\theta_s}\parr P_{\hat{\theta}_*}\right)$ is bounded based on all the observations up to task $s$. We show that for any $\epsilon, \delta \in (0, 1)$, with probability at least $1 - \delta$ over the realizations of the tasks and internal
randomization of the meta-learner, $\hat{\theta}_*$ is close to $\theta_*$ in TV distance.

The key idea of the analysis is bounding the regret relative to an oracle. We use \cref{thm:frqMSR} to bound the regret of \fMetaSRM relative to an oracle $\alg(\theta_*)$ which knows the correct prior. Our analysis and estimator also apply to sub-Gaussian distributions, but we stick to linear Gaussian bandits for readability. Without loss of generality, let $a_1,\ldots,a_d$ be a basis for $\A$ such that $\Spn(\{a_1,\ldots,a_d\}) = \R^d$. Resembling \cref{sec:Examp-Bern}, we only need to explore the basis. The exploration strategy, \expl in \cref{alg:f-MBAI}, samples the basis $a_1, \ldots, a_d$ in the first $m_0\leq m$ tasks. Then the least-squares estimate of $\theta_*$ is
\begin{align}
    \hatTheta\;, \label{eq:LinGaus-theta-hat}
\end{align}
where $\VmDef$ is the outer product of the basis. This gives an unbiased estimate of $\theta_*$. Then we can guarantee the performance of \expl as follows.
\begin{restatable}[Linear Bandits Frequentist Estimator]{theorem}{thmlinBanf}
\label{thm:linBan-f-est}
    In the setting of \cref{sec:LinGaus}, for any $\epsilon$ and $\delta\in(2e^{-d},1)$, if $n\geq d$ and
    $
        \MLin
        ,
    $
then $\tv(P_{\theta_*}\parr P_{\hat{\theta}_*})\leq \epsilon$ with probability at least $1-\delta$.
\end{restatable}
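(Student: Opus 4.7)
The plan is to reduce the TV bound to a concentration inequality on the estimation error $\|\hat{\theta}_* - \theta_*\|$ and then exploit its Gaussianity. Since $P_{\theta_*}$ and $P_{\hat{\theta}_*}$ are both $\Ngaus(\cdot,\sigma_0^2 I_d)$ and differ only in their mean, the closed-form KL for equal-covariance Gaussians together with Pinsker's inequality gives
\begin{align*}
\tv(P_{\theta_*} \parr P_{\hat{\theta}_*}) \leq \frac{1}{2\sigma_0}\|\hat{\theta}_* - \theta_*\|.
\end{align*}
Thus it suffices to show that $\|\hat{\theta}_* - \theta_*\| \leq 2\sigma_0\epsilon$ holds with probability at least $1-\delta$.

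Second, I would compute $\hat{\theta}_* - \theta_*$ explicitly. Writing $y_{s,i} = a_i^\top\mu_s + \eta_{s,i}$ with $\mu_s = \theta_* + \xi_s$, $\xi_s \sim \Ngaus(0,\sigma_0^2 I_d)$, and $\eta_{s,i}\sim \Ngaus(0,\sigma^2)$ all independent across $s$ and $i$, the definition of the estimator and the identity $V_{m_0}^{-1}V_{m_0}=I_d$ yield
\begin{align*}
\hat{\theta}_* - \theta_* = V_{m_0}^{-1}\sum_{s=1}^{m_0}\sum_{i=1}^d a_i\bigl(a_i^\top\xi_s + \eta_{s,i}\bigr),
\end{align*}
which is a zero-mean Gaussian vector. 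With $A := \sum_i a_i a_i^\top$ and $V_{m_0}=m_0 A$, an elementary calculation shows its covariance equals $\frac{1}{m_0}(\sigma_0^2 I_d + \sigma^2 A^{-1})$. Its operator norm scales like $\frac{1}{m_0}$ with the inverse eigenvalues of $A$, and its trace aggregates the per-coordinate effective noise variances $\sigma_i^2$.

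Third, I would apply a Gaussian quadratic-form tail bound (Laurent--Massart or Hanson--Wright) to get that with probability at least $1-\delta$,
\begin{align*}
\|\hat{\theta}_* - \theta_*\|^2 \;\lesssim\; \frac{d\log(2/\delta)\sum_i\sigma_i^2}{m_0},
\end{align*}
up to factors involving $\lambda_d(A)$ introduced when inverting $V_{m_0}$, and then invert $\|\hat{\theta}_* - \theta_*\| \leq 2\sigma_0\epsilon$ to solve for $m_0$. The main obstacle is matching the unusual exponents in the stated sample complexity --- specifically the $\lambda_d^4$ factor and the outer cubic root giving an $\epsilon^{-4/3}$ rate. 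This suggests that the authors combine the Pinsker reduction with a moment-type (Markov) control on a carefully chosen power of $\|\hat{\theta}_* - \theta_*\|$, rather than the sharper sub-Gaussian tail I would reach for by default; equivalently, it may arise from balancing two competing error terms whose optimum naturally has a cubic root. Reproducing that precise balancing of moments is the delicate bookkeeping step; the rest of the argument is routine Gaussian linear algebra.
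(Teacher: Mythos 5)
Your first two steps are exactly the paper's route: Pinsker's inequality plus the equal-covariance Gaussian KL formula gives $\tv(P_{\theta_*}\parr P_{\hat\theta_*})\leq\frac{1}{2\sigma_0}\vnormf{\hat\theta_*-\theta_*}_2$, and the paper likewise writes $\hat\theta_*-\theta_*=V_{m_0}^{-1}\sum_{s,i}a_i(a_i\T\xi_{s,2}+\xi_{s,1})$ and pulls out $\vnormf{V_{m_0}^{-1}}_2=\frac{1}{m_0}\lambda_d^{-1}(\sum_i a_ia_i\T)$, which is where one factor of $\lambda_d^{-1}$ (and after raising to the fourth power, the $\lambda_d^{4}$ in the denominator) comes from. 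The divergence, and the gap, is the concentration step. Your Laurent--Massart/Hanson--Wright bound gives $\vnormf{\hat\theta_*-\theta_*}_2\lesssim\lambda_d^{-1}(\sum_i a_ia_i\T)\sqrt{d\log(2/\delta)\sum_i\sigma_i^2/m_0}$, i.e.\ a $m_0^{-1/2}$ rate, which after inverting yields a requirement $m_0\gtrsim\epsilon^{-2}$; that does not establish the theorem as stated, whose condition scales as $\epsilon^{-4/3}$ and is weaker for small $\epsilon$. Your guess that the cube root comes from a Markov/moment argument or from balancing two error terms is not what the paper does: in its Lemma on $\vnormf{\theta_*-\hat\theta_*}_2$ it forms $Z=\sum_{s,i}a_i(a_i\T\xi_{s,2}+\xi_{s,1})$, treats its coordinates as $\sqrt{m_0\sum_i\sigma_i^2}$-sub-Gaussian, applies Bernstein's inequality (Vershynin Thm.~2.8.1) to $\sum_{l}Z_l^2$, and reads off $\vnormf{Z}_2\leq\big(d\,m_0\log(2/\delta)\sum_i\sigma_i^2\big)^{1/4}$; combined with the $1/m_0$ from $V_{m_0}^{-1}$ this gives the $m_0^{-3/4}$ rate, hence $\epsilon^{-4}$ inside and the outer cube root.

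That said, your instinct is sound, and the mismatch is not a defect of your linear algebra. The paper's Bernstein step is applied without centering: $\E\vnormf{Z}_2^2$ is of order $d\,m_0\sum_i\sigma_i^2$, so a high-probability bound of order $\sqrt{d\,m_0\sum_i\sigma_i^2\log(2/\delta)}$ on $\vnormf{Z}_2^2$ sits below the mean for large $m_0$ and cannot hold as stated (and the claimed independence of the $Z_{s,i}$ across $i$ within a task is also questionable, since they share $\xi_{s,2}$). A careful version of the argument gives precisely your $m_0^{-1/2}$ rate and covariance $\frac{1}{m_0}(\sigma_0^2 I_d+\sigma^2 A^{-1})$. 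So judged against the stated quantitative claim your proposal falls short of reproducing the theorem's sample complexity --- you would only prove it with $m_0\gtrsim\epsilon^{-2}$ --- but the step you could not reproduce is exactly the step in the paper's proof that does not appear to survive scrutiny.
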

We prove this by bounding the TV distance of the estimate and correct prior using the Pinsker's inequality. Then the KL-divergence of the correct prior and the prior with parameter $\hat{\theta}_*$ boils down to $\vnormf{\theta_*-\hat{\theta}_*}_2$, which is bounded by the Bernstein's inequality (see \cref{app:fLinGausPfs} for the proof).

Now it is easy to bound the frequentist meta simple regret of \fMetaSRM using the sub-Gaussian version of \cref{cor:frqSR} in \cref{sec:frqPfs}. We prove the following result in \cref{app:fLinGausPfs} by decomposing the simple regret into the relative regret of the base algorithm w.r.t.~the oracle.

\begin{restatable}[Frequentist Meta Simple Regret, Linear Bandits]{cor}{corfLinSR}
\label{cor:frqSR-Lin}
In \cref{alg:f-MBAI}, let $\alg$ be a TS algorithm and use \cref{eq:LinGaus-theta-hat} for estimating the prior parameters with ${m_0}^3\geq \left(\frac{  d \log(2/\sqrt{\delta}) \sum_{i=1}^d\sigma_i^2 }{2\sigma_0\lambda_d^{4}(\sum_{i=1}^d a_i a_i\T) \epsilon^4}\right)$. Then the frequentist meta simple regret of \cref{alg:f-MBAI}
    is
    $\tilde{O}\left(2 m^{1/4} n ~\diam(\E_{\theta_*}[\mu]) + m \frac{d^{3/2}\log K}{\sqrt{n}}\right)$
    with probability at least $1-\delta$.
\end{restatable}
This bound is $\tilde{O}(m^{1/4} n \|\theta_*\|_{\infty} + m d^{3/2}/\sqrt{n})$, where $\|\cdot\|_{\infty}$ is the infinity norm. The first term is the price of estimating the prior and the second one is the standard frequentist regret of linear TS for $m$ tasks divided by $n$, $\tilde{O}(m d^{3/2}/\sqrt{n})$. Compared to \cref{cor:linBan-Bayes}, the above regret bound is looser.

\section{Related Work}
\label{sec:related-work}

To the best of our knowledge, there is no prior work on meta-learning for SRM. We build on several recent works on meta-learning for cumulative regret minimization \citep{bastani19meta,cella20metalearning,kveton2021metathompson,basu2021no,simchowitz2021bayesian}. Broadly speaking, these works either study a Bayesian setting \citep{kveton2021metathompson,basu2021no,hong2022hierarchical}, where the learning agent has access to a prior distribution over the meta-parameters of the unknown prior $P_*$; or a frequentist setting \citep{bastani19meta,cella20metalearning,simchowitz2021bayesian}, where the meta-parameters of $P_*$ are estimated using frequentist estimators. We study both the Bayesian and frequentist settings. Our findings are similar to prior works, that the Bayesian methods have provably lower regret but are also less general when insisting on the exact implementation.

Meta-learning is an established field of machine learning \citep{thrun96explanationbased,thrun98lifelong,baxter98theoretical,baxter00model,finn18probabilistic}, and also has a long history in multi-armed bandits \citep{azar13sequential,gentile14online,deshmukh17multitask}. Tuning of bandit algorithms is known to reduce regret \citep{vermorel05multiarmed,maes12metalearning,kuleshov14algorithms,hsu19empirical} and can be viewed as meta-learning. However, it lacks theory. Several papers tried to learn a bandit algorithm using policy gradients \citep{duan16rl2,boutilier20differentiable,kveton20differentiable,yang20differentiable,min20policy}. These works focus on offline optimization against a known prior $P_*$ and are in the cumulative regret setting.

Our SRM setting is also related to fixed-budget \emph{best-arm identification (BAI)} \citep{Gabillon-2012,pmlr-v139-alieva21a,azizi2021fixedbudget}. In BAI, the goal is to control the probability of choosing a suboptimal arm. The two objectives are related because the simple regret can be bounded by the probability of choosing a suboptimal arm multiplied by the maximum gap.

While SRM has a long history \citep{audibert-2010-BAI,kaufmann16complexity}, prior works on Bayesian SRM are limited. \citet{russo2020simple} proposed a TS algorithm for BAI. However, its analysis and regret bound are frequentist. The first work on Bayesian SRM is \citet{komiyama21optimal}. Beyond establishing a lower bound, they proposed a Bayesian algorithm that minimizes the (Bayesian) per-task simple regret in \cref{eq:per-task-SR}. This algorithm does not use the prior $P_*$ and is conservative. As a side contribution of our work, we establish Bayesian per-task simple regret bounds for posterior-based algorithms in this setting.

\begin{figure*}[tb]
    \centering
    \begin{minipage}{.45\textwidth}
        \includegraphics[width=\textwidth]{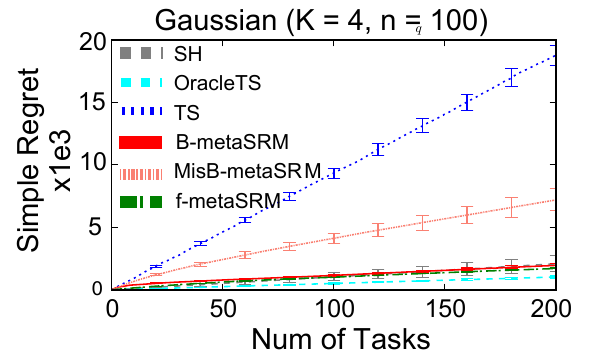}
    \end{minipage}
    \hfill
    \begin{minipage}{.45\textwidth}
        \includegraphics[width=\textwidth]{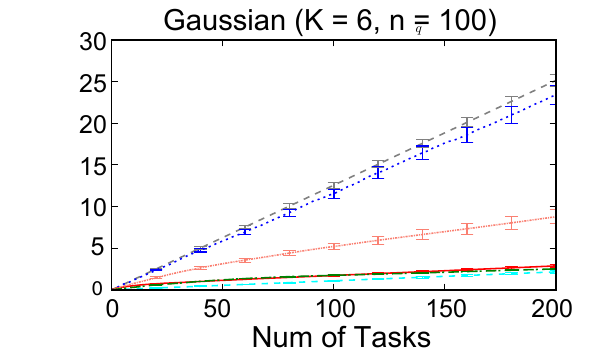}
    \end{minipage}
    \hfill
    \begin{minipage}{.45\textwidth}
        \includegraphics[width=\textwidth]{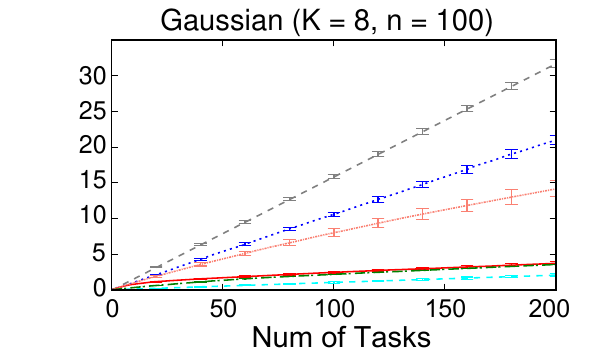}
    \end{minipage}
    \caption{Learning curves for Gaussian MAB experiments. The error bars are standard deviations from 100 runs.}
    \label{fig:Gaus}
\end{figure*}

\begin{figure*}[tb]
    \centering
    \begin{minipage}{.45\textwidth}
        \includegraphics[width=\textwidth]{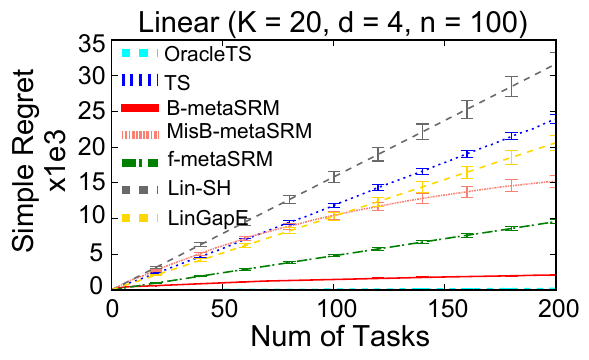}
    \end{minipage}
    \hfill
    \begin{minipage}{.45\textwidth}
        \includegraphics[width=\textwidth]{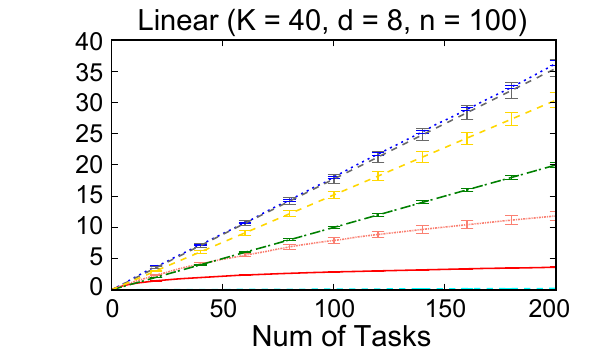}
    \end{minipage}
    \hfill
    \begin{minipage}{.45\textwidth}
        \includegraphics[width=\textwidth]{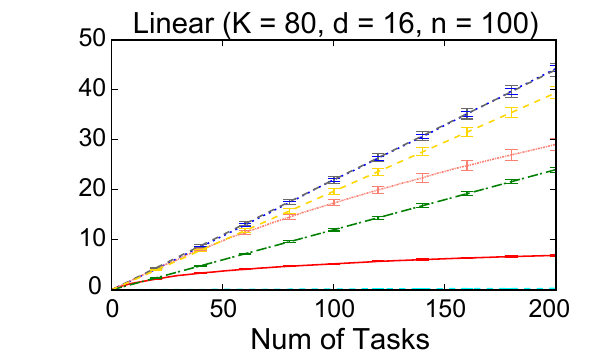}
    \end{minipage}
    \caption{Learning curves for linear Gaussian bandit experiments. The error bars are standard deviations from 100 runs.}
    \label{fig:Lin}
\end{figure*}

\section{Experiments}
\label{sec:experiment}

In this section, we empirically compare our algorithms by their \emph{average meta simple regret} over $100$ simulation runs. In each run, the prior is sampled i.i.d.\ from a fixed meta-prior. Then the algorithms run on tasks sampled i.i.d.\ from the prior. Therefore, the average simple regret is a finite-sample approximation of the Bayesian meta simple regret. Alternatively, we evaluate the algorithms based on their frequentist regret in \cref{app:FurtherExperiments}. We also experiment with a real-world dataset in \cref{app:mnist}.

We evaluate three variants of our algorithms with TS as \alg; (1) \fMetaSRM (\cref{alg:f-MBAI}) as a frequentist Meta TS. 
We tune $m_0$ and report the point-wise best performance for each task. 
(2) \bMetaSRM (\cref{alg:B-MBAI}) as a Bayesian Meta-learning algorithm. (3) \MisbMetaSRM which is the same as \bMetaSRM except that the meta-prior mean is perturbed by uniform noise from $[-50, 50]$. This is to show how a major meta-prior misspecification affects our Bayesian algorithm. The actual meta-prior is $\Ngaus(0,\Sigma_q)$. 

We do experiments with Gaussian rewards, and thus the following are our baseline for both MAB and linear bandit experiments. The first baseline is \OTS, which is TS with the correct prior $\Ngaus(\theta_*,\Sigma_0)$. Because of that, it performs the best in hindsight. The second baseline is agnostic \TS, which ignores the structure of the problem. We implement it with a prior $\Ngaus(\mathbf{0}_K, \Sigma_q+\Sigma_0)$, since $\mu_s$ can be viewed as a sample from this prior when the task structure is ignored. Note that $\Sigma_q$ is the meta-prior covariance in \cref{sec:LinGaus}.

The next set of baselines are state-of-the-art BAI algorithms. As mentioned in \cref{sec:related-work}, the goal of BAI is not SRM but it is closely related. A BAI algorithm is expected to have small simple regret for a single task. Therefore, if our algorithms outperform them, the gain must be due to meta-learning. We include sequential halving (\SH) and its linear variant (\LinSH), which are special cases of \GSE \citep{azizi2021fixedbudget}, as the state-of-the-art fixed-budget BAI algorithms. We also include \LinGapE \citep{xu2017fully-LinGapE} as it shows superior SRM performance compared to \LinSH. All experiments have $m = 200$ tasks with $n = 100$ rounds in each. \cref{app:FurtherExperiments} describes the experimental setup in more detail and also includes additional results.

\subsection{Gaussian MAB}\label{sec:GausExp}

We start our experiments with a Gaussian bandit. Specifically, we assume that $\A=[K]$ are $K$ arms with a Gaussian reward distribution $\nu_s(a;\mu_s)=\Ngaus(\mu_s(a),10^2)$, so $\sigma=10$. The mean reward is sampled as $\mu_s\sim P_{\theta_*}=\Ngaus(\theta_*,0.1^2I_K)$, so $\Sigma_0=0.1^2I_K$. The prior parameter is sampled from meta-prior as $\theta_* \sim Q = \Ngaus(\mathbf{0}_K, I_K)$, i.e., $\Sigma_q=I_K$.

\cref{fig:Gaus} shows the results for various values of $K$. We clearly observe that the meta-learning algorithms adapt to the task prior and outperform \TS. Both \fMetaSRM and \bMetaSRM perform similarly close to \OTS, which confirms the negligible cost of learning the prior as expected in our bounds. We also note that \fMetaSRM outperforms \MisbMetaSRM, which highlights the reliance of the Bayesian algorithm on a good meta-prior. \SH matches the performance of the meta-learning algorithms when $K = 4$. However, as the task becomes harder ($K > 4$), it underperforms our algorithms significantly. For smaller $K$, the tasks share less information and thus meta-learning does not improve the learning as much.

\subsection{Linear Gaussian Bandits}\label{sec:LinExp}

Now take a linear bandit (\cref{sec:LinGaus}) in $d$ dimensions with $K=5d$ arms, the arms are sampled from a unit sphere uniformly. The reward of arm $a$ is distributed as $\Ngaus(a\T\mu_s,10^2)$, so $\sigma=10$, and $\mu_s$ is sampled from $P_{*}=\Ngaus(\theta_*, 0.1^2 I_d)$, so $\Sigma_0=0.1^2 I_d$. The prior parameter, $\theta_*$, is sampled from meta-prior $Q = \Ngaus(\mathbf{0}_d, I_d)$, so $\Sigma_q=I_d$.

\cref{fig:Lin} shows experiments for various values of $d$. As expected, larger $d$ increase the regret of all the algorithms. Compared to \cref{sec:GausExp}, the problem of learning the prior is more difficult, and the gap of \bMetaSRM and \OTS increases. \fMetaSRM also outperforms \TS, but it has a much higher regret than \bMetaSRM. While \MisbMetaSRM under-performs \fMetaSRM in the MAB tasks, it performs closer to \bMetaSRM in this experiment. The BAI algorithms, \LinSH and \LinGapE, under-perform our meta-learning algorithms and are closer to \TS than in \cref{fig:Gaus}. The value of knowledge transfer in the linear setting is higher since the linear model parameter is shared by many arms.

Our linear bandit experiment confirms the applicability of our algorithms to structured problems, which shows potential for solving real-world problems. Specifically, the success of \MisbMetaSRM confirms the robustness of \bMetaSRM to misspecification.

\section{Conclusions and Future Work}

We develop a meta-learning framework for SRM, where the agent improves by interacting repeatedly with similar tasks. We propose two algorithms: a Bayesian algorithm that maintains a distribution over task parameters and the frequentist one that estimates the task parameters using frequentist methods. The Bayesian algorithm has superior regret guarantees while the frequentist one can be applied to a larger family of problems.

This work lays foundations for Bayesian SRM and readily extends to reinforcement learning (RL). For instance, we can extend our framework to task structures, such as parallel or arbitrarily ordered \citep{wan2021metadatabased,hong2022hierarchical}. Our Bayesian algorithm easily extends to tabular and factored MDPs RL \citep{lu2019information}. Also, the frequentist algorithm applies to POMDPs \citep{simchowitz2021bayesian}.

\bibliography{refs.bib,References.bib}

\begin{thebibliography}{43}
\providecommand{\natexlab}[1]{#1}
\providecommand{\url}[1]{\texttt{#1}}
\expandafter\ifx\csname urlstyle\endcsname\relax
  \providecommand{\doi}[1]{doi: #1}\else
  \providecommand{\doi}{doi: \begingroup \urlstyle{rm}\Url}\fi

\bibitem[Abeille and Lazaric(2017)]{abeille17linear}
Marc Abeille and Alessandro Lazaric.
\newblock Linear {Thompson} sampling revisited.
\newblock In \emph{Proceedings of the 20th International Conference on
  Artificial Intelligence and Statistics}, 2017.

\bibitem[Agrawal and Goyal(2012)]{agrawal2012analysis}
Shipra Agrawal and Navin Goyal.
\newblock Analysis of thompson sampling for the multi-armed bandit problem.
\newblock In \emph{Conference on learning theory}, pages 39--1. JMLR Workshop
  and Conference Proceedings, 2012.

\bibitem[Agrawal and Goyal(2013)]{agrawal2013further}
Shipra Agrawal and Navin Goyal.
\newblock Further optimal regret bounds for thompson sampling.
\newblock In \emph{Artificial intelligence and statistics}, pages 99--107.
  PMLR, 2013.

\bibitem[Alieva et~al.(2021)Alieva, Cutkosky, and Das]{pmlr-v139-alieva21a}
Ayya Alieva, Ashok Cutkosky, and Abhimanyu Das.
\newblock Robust pure exploration in linear bandits with limited budget.
\newblock In Marina Meila and Tong Zhang, editors, \emph{Proceedings of the
  38th International Conference on Machine Learning}, volume 139 of
  \emph{Proceedings of Machine Learning Research}, pages 187--195. PMLR, 18--24
  Jul 2021.
\newblock URL \url{https://proceedings.mlr.press/v139/alieva21a.html}.

\bibitem[Audibert and Bubeck(2010)]{audibert-2010-BAI}
Jean-Yves Audibert and S{\'e}bastien Bubeck.
\newblock {Best Arm Identification in Multi-Armed Bandits}.
\newblock In \emph{{COLT - 23th Conference on Learning Theory - 2010}}, page 13
  p., Haifa, Israel, June 2010.
\newblock URL \url{https://hal-enpc.archives-ouvertes.fr/hal-00654404}.

\bibitem[Azar et~al.(2013)Azar, Lazaric, and Brunskill]{azar13sequential}
Mohammad~Gheshlaghi Azar, Alessandro Lazaric, and Emma Brunskill.
\newblock Sequential transfer in multi-armed bandit with finite set of models.
\newblock In \emph{Advances in Neural Information Processing Systems 26}, pages
  2220--2228, 2013.

\bibitem[Azizi et~al.(2022)Azizi, Kveton, and
  Ghavamzadeh]{azizi2021fixedbudget}
MohammadJavad Azizi, Branislav Kveton, and Mohammad Ghavamzadeh.
\newblock Fixed-budget best-arm identification in structured bandits.
\newblock In \emph{Proceedings of the Thirty-First International Joint
  Conference on Artificial Intelligence, IJCAI-22}, pages 2798--2804, 2022.

\bibitem[Bastani et~al.(2019)Bastani, Simchi-Levi, and Zhu]{bastani19meta}
Hamsa Bastani, David Simchi-Levi, and Ruihao Zhu.
\newblock Meta dynamic pricing: Transfer learning across experiments.
\newblock \emph{Management Science}, 2019.
\newblock \doi{10.1287/mnsc.2021.4071}.
\newblock URL \url{https://doi.org/10.1287/mnsc.2021.4071}.

\bibitem[Basu et~al.(2021)Basu, Kveton, Zaheer, and Szepesv{\'a}ri]{basu2021no}
Soumya Basu, Branislav Kveton, Manzil Zaheer, and Csaba Szepesv{\'a}ri.
\newblock No regrets for learning the prior in bandits.
\newblock \emph{Advances in Neural Information Processing Systems}, 34, 2021.

\bibitem[Baxter(1998)]{baxter98theoretical}
Jonathan Baxter.
\newblock Theoretical models of learning to learn.
\newblock In \emph{Learning to Learn}, pages 71--94. Springer, 1998.

\bibitem[Baxter(2000)]{baxter00model}
Jonathan Baxter.
\newblock A model of inductive bias learning.
\newblock \emph{Journal of Artificial Intelligence Research}, 12:\penalty0
  149--198, 2000.

\bibitem[Boutilier et~al.(2020)Boutilier, Hsu, Kveton, Mladenov, Szepesvari,
  and Zaheer]{boutilier20differentiable}
Craig Boutilier, Chih-Wei Hsu, Branislav Kveton, Martin Mladenov, Csaba
  Szepesvari, and Manzil Zaheer.
\newblock Differentiable meta-learning of bandit policies.
\newblock In \emph{Advances in Neural Information Processing Systems 33}, 2020.

\bibitem[Cella et~al.(2020)Cella, Lazaric, and Pontil]{cella20metalearning}
Leonardo Cella, Alessandro Lazaric, and Massimiliano Pontil.
\newblock Meta-learning with stochastic linear bandits.
\newblock In \emph{Proceedings of the 37th International Conference on Machine
  Learning}, 2020.

\bibitem[Deshmukh et~al.(2017)Deshmukh, Dogan, and Scott]{deshmukh17multitask}
Aniket~Anand Deshmukh, Urun Dogan, and Clayton Scott.
\newblock Multi-task learning for contextual bandits.
\newblock In \emph{Advances in Neural Information Processing Systems 30}, pages
  4848--4856, 2017.

\bibitem[Duan et~al.(2016)Duan, Schulman, Chen, Bartlett, Sutskever, and
  Abbeel]{duan16rl2}
Yan Duan, John Schulman, Xi~Chen, Peter~L Bartlett, Ilya Sutskever, and Pieter
  Abbeel.
\newblock {RL}$^2$: Fast reinforcement learning via slow reinforcement
  learning.
\newblock \emph{arXiv preprint arXiv:1611.02779}, 2016.

\bibitem[Finn et~al.(2018)Finn, Xu, and Levine]{finn18probabilistic}
Chelsea Finn, Kelvin Xu, and Sergey Levine.
\newblock Probabilistic model-agnostic meta-learning.
\newblock In \emph{Advances in Neural Information Processing Systems 31}, pages
  9537--9548, 2018.

\bibitem[Gabillon et~al.(2012)Gabillon, Ghavamzadeh, and
  Lazaric]{Gabillon-2012}
Victor Gabillon, Mohammad Ghavamzadeh, and Alessandro Lazaric.
\newblock Best arm identification: A unified approach to fixed budget and fixed
  confidence.
\newblock In F.~Pereira, C.~J.~C. Burges, L.~Bottou, and K.~Q. Weinberger,
  editors, \emph{Advances in Neural Information Processing Systems}, volume~25,
  pages 3212--3220. Curran Associates, Inc., 2012.
\newblock URL
  \url{https://proceedings.neurips.cc/paper/2012/file/8b0d268963dd0cfb808aac48a549829f-Paper.pdf}.

\bibitem[Gentile et~al.(2014)Gentile, Li, and Zappella]{gentile14online}
Claudio Gentile, Shuai Li, and Giovanni Zappella.
\newblock Online clustering of bandits.
\newblock In \emph{Proceedings of the 31st International Conference on Machine
  Learning}, pages 757--765, 2014.

\bibitem[Hoffman et~al.(2014)Hoffman, Shahriari, and
  Freitas]{pmlr-v33-hoffman14}
Matthew Hoffman, Bobak Shahriari, and Nando Freitas.
\newblock On correlation and budget constraints in model-based bandit
  optimization with application to automatic machine learning.
\newblock In \emph{Artificial Intelligence and Statistics}, pages 365--374.
  PMLR, 2014.

\bibitem[Hong et~al.(2022)Hong, Kveton, Zaheer, and
  Ghavamzadeh]{hong2022hierarchical}
Joey Hong, Branislav Kveton, Manzil Zaheer, and Mohammad Ghavamzadeh.
\newblock Hierarchical {Bayesian} bandits.
\newblock In \emph{International Conference on Artificial Intelligence and
  Statistics}, pages 7724--7741. PMLR, 2022.

\bibitem[Hsu et~al.(2019)Hsu, Kveton, Meshi, Mladenov, and
  Szepesvari]{hsu19empirical}
Chih-Wei Hsu, Branislav Kveton, Ofer Meshi, Martin Mladenov, and Csaba
  Szepesvari.
\newblock Empirical bayes regret minimization.
\newblock \emph{arXiv preprint arXiv:1904.02664}, 2019.

\bibitem[Kaufmann et~al.(2016)Kaufmann, Capp{\'{e}}, and
  Garivier]{kaufmann16complexity}
Emilie Kaufmann, Olivier Capp{\'{e}}, and Aur{\'{e}}lien Garivier.
\newblock On the complexity of best-arm identification in multi-armed bandit
  models.
\newblock \emph{JMLR}, 17:\penalty0 1:1--1:42, 2016.

\bibitem[Komiyama et~al.(2021)Komiyama, Ariu, Kato, and Qin]{komiyama21optimal}
Junpei Komiyama, Kaito Ariu, Masahiro Kato, and Chao Qin.
\newblock Optimal simple regret in bayesian best arm identification.
\newblock \emph{arXiv preprint arXiv:2111.09885}, 2021.

\bibitem[Kuleshov and Precup(2014)]{kuleshov14algorithms}
Volodymyr Kuleshov and Doina Precup.
\newblock Algorithms for multi-armed bandit problems.
\newblock \emph{CoRR}, abs/1402.6028, 2014.
\newblock URL \url{http://arxiv.org/abs/1402.6028}.

\bibitem[Kveton et~al.(2020)Kveton, Mladenov, Hsu, Zaheer, Szepesvari, and
  Boutilier]{kveton20differentiable}
Branislav Kveton, Martin Mladenov, Chih-Wei Hsu, Manzil Zaheer, Csaba
  Szepesvari, and Craig Boutilier.
\newblock Differentiable meta-learning in contextual bandits.
\newblock \emph{arXiv e-prints}, pages arXiv--2006, 2020.

\bibitem[Kveton et~al.(2021)Kveton, wei Hsu, Boutilier, Szepesvari, Zaheer,
  Mladenov, and Konobeev]{kveton2021metathompson}
Branislav Kveton, Chih wei Hsu, Craig Boutilier, Csaba Szepesvari, Manzil
  Zaheer, Martin Mladenov, and Michael Konobeev.
\newblock Meta-thompson sampling.
\newblock In \emph{Proceedings of the 38th International Conference on Machine
  Learning (ICML 2021)}, pages 5884--5893, 2021.

\bibitem[Lattimore and Szepesv{\'a}ri(2020)]{lattimore-Bandit}
Tor Lattimore and Csaba Szepesv{\'a}ri.
\newblock \emph{Bandit algorithms}.
\newblock Cambridge University Press, 2020.

\bibitem[Liu et~al.(2022)Liu, Devraj, Van~Roy, and Xu]{liu2022gaussian}
Yueyang Liu, Adithya~M Devraj, Benjamin Van~Roy, and Kuang Xu.
\newblock Gaussian imagination in bandit learning.
\newblock \emph{arXiv preprint arXiv:2201.01902}, 2022.

\bibitem[Lu and Van~Roy(2019)]{lu2019information}
Xiuyuan Lu and Benjamin Van~Roy.
\newblock Information-theoretic confidence bounds for reinforcement learning.
\newblock In \emph{Advances in Neural Information Processing Systems},
  volume~32, 2019.

\bibitem[Maes et~al.(2012)Maes, Wehenkel, and Ernst]{maes12metalearning}
Francis Maes, Louis Wehenkel, and Damien Ernst.
\newblock Meta-learning of exploration/exploitation strategies: The multi-armed
  bandit case.
\newblock In \emph{Proceedings of the 4th International Conference on Agents
  and Artificial Intelligence}, pages 100--115, 2012.

\bibitem[Mason et~al.(2020)Mason, Jain, Tripathy, and
  Nowak]{NEURIPS2020_edf0320a}
Blake Mason, Lalit Jain, Ardhendu Tripathy, and Robert Nowak.
\newblock Finding all \textbackslash epsilon-good arms in stochastic bandits.
\newblock In H.~Larochelle, M.~Ranzato, R.~Hadsell, M.~F. Balcan, and H.~Lin,
  editors, \emph{Advances in Neural Information Processing Systems}, volume~33,
  pages 20707--20718. Curran Associates, Inc., 2020.
\newblock URL
  \url{https://proceedings.neurips.cc/paper/2020/file/edf0320adc8658b25ca26be5351b6c4a-Paper.pdf}.

\bibitem[Min et~al.(2020)Min, Moallemi, and Russo]{min20policy}
Seungki Min, Ciamac~C Moallemi, and Daniel~J Russo.
\newblock Policy gradient optimization of thompson sampling policies.
\newblock \emph{arXiv preprint arXiv:2006.16507}, 2020.

\bibitem[R{\'e}da et~al.(2021)R{\'e}da, Kaufmann, and
  Delahaye-Duriez]{reda2021top}
Cl{\'e}mence R{\'e}da, Emilie Kaufmann, and Andr{\'e}e Delahaye-Duriez.
\newblock Top-m identification for linear bandits.
\newblock In \emph{International Conference on Artificial Intelligence and
  Statistics}, pages 1108--1116. PMLR, 2021.

\bibitem[Russo(2020)]{russo2020simple}
Daniel Russo.
\newblock Simple bayesian algorithms for best-arm identification.
\newblock \emph{Operations Research}, 68\penalty0 (6):\penalty0 1625--1647,
  2020.

\bibitem[Simchowitz et~al.(2021)Simchowitz, Tosh, Krishnamurthy, Hsu, Lykouris,
  Dudik, and Schapire]{simchowitz2021bayesian}
Max Simchowitz, Christopher Tosh, Akshay Krishnamurthy, Daniel~J Hsu, Thodoris
  Lykouris, Miro Dudik, and Robert~E Schapire.
\newblock Bayesian decision-making under misspecified priors with applications
  to meta-learning.
\newblock \emph{Advances in Neural Information Processing Systems}, 34, 2021.

\bibitem[Thrun(1996)]{thrun96explanationbased}
Sebastian Thrun.
\newblock \emph{Explanation-Based Neural Network Learning - A Lifelong Learning
  Approach}.
\newblock PhD thesis, University of Bonn, 1996.

\bibitem[Thrun(1998)]{thrun98lifelong}
Sebastian Thrun.
\newblock Lifelong learning algorithms.
\newblock In \emph{Learning to Learn}, pages 181--209. Springer, 1998.

\bibitem[Tripathi et~al.(1994)Tripathi, Gupta, and
  Gurland]{tripathi1994estimation}
Ram~C Tripathi, Ramesh~C Gupta, and John Gurland.
\newblock Estimation of parameters in the beta binomial model.
\newblock \emph{Annals of the Institute of Statistical Mathematics},
  46\penalty0 (2):\penalty0 317--331, 1994.

\bibitem[Vermorel and Mohri(2005)]{vermorel05multiarmed}
Joannes Vermorel and Mehryar Mohri.
\newblock Multi-armed bandit algorithms and empirical evaluation.
\newblock In \emph{Proceedings of the 16th European Conference on Machine
  Learning}, pages 437--448, 2005.

\bibitem[Vershynin(2018)]{Vershynin-HDP-2019}
Roman Vershynin.
\newblock \emph{High-dimensional probability: An introduction with applications
  in data science}, volume~47.
\newblock Cambridge university press, 2018.

\bibitem[Wan et~al.(2021)Wan, Ge, and Song]{wan2021metadatabased}
Runzhe Wan, Lin Ge, and Rui Song.
\newblock Metadata-based multi-task bandits with {Bayesian} hierarchical
  models.
\newblock In A.~Beygelzimer, Y.~Dauphin, P.~Liang, and J.~Wortman Vaughan,
  editors, \emph{Advances in Neural Information Processing Systems}, 2021.
\newblock URL \url{https://openreview.net/forum?id=nW4xl2CjcVg}.

\bibitem[Xu et~al.(2018)Xu, Honda, and Sugiyama]{xu2017fully-LinGapE}
Liyuan Xu, Junya Honda, and Masashi Sugiyama.
\newblock Fully adaptive algorithm for pure exploration in linear bandits,
  2018.

\bibitem[Yang and Toni(2020)]{yang20differentiable}
Kaige Yang and Laura Toni.
\newblock Differentiable linear bandit algorithm.
\newblock \emph{arXiv preprint arXiv:2006.03000}, 2020.

\end{thebibliography}

\newpage
\appendix

\section{Further Setting Details}
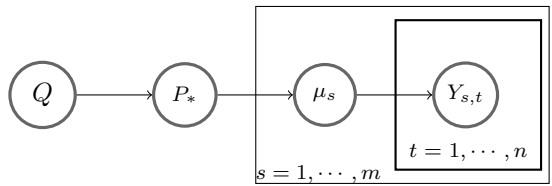
\begin{figure}
\centering
\begin{tikzpicture}[
roundnode/.style={circle, draw=black!60, fill=white!5, very thick, minimum size=5mm, inner sep=2pt, text width=4ex,text centered}
]
\node[roundnode](q){$Q$};
\node[roundnode](ps)[right=of q]{\footnotesize{$P_*$}};
\node[roundnode](mus)[right=of ps]{\footnotesize{$\mu_s$}};
\node[roundnode](ys)[right=of mus]{\footnotesize{$Y_{s,t}$}};

\draw[->] (q.east) -- (ps.west);
\draw[->] (ps.east) -- (mus.west);
\draw[->] (mus.east) -- (ys.west);

\node[block, right=-5mm of ys.west, text height=1.7cm](a){\footnotesize{$t=1,\cdots,n$}};

\node[draw=black, shape=rectangle, inner sep=0pt, fill=none, text=black, left=-1mm of a.east, minimum width=3.5cm, minimum height = 1.4cm, text height=2.3cm, text width=3.9cm] {\footnotesize{$s=1,\cdots,m$}};
\end{tikzpicture}
\caption{Generative model of the meta learning SRM setting studied in the paper. Note that there is no meta prior $Q$ in the frequentist setting.}
\label{fig:set}
\end{figure}

\cref{fig:set} illustrate the generative model of the meta learning SRM setting studied in this paper. Note that there is no meta prior $Q$ in the frequentist setting.

\section{Cumulative to Simple Regret}
\label{app:Cum2Sim}

In this section, we propose a general framework for cumulative regret to simple regret reduction that establishes many new algorithms and leads to efficient SRM methods. We use this simple but fundamentally important tool in our proofs. In the frequentist analysis, this is used to bound the regret of the base algorithm. We also use this in the full regret reduction in the Bayesian setting.

Fix a task $s$ and consider an algorithm that pulls a sequence of arms, $(A_{s,t})_{t\in[n]}\;$.  
Now let its per-task cumulative regret with prior $P$ be 
\begin{align*}
    R_s(n, P):=\E_{\mu_s\sim P}\E\left[n\mu(A^*) - \sum_{t=1}^n\mu(A_t)\right]\;.
\end{align*}
where the inner expectation is taken over the algorithmic and rewards randomness.
Now suppose at the end of the task, we choose arm $a$ with probability $\rho_s(a)=\frac{N_{a,s}}{n}$ and declare it to be the best arm, $\Ahat_s$. Then the following result bounds the per-task simple regret of this general procedure based on its per-task cumulative regret.

\cumTosim*
\begin{proof}[Proof of \cref{prop:cum2sim}]
    Fix a task $s$. 
    We can rewrite its per-task simple regret as
    \begin{align*}
        \SR_s(n, P)&={\E}_{\mu_s\sim P}\E\left[\mu_s(A^*)-\sum_{a\in\A}\frac{N_{a,s}}{n}\mu_s(a)\right]
        \\&={\E}_{\mu_s\sim P}\E\left[\sum_{a\in\A}\frac{\mu_s(A^*)}{n} -\frac{N_{a,s}}{n}\mu_s(a)\right]
        \\&=\frac{R_s(n, P)}{n}.
    \end{align*} 
    where the first equality holds by the nature of the procedure, and the last one used the linearity of expectation twice.
\end{proof}
It is also straightforward to see that \cref{prop:cum2sim} works for either frequentist meta simple regret or Bayesian meta simple regret. This is because the former is the summation of $\SR_s$ over tasks, and the latter is achieved by taking an expectation of the former over $P_*$.

\section{Bayesian Analysis}\label{app:BayesReg-pfs}

We defined $\tau_s=(A_{s,1}, Y_{s,1},\cdots, A_{s,n},Y_{s,n})$ to be the trajectory of task $s$, $\tau_{1:s}=\oplus_{\ell=1}^s \tau_{\ell}$ be the trajectory of tasks $1$ to $s$, and $\tau_{1:s,t}$ be the trajectory from the beginning of the first task up to round $t-1$ of task $s$. Let $\E_{s,t}[\cdot]=\E[\cdot|\tau_{1:s,t}]$. We define $\Gamma_{s,t}$ and $\beta_{s,t}$ to be the potentially trajectory-dependent non-negative random variables, such that the following inequality holds:
\begin{align}
    \E_{s,t}[\mu(A^*_s)&-\mu(A_{s,t})] 
    \leq \Gamma_{s,t}\sqrt{\Info_{s,t}(\mu_s;A_{s,t},Y_{s,t})}+\beta_{s,t},\label{eq:info}
\end{align}
where $\Info_{s,t}(\mu_s;A_{s,t},Y_{s,t})$ is the mutual information of the mean reward of task $s$ and the pair of arm taken $A_{s,t}$ and reward observed $Y_{s,t}$ in round $t$ of task $s$, conditioned on the trajectory $\tau_{1:s,t}$. These random variables are well-defined as introduced in \citet{lu2019information}.  

For \ucb, we use the upper bound
\begin{align}
    U_{s,t}(a):=\E_{s,t}[Y_{s,t}(a)]+\Gamma_{s,t}\sqrt{\Info_{s,t}(\mu_{s};A_{s,t},Y_{s,t}(a))}\;,\label{eq:Ucb}
\end{align}
The quantity $\E_{s,t}[Y_{s,t}(a)]$ is calculated based on the posterior of $\mu_s$ at round $t$. 

We remind some notation in their general form. If $\frac{\da P}{\da Q}$ is the Radon-Nikodym derivative of $P$ with respect to $Q$, we know it is finite when $P$ is absolutely continuous with respect to $Q$. Let $D(P\parr Q)=\int\log(\frac{\da P}{\da Q})\da P$ be the relative entropy of $P$ with respect to $Q$. Also, Let $\Info(X;Y)=D(\P(X,Y)\parr \P(X)\P(Y))$ be the mutual information between $X$ and $Y$ and $\Info_{s,t}(X;Y):=\Info(X;Y|\tau_{1:s,t})$ be the same mutual information given trajectory $\tau_{1:s,t}$. We also define the \emph{conditional mutual information} between $X$ and $Y$ conditioned on $Z$. We define this quantity as $\Info(X; Y | Z) = \E[\hat{\Info}(X; Y | Z)]$, where $\hat{\Info}(X; Y | Z) = D(P(X, Y | Z)\parr P(X | Z)P(Y | Z))$ is the \emph{random conditional mutual information} between $X$ and $Y$ given $Z$. Note that $\hat{\Info}(X; Y | Z)$ is a
function of $Z$. By the \emph{chain rule} for the random conditional mutual information and taking the expectation over $Y_2 | Z$ we get $\hat{\Info}(X; Y_1, Y_2 | Z) = \E[\hat{\Info}(X; Y_1 | Y_2, Z) | Z] + \hat{\Info}(X; Y_2 | Z)$. Without $Z$, the usual chain rule is $\Info(X; Y_1, Y_2) = \Info(X; Y_1 | Y_2) + \Info(X; Y_2)$.

\infoBound*

\begin{proof}
Fixing a prior $P_*$ and summing over $s\in[m]$, as the reduction from cumulative to simple regret in \cref{prop:cum2sim} holds for any prior, $\SR_s(m,n,P_*)=\frac{1}{n} \sum_{s=1}^m R_s(n,P_*)$. 
Therefore, by taking expectation over $P_*\sim Q$, we know $\BSR(m,n)= \frac{1}{n}\E_{P_*\sim Q}[\sum_{s=1}^m R_s(n,P_*)]$. Now notice that $\E_{P_*\sim Q}[\sum_{s=1}^m R_s(n,P_*)]$ is bounded by Lemma 2 of \citet{basu2021no} as follows
\begin{align*}
    \SR(m,n, P_*)\leq \Gamma\sqrt{mn\Info(\theta_*;\tau_{1:m})}+\sum_{s=1}^m\Gamma_s\sqrt{n\Info(\mu_s;\tau_s|\theta_*,\tau_{1:s-1})}+\sum_{s=1}^m\sum_{t=1}^n \E\beta_{s,t}\;.
\end{align*}

Now, we only need to divide the right-hand side by $n$.

\end{proof}

\subsection{Proof of Bayesian Linear Bandit}\label{app:LinBan-Bay-pfs}

\corLinBanBayes*

\begin{proof}[Proof of \cref{cor:linBan-Bayes}]
    This is only applying \cref{prop:cum2sim} to Theorem 5 of \citet{basu2021no}. Note that we can directly get this result from the generic Bayesian meta simple regret bound \cref{lem:infoBd} by setting $\Gamma_{s,t}$ and $\beta_{s,t}$ properly based on the properties of linear Gaussian bandits environment from \citet{lu2019information}.   
\end{proof}

\subsection{Information Theoretic Technical Tools}
The conditional entropy terms are defined as follows:
\begin{align*}
  h_{s,t}(\mu_s )
  & = \E_{s,t}\left[-\log\left(\mathbb{P}_{s,t}(\mu_s )\right)\right]\,, \\
  \quad h_{s,t}(\mu_{*})
  & = \E_{s,t}\left[-\log\left(\mathbb{P}_{s,t}(\mu_{*})\right)\right]\,, \\
  h_{s,t}(\mu_s \mid\mu_{*})
  & = \E_{s,t}\left[-\log\left(\mathbb{P}_{s,t}(\mu_s \mid\mu_{*})\right)\right]\,.
\end{align*}

Therefore, all the different mutual information terms $I_{s,t}(\cdot; A_{s,t}, Y_{s,t})$, and the entropy terms $h_{s,t}(\cdot)$ are random variables that depends on the history $\tau_{1:s,t}$. 

We next state some entropy and mutual information relationships which we use later.
\begin{proposition}\label{prop:chain}
For all $s$, $t$, and any history $H_{1:s,t}$, the following hold
\begin{align*}
  I_{s,t}(\mu_s, \mu_{*}; A_{s,t}, Y_{s,t})
  & = I_{s,t}(\mu_{*}; A_{s,t}, Y_{s,t}) + I_{s,t}(\mu_s; A_{s,t}, Y_{s,t}\mid\mu_{*})\,, \\
  I_{s,t}(\mu_s; A_{s,t}, Y_{s,t})
  & = h_{s,t}(\mu_s) - h_{s,t+1}(\mu_s)\,.
\end{align*}
\end{proposition}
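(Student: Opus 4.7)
The plan is to derive both identities by applying standard information-theoretic identities (the chain rule for mutual information and the entropy/mutual-information relation) conditionally on the trajectory $\tau_{1:s,t}$. Since the quantities $I_{s,t}(\cdot;\cdot)$ and $h_{s,t}(\cdot)$ are by definition random variables obtained by evaluating the usual information measures under the conditional distribution $\mathbb{P}(\cdot\mid\tau_{1:s,t})$, the proofs reduce to verifying the deterministic analogues pointwise in the trajectory.

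For the first identity, I would invoke the conditional chain rule: for random variables $X_1, X_2, Z$ and conditioning variable $W$,
\[
I(X_1, X_2; Z \mid W) = I(X_1; Z \mid W) + I(X_2; Z \mid X_1, W).
\]
Applying this with $X_1 = \mu_{*}$, $X_2 = \mu_s$, $Z = (A_{s,t}, Y_{s,t})$ and $W = \tau_{1:s,t}$ yields the first equation directly, once we match notation ($I_{s,t}(\cdot;\cdot) = I(\cdot;\cdot\mid\tau_{1:s,t})$ and its $\mu_{*}$-conditional analogue).

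For the second identity, I would unfold the definitions to observe that $h_{s,t}(\mu_s)$ is simply the conditional Shannon entropy $H(\mu_s\mid\tau_{1:s,t})$, while $h_{s,t+1}(\mu_s) = H(\mu_s\mid\tau_{1:s,t+1}) = H(\mu_s\mid\tau_{1:s,t}, A_{s,t}, Y_{s,t})$, since by construction $\tau_{1:s,t+1}$ augments $\tau_{1:s,t}$ precisely by $(A_{s,t}, Y_{s,t})$. Then the textbook identity $I(X;Y\mid W) = H(X\mid W) - H(X\mid Y, W)$, taken with $X = \mu_s$, $Y = (A_{s,t}, Y_{s,t})$, and $W = \tau_{1:s,t}$, gives the claim immediately.

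There is no real obstacle here; the only subtlety is bookkeeping around the random conditioning. Concretely, one should verify that the $\E_{s,t}[-\log\mathbb{P}_{s,t}(\cdot)]$ definitions of $h_{s,t}$ given at the start of the appendix coincide with the conditional entropies $H(\cdot\mid\tau_{1:s,t})$ used in the identities above, after which both equalities follow from classical information theory (e.g., Cover \& Thomas) applied in the conditional form. I would present the argument in two short paragraphs, one per identity, with the preliminary observation about the definitions stated once up front.
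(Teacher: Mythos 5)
Your proof is correct and is essentially the argument the paper intends: the paper states \cref{prop:chain} without a separate proof, treating both equalities as the standard chain rule for mutual information and the entropy-difference identity applied under the trajectory-conditioned law $\P(\cdot\mid\tau_{1:s,t})$, exactly as you do. The one caveat---shared by the paper's own statement---is that since $h_{s,t+1}(\mu_s)$ is defined as a $\tau_{1:s,t+1}$-measurable random variable, the second identity holds literally only after averaging out the new observation, i.e.\ with $\E_{s,t}\left[h_{s,t+1}(\mu_s)\right]$ in place of $h_{s,t+1}(\mu_s)$, which is precisely the bookkeeping point you flag about matching the $\E_{s,t}[-\log \mathbb{P}_{s,t}(\cdot)]$ definitions with conditional entropies.
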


\subsection{Bayesian UCB}\label{sec:ucb-proofs}

Let's consider a UCB with $U_{s,t}(a):=\E_{s,t}[Y_{s,t}(a)]+\Gamma_{s,t}\sqrt{\Info_{s,t}(\mu_{s};A_{s,t},Y_{s,t}(a))}$, where $\E_{s,t}[\cdot]=\E[\cdot|\tau_{1:s,t}]$. We call this \ucb. The $\E_{s,t}[Y_{s,t}(a)]$ is calculated based on the posterior of $\mu_s$ at round $t$. In the linear bandits setting, $\E_{s,t}[Y_{s,t}(a)]=a\T\hat{\mu}_{s,t}$ where $\hat{\mu}_{s,t}\sim\Ngaus(\hat{\theta}_{s,t},\hat{\Sigma}_{s,t})$ is a sample from the posterior of $\mu$, for 
\begin{align*}
  \hat{\theta}_{s,t}
  = \hat{\Sigma}_{s,t}\left((\Sigma_{0}+\hat{\Sigma}_{s})^{-1}\hat{\mu}_{s} + \sum_{\ell=1}^{t-1} A_{s,\ell} Y_{s,\ell}\right)\,, \quad
  \hat{\Sigma}_{s,t}^{-1}
  = (\Sigma_{0}+ \hat{\Sigma}_{s})^{-1} + \sum_{\ell=1}^{t-1}\tfrac{A_{s,\ell}A^T_{s,\ell}}{\sigma^2}\,,
\end{align*}

The following holds for \ucb algorithm, which is the analogous of Lemma 3 of \citet{basu2021no} for TS. 

\begin{restatable}[]{lem}{concentration}
\label{lemm:concentration} 
For all tasks $s\in [m]$, rounds $t\in [n]$, and any $\delta\in (0,1]$, for \cref{alg:B-MBAI} with \ucb, \cref{eq:info} holds almost surely for
\begin{align*}
  \Gamma_{s,t}
  = 4\sqrt{\frac{\sigma^2_{\max}(\hat{\Sigma}_{s,t})}{\log(1+\sigma^2_{\max}(\hat{\Sigma}_{s,t})/\sigma^2)}\log\frac{4|\ca{A}|}{\delta}}\,, \quad
  \beta_{s,t}
  = \frac{\delta}{2}\max_{a\in\A}\|a\|_2\E_{s,t}[\|\mu_s\|_2]\,.
\end{align*} 
Moreover, for each task $s$, the following history-independent bound holds almost surely,
\begin{align}
  \sigma^2_{\max}(\hat{\Sigma}_{s,t})
  \leq \lambda_1(\Sigma_{0})\left(1+\tfrac{\lambda_{1}(\Sigma_q)(1+  \tfrac{ \sigma^2 }{\eta\lambda_{1}(\Sigma_0)})}{\lambda_{1}(\Sigma_0) +  \sigma^2/ \eta + \sqrt{s}\lambda_{1}(\Sigma_q)}\right)\,.
  \label{eq:smsqbound}
\end{align}
\end{restatable}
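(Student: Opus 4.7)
The proof naturally splits into the UCB concentration inequality yielding \eqref{eq:info} and the history-independent covariance bound \eqref{eq:smsqbound}. For the first part, the plan is a standard optimistic-UCB decomposition
\[
\mu(A_s^*)-\mu(A_{s,t}) = \bigl[\mu(A_s^*)-U_{s,t}(A_s^*)\bigr] + \bigl[U_{s,t}(A_s^*)-U_{s,t}(A_{s,t})\bigr] + \bigl[U_{s,t}(A_{s,t})-\mu(A_{s,t})\bigr].
\]
The middle bracket is non-positive since \ucb pulls $A_{s,t}=\arg\max_a U_{s,t}(a)$. Taking the conditional expectation $\E_{s,t}[\cdot]$, the third bracket collapses to $\Gamma_{s,t}\sqrt{\Info_{s,t}(\mu_s;A_{s,t},Y_{s,t})}$ exactly, by the definition of the UCB bonus and the identity $\E_{s,t}[\mu(A_{s,t})]=\E_{s,t}[Y_{s,t}(A_{s,t})]$. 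So everything reduces to controlling the first bracket.

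For that bracket, I would use that, conditional on $\tau_{1:s,t}$, the posterior over $\mu_s$ is $\Ngaus(\hat{\theta}_{s,t},\hat{\Sigma}_{s,t})$, so for any fixed arm $a$ the scalar $a^\top\mu_s-\E_{s,t}[a^\top\mu_s]$ is centered Gaussian with variance $a^\top\hat{\Sigma}_{s,t}a\leq \sigma^2_{\max}(\hat{\Sigma}_{s,t})\|a\|_2^2$. Coupled with the linear-Gaussian closed form $\Info_{s,t}(\mu_s;A_{s,t},Y_{s,t}(a))=\tfrac{1}{2}\log(1+a^\top\hat{\Sigma}_{s,t}a/\sigma^2)$ and monotonicity of $x\mapsto x/\log(1+x/\sigma^2)$, a Gaussian tail bound together with a union bound over the $|\A|$ arms shows that the stated choice of $\Gamma_{s,t}$ makes $U_{s,t}(a)\geq \mu(a)$ simultaneously for all $a$ with probability at least $1-\delta/2$. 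On the complementary bad event I crudely bound $\mu(A_s^*)-\mu(A_{s,t})$ by $\max_a\|a\|_2\|\mu_s\|_2$, and taking conditional expectation produces exactly the residual $\beta_{s,t}=\tfrac{\delta}{2}\max_a\|a\|_2\E_{s,t}[\|\mu_s\|_2]$. The constants inside $\Gamma_{s,t}$ are then bookkeeping from matching the two-sided Gaussian tail with the union-bound factor.

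For the covariance bound \eqref{eq:smsqbound}, the within-task piece is elementary: from the update $\hat{\Sigma}_{s,t}^{-1}=(\Sigma_0+\hat{\Sigma}_s)^{-1}+\sum_{\ell<t}A_{s,\ell}A_{s,\ell}^\top/\sigma^2\succeq(\Sigma_0+\hat{\Sigma}_s)^{-1}$, so $\hat{\Sigma}_{s,t}\preceq\Sigma_0+\hat{\Sigma}_s$ and hence $\sigma^2_{\max}(\hat{\Sigma}_{s,t})\leq\lambda_1(\Sigma_0)+\sigma^2_{\max}(\hat{\Sigma}_s)$. The remaining work is a worst-case spectral bound on the meta-posterior covariance $\hat{\Sigma}_s$ uniformly over all pull histories of the preceding $s-1$ tasks. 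I would import this directly from the Thompson-sampling analysis in \citet{basu2021no}: the Schur-complement correction $V_\ell/\sigma^2-(V_\ell/\sigma^2)(\Sigma_0^{-1}+V_\ell/\sigma^2)^{-1}(V_\ell/\sigma^2)$ in the update of $\hat{\Sigma}_s^{-1}$ is operator-monotone in $V_\ell$, so the slowest spectral contraction occurs when information concentrates in a single eigendirection, and the floor parameter $\eta$ on per-task information gain yields the stated $\sqrt{s}\lambda_1(\Sigma_q)$ term in the denominator.

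The main obstacle, as in comparable meta-learning arguments, is the covariance bound rather than the concentration step: correctly identifying the $\eta$-dependent floor on per-task information gain, and verifying that the Schur-complement update contracts at rate $1/\sqrt{s}$ in the worst spectral direction, require delicate monotonicity arguments in the semi-definite order and careful tracking of how $\Sigma_0$ and $\Sigma_q$ interact across tasks. The Gaussian concentration and optimistic decomposition pieces are essentially textbook once the closed-form mutual information for linear Gaussian bandits is plugged in.
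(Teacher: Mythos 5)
Your proposal is correct and follows essentially the same route as the paper's proof: a high-probability posterior concentration event (Gaussian tails plus a union bound over the $|\A|$ arms, with the closed-form mutual information $\tfrac{1}{2}\log(1+a\T\hat{\Sigma}_{s,t}a/\sigma^2)$ obtained via the matrix determinant lemma), UCB optimism on that event, and a Cauchy--Schwarz crude bound weighted by the $\delta/2$ failure probability giving exactly $\beta_{s,t}$. The paper likewise does not reprove the history-independent covariance bound \eqref{eq:smsqbound} but imports it directly from Lemma 3 of \citet{basu2021no}, just as you propose.
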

\begin{proof}[Proof of \cref{lemm:concentration}.]
    
    Let's define 
    \[\ca{M}_{s,t}:=\bigg\{\mu:\abs{a\T\mu-\E_{s,t}[a\T\mu_{s}]}\leq \frac{\Gamma_{s,t}}{2}\sqrt{\Info_{s,t}(\mu_{s};A_{s,t},Y_{s,t}(a))}\bigg\}\]
    
    We can characterize the trajectory dependent conditional mutual entropy of $\mu_s $ given the history $\tau_{1:s,t}$ as 
    \begin{align*}
        I_{s,t}(\mu_s;A_{s,t},Y_{s,t})
        &= h_{s,t}(\mu_s) - h_{s,t+1}(\mu_s )\\
        &= \tfrac{1}{2}\log(\det(2\pi e (\hat{\Sigma}_{s,t-1}))) - \tfrac{1}{2}\log(\det(2\pi e \hat{\Sigma}_{s,t}))\\
        &= \tfrac{1}{2}\log(\det(\hat{\Sigma}_{s,t-1}\hat{\Sigma}^{-1}_{s,t}))\\
        &= \tfrac{1}{2}\log\left(\det\left(I+\hat{\Sigma}_{s,t-1}\tfrac{A_{s,t}A_{s,t}^T}{\sigma^2}\right)\right)\\
        &= \tfrac{1}{2}\log\left(\det\left(1+\tfrac{A_{s,t}^T\hat{\Sigma}_{s,t-1}A_{s,t}}{\sigma^2}\right)\right)
    \end{align*} 
    Where the last equality uses the matrix determinant lemma.\footnote{For an invertible square matrix $A$,  and vectors $u$ and $v$, by matrix determinant lemma we know
    $\det \left(A + uv^T\right)=\left(1+ v^T A^{-1} u \right)\,\det \left(A\right).$ We use $A = I$, $u = \hat{\Sigma}_{s,t-1}A_{s,t}$, and $v = A_{s,t}/\sigma^2$.}
    Recall that $\sigma^2_{\max}(\hat{\Sigma}_{s,t})  =\max_{a\in \ca{A}} a^T \hat{\Sigma}_{s,t} a$ for all $s\leq m$ and $t\leq n$. For $\delta\in (0,1]$, let 
    $$\Gamma_{s,t} = 4\sqrt{\frac{\sigma^2_{\max}(\hat{\Sigma}_{s,t-1})}{\log(1+\sigma^2_{\max}(\hat{\Sigma}_{s,t-1})/\sigma^2 )}\log(\tfrac{4|\ca{A}|}{\delta}}).$$ 
    Now  it follows from \citet{lu2019information} Lemma 5 that for the $\Gamma_{s,t}$ defined as above we have 
    $$P_{s,t}(\mu_s \in \ca{M}_{s,t}) \geq 1- \delta/2.$$
    
    Next we bound the gap as follows.
    \begin{align}
        \E[\Delta_{s,t}]&=\E[\Indic\{\mu_s\in\ca{M}_{s,t}\}({A^*_{s}}\T\mu_{s}-A_{s,t}\T\mu_s)]\ +\ \E[\Indic\{\mu_s\notin\ca{M}_{s,t}\}({A^*_{s}}\T\mu_{s}-A_{s,t}\T\mu_s)]\nonumber%
    \end{align}
    We know
    \begin{align}
        \E[\Indic\{\mu_s\in\ca{M}_{s,t}\}({A^*_{s}}\T\mu_{s}-A_{s,t}\T\mu_s)] &\leq \E[\Indic\{\mu_s\in\ca{M}_{s,t}\}({A^*_{s}}\T\mu_{s}-U_{s,t}(A^*_s)+U_{s,t}(A_{s,t})-A_{s,t}\T\mu_s)] \nonumber
        \\
        &\leq \E[\Indic\{\mu_s\in\ca{M}_{s,t}\}(U_{s,t}(A_{s,t})-A_{s,t}\T\mu_s)] \nonumber
        \\
        &\leq \E_{s,t}\Big[\sum_{a\in\A}\Indic\{A_{s,t}=a\}\Gamma_{s,t}\sqrt{\Info_{s,t}(\mu_{s};a,Y_{s,t}(a))}\Big] \nonumber
        \\
        &\leq  \Gamma_{s,t}\sqrt{\Info_{s,t}(\mu_{s};A_{s,t}, Y_{s,t})} \nonumber
    \end{align}
    where the last inequality used the same argument as in Lemma 3 of \citet{lu2019information} based on the conditional independence of $A_{s,t}$ and $\mu_s$ given $\tau_{1:s,t}$. We also know
    \begin{align}
        \E[\Indic\{\mu_s\notin\ca{M}_{s,t}\}({A^*_{s}}\T\mu_{s}-A_{s,t}\T\mu_s)]\leq \tfrac{\delta}{2} \E_{s,t}[\max_{a\in\A}|a\T\mu_s|]=\tfrac{\delta}{2}\max_{a\in\A}\vnormf{a}_2\E_{s,t}[\vnormf{\mu_s}_2]\nonumber
    \end{align}
    
    The second part of the proof is due to \citet{basu2021no} Lemma 3.
\end{proof}

Note that \cref{lem:infoBd} applies generically to any algorithm including \ucb, as we do not use the properties of the algorithm in its proof. 
\begin{restatable}[\textbf{Linear bandit, UCB}]{theorem}{linearinfoUCB}\label{thm:linearinfoUCB}
    The meta simple regret of \bMetaSRM with \ucb as its  with forced exploration is bounded for any $\delta \in (0,1]$ as
    \begin{align*}
           \BSR(m,n)\leq c_1\sqrt{d m /n}+ (m+c_2)\SR_{\delta}(n) + c_3 \sqrt{m/n}
    \end{align*}
    where $c_1=O(\sqrt{\log(K/\delta)\log m})$, $c_2=O(\log m)$, and $c_3$ is a constant in $m$ and $n$. Also, $\SR_{\delta}(n)$ is a special per-task simple regret which is bounded as $\SR_{\delta}(n)\leq c_4\sqrt{d /n}$, where $c_4=O\left(\sqrt{\log(K/\delta)\log n}\right)$.
\end{restatable}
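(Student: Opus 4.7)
The plan is to specialize the generic information-theoretic bound in \cref{lem:infoBd} to linear Gaussian bandits with BayesUCB as the base algorithm, mirroring the structure of \citet{basu2021no} Theorem 5 for TS but substituting \cref{lemm:concentration} for its TS analogue. Invoking \cref{lem:infoBd} produces three summands: a meta-learning term proportional to $\Gamma\sqrt{m\,\Info(\theta_*;\tau_{1:m})/n}$, a per-task learning term $\sum_{s=1}^m\Gamma_s\sqrt{\Info(\mu_s;\tau_s\mid\theta_*,\tau_{1:s-1})/n}$, and a residual $\sum_{s,t}\E[\beta_{s,t}]/n$. Under BayesUCB, \cref{lemm:concentration} supplies the explicit forms of $\Gamma_{s,t}$ and $\beta_{s,t}$, together with the uniform envelope \eqref{eq:smsqbound} on $\sigma^2_{\max}(\hat{\Sigma}_{s,t})$ guaranteed by forced exploration, which shrinks as $1/\sqrt{s}$.

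Next I would specialize the mutual information terms using standard Gaussian determinant-trace computations: $\Info(\theta_*;\tau_{1:m})=O(d\log m)$ and $\Info(\mu_s;\tau_s\mid\theta_*,\tau_{1:s-1})=O(d\log n)$. Combined with the uniform bound $\Gamma=O(\sqrt{\log(K/\delta)})$ obtained from \eqref{eq:smsqbound} after forced exploration, the first summand yields $c_1\sqrt{dm/n}$ with $c_1=O(\sqrt{\log(K/\delta)\log m})$. For the second summand, I would recognize $\Gamma_s\sqrt{\Info(\mu_s;\tau_s\mid\theta_*,\tau_{1:s-1})/n}$ as essentially the per-task regret rate $\SR_\delta(n)=O(\sqrt{d\log n/n})$ up to the task-dependent factor $\sqrt{\sigma^2_{\max}(\hat{\Sigma}_{s,t})}$; splitting $s\in[m]$ into an initial block of size $O(\log m)$ where the covariance envelope has not yet contracted and a bulk block where it is $O(1)$ produces the $(m+c_2)\SR_\delta(n)$ form with $c_2=O(\log m)$. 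For the residual, I would bound $\E_{s,t}[\|\mu_s\|_2]=O(\sqrt{d})$ via Jensen's inequality on the Gaussian posterior moments, giving $\sum_{s,t}\E[\beta_{s,t}]/n\leq (\delta/2)\max_a\|a\|_2\cdot O(\sqrt{d})\cdot m$, and then balance $\delta$ against $m$ and $n$ so this collapses into the stated $c_3\sqrt{m/n}$ with $c_3$ independent of $m,n$.

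The chief obstacle will be the careful bookkeeping of the forced-exploration warm-up: because $\sigma^2_{\max}(\hat{\Sigma}_{s,t})$ degrades only as $1/\sqrt{s}$ in \eqref{eq:smsqbound}, the pre-factor $\Gamma_s$ inside the per-task information sum is not uniformly $O(\sqrt{\log(K/\delta)})$ for the first few tasks, and absorbing these tasks into the additive $c_2\log m$ correction (rather than letting them inflate the leading $m\,\SR_\delta(n)$ factor) requires the same split-and-envelope argument used in \citet{basu2021no} Theorem 5. A secondary subtlety is the choice of $\delta$: it enters logarithmically inside $\Gamma$ through $\log(4|\A|/\delta)$ but linearly in the $\beta$ residual, and matching the UCB-specific $c_3\sqrt{m/n}$ rate (tighter than the TS bound's $c_3dm/n$) relies on this tighter linear dependence in $\beta_{s,t}$ that \cref{lemm:concentration} delivers. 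Once these two bookkeeping points are handled, the remaining manipulations are routine applications of Cauchy--Schwarz and Gaussian mutual information identities.
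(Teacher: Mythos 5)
Your skeleton — instantiate \cref{lem:infoBd} with the \ucb constants from \cref{lemm:concentration} and the Gaussian mutual-information bounds of Lemma 4 in \citet{basu2021no} — is exactly the paper's route, and your treatment of the leading term $c_1\sqrt{dm/n}$ matches. But two of your bookkeeping steps do not go through. For the middle term you propose splitting the tasks into an initial block of size $O(\log m)$ where the covariance envelope ``has not yet contracted'' and a bulk where it is $O(1)$. The envelope \eqref{eq:smsqbound} decays only like $1/\sqrt{s}$, so after $k$ tasks the surviving correction factor is of order $1/\sqrt{k}$ and the bulk contributes an extra $O\big(m/\sqrt{k}\big)\,\SR_{\delta}(n)$; no choice of $k\leq m$ turns this into $O(\log m)\,\SR_{\delta}(n)$. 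The paper uses no block split at all: it linearizes the task-dependent factor in $\Gamma_s$ via $\sqrt{1+x}\leq 1+x/2$, pulls out the main $m\,\SR_{\delta}(n)$ term, and bounds the leftover sum $\sum_{s=1}^m\frac{\lambda_{1}(\Sigma_q)(\lambda_{1}(\Sigma_0)+\sigma^2/\eta)}{\lambda_{1}(\Sigma_0)+\sigma^2/\eta+\sqrt{s}\,\lambda_{1}(\Sigma_q)}$ by an integral, which yields an additive correction of order $\sqrt{m}$ tasks' worth of $\SR_{\delta}(n)$ (note the paper's own displayed derivation thus produces a $\sqrt{m}$-type correction rather than the $\log m$ advertised in the statement; your argument, as written, produces neither).

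For the residual term you bound each $\E_{s,t}[\|\mu_s\|_2]$ by $O(\sqrt{d})$, obtain a contribution of order $\delta m$, and then propose to ``balance $\delta$ against $m$ and $n$.'' That move is not available: the bound is asserted for every fixed $\delta\in(0,1]$ with $c_3$ constant in $m$ and $n$, and $\delta$ also enters $c_1$ and $c_4$ through $\log(K/\delta)$, so tuning $\delta$ as a function of the horizon would turn those constants into functions of $m,n$ and change the claimed form. The paper keeps $\delta$ fixed and instead aggregates the $\beta_{s,t}$ terms across tasks and rounds using $\E[\|\mu_s\|_2]=\sqrt{\|\mu_{*}\|_2^2+\tr{(\Sigma_q+\Sigma_0)}}$ from \eqref{eq:normMu}, collapsing the double sum so that, after dividing by $n$, the residual is $\frac{\delta}{2}\max_{a\in\A}\|a\|_2\sqrt{\tfrac{m}{n}\big(\|\mu_{*}\|_2^2+\tr{(\Sigma_q+\Sigma_0)}\big)}$, i.e. the stated $c_3\sqrt{m/n}$, with no trade-off on $\delta$. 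To repair your write-up you would need to replace the block split by the linearize-and-integrate step and replace the ``balance $\delta$'' step by this aggregation of the $\beta$ terms.
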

\begin{proof}[Proof of \cref{thm:linearinfoUCB}]
    As shown in Theorem 5 (linear bandits) of \citet{basu2021no}, for each $s$, we can bound w.p. $1$
    $$\Gamma_{s,t} \leq 4\sqrt{\frac{\lambda_1(\Sigma_{0})\left(1+\tfrac{\lambda_{1}(\Sigma_q)(1+  \tfrac{\sigma^2 / \eta}{\lambda_{1}(\Sigma_0)})}{\lambda_{1}(\Sigma_0) + \sigma^2 / \eta +  \sqrt{s}\lambda_{1}(\Sigma_q)}\right)}{\log\left(1+\tfrac{\lambda_1(\Sigma_0)}{\sigma^2}\left(1+\tfrac{\lambda_{1}(\Sigma_q)(1+  \tfrac{\sigma^2 / \eta}{\lambda_{1}(\Sigma_0)})}{\lambda_{1}(\Sigma_0) + \sigma^2 / \eta +  \sqrt{s}\lambda_{1}(\Sigma_q)}\right)\right)}\log(4|\ca{A}|/\delta)}.$$
    This is true by using the upper bounds on $\sigma_{max}^2(\hat{\Sigma}_{s,t})$ in Lemma~\ref{lemm:concentration}, and  because the function $\sqrt{x/\log(1+ax)}$ for $a>0$ increases with $x$. 
    Therefore, we have the bounds $\Gamma_{s,t} \leq \Gamma_s$ w.p. $1$ for all $s$ and $t$ by using appropriate $s$, and by setting $s=0$ we obtain $\Gamma$.  
    
    For a matrix $A\in\R^{d\times d}$, let $\lambda_{\ell}(A)$ denote its $\ell$-th largest eigenvalue for $\ell\in[d]$.
    By \cref{lem:infoBd} the following holds for any  $\delta > 0$
    \begingroup
    \allowdisplaybreaks
    \begin{align*}
      &\BSR(m,n) \leq  \Gamma\sqrt{\frac{m}{n}\;\Info(\theta_*;\tau_{1:m})}+\sum_{s=1}^m\Gamma_s\sqrt{\frac{\Info(\mu_s;\tau_s|\theta_*,\tau_{1:s-1})}{n}} +\sum_{s=1}^m\sum_{t=1}^n\frac{\E\beta_{s,t}}{n}\\
      &{\color{red} \leq} 4
      \sqrt{C_1\log(4\mid \ca{A}\mid /\delta)} 
      \sqrt{\frac{m}{n} \tfrac{d}{2}\log\left(1+  \frac{mn \lambda_1(\Sigma_q)}{n\lambda_{d}(\Sigma_0) + \sigma^2}\right)}
      \\
      &+ \sum_{s=1}^{m} 4\sqrt{\frac{\lambda_1(\Sigma_{0}){\color{cyan}\left(1+\tfrac{\lambda_{1}(\Sigma_q)(1+  \tfrac{\sigma^2 / \eta}{\lambda_{1}(\Sigma_0)})}{\lambda_{1}(\Sigma_0) + \sigma^2 / \eta +  \sqrt{s}\lambda_{1}(\Sigma_q)}\right)}}{\log\left(1+\tfrac{\lambda_1(\Sigma_0)}{\sigma^2}{\color{blue}\left(1+\tfrac{\lambda_{1}(\Sigma_q)(1+  \tfrac{\sigma^2 / \eta}{\lambda_{1}(\Sigma_0)})}{\lambda_{1}(\Sigma_0) + \sigma^2 / \eta +  \sqrt{s}\lambda_{1}(\Sigma_q)}\right)}\right)}\log(4|\ca{A}|/\delta)} \sqrt{\tfrac{1}{n}\tfrac{d}{2}\log\left(1 + n\frac{\lambda_1(\Sigma_0)}{\sigma^2}\right) }
       \tag{$\Info$ bounds from Lemma 4 \citet{basu2021no}}
      \\
      & + \frac{\delta}{2n}\max_{a\in\A}\|a\|_2 \sum_{s=1}^{m}\sum_{t=1}^{n} \E\E_{s,t}[\|\mu_s \|_2]\\
      &{\color{red} \leq} 4
      \sqrt{
      C_1\log(4|\ca{A}|/\delta)} 
      \sqrt{\frac{m}{n} \tfrac{d}{2}\log\left(1+  \frac{mn \lambda_1(\Sigma_q)}{n\lambda_{d}(\Sigma_0) + \sigma^2}\right)}
      \\
      &+  \left(m+ {\color{cyan} \tfrac{1}{2 \lambda_{1}(\Sigma_0)} \sum_{s=1}^{m} \tfrac{\lambda_{1}(\Sigma_q)( \lambda_{1}(\Sigma_0)+  \sigma^2/ \eta )}{\lambda_{1}(\Sigma_0) + \sigma^2/\eta  +  \sqrt{s}\lambda_{1}(\Sigma_q)}}\right) \times 
      \left(4 \sqrt{C_2\log(4|\ca{A}|/\delta)} \sqrt{\tfrac{1}{n} \tfrac{d}{2}\log\left(1 + n\frac{\lambda_1(\Sigma_0)}{\sigma^2}\right) } \right) \tag{Remove {\color{blue} highlighted}, {\color{cyan} $\sqrt{1+x}\leq 1+ x/2$ for all $x\geq 1$}}\\
      & + \frac{\delta}{2n}\max_{a\in\A}\|a\|_2 \sqrt{mn(\|\mu_{*}\|^2_2 + \tr(\Sigma_q + \Sigma_0))}\tag{\cref{eq:normMu}}
      \\
      &{\color{red} \leq} 4
      \sqrt{C_1\log(4|\ca{A}|/\delta)} 
      \sqrt{\frac{m}{n} \tfrac{d}{2}\log\left(1+  \frac{mn \lambda_1(\Sigma_q)}{n\lambda_{d}(\Sigma_0) + \sigma^2}\right)}
      \\
      &+  \left(m+ {\color{cyan}  (1+  \tfrac{\sigma^2/\eta }{\lambda_{1}(\Sigma_0)})\sqrt{m}}\right)\times 
      \left( 4\sqrt{C_2\log(4|\ca{A}|/\delta)} \sqrt{\tfrac{1}{n} \tfrac{d}{2}\log\left(1 + n\frac{\lambda_1(\Sigma_0)}{\sigma^2}\right) } \right) \tag{{\color{cyan} Integral}}\\
      & + \frac{\delta}{2}\max_{a\in\A}\|a\|_2 \sqrt{\tfrac{m}{n}(\|\mu_{*}\|^2_2 + \tr(\Sigma_q + \Sigma_0))} 
    \end{align*}
    \endgroup
    where 
    \begin{align*}
        C_1 &= \frac{\lambda_1(\Sigma_{q}) + \lambda_1(\Sigma_{0})}{\log(1+(\lambda_1(\Sigma_q)+\lambda_1(\Sigma_{0}))/\sigma^2 )}\\
        C_2 &=\frac{\lambda_1(\Sigma_{0})}{\log\left(1+\tfrac{\lambda_1(\Sigma_0)}{\sigma^2}\right)}
        \;.
    \end{align*}
    The first inequality substitutes $\Info_{s,t}$ terms by the appropriate bounds from Lemma 4 \citet{basu2021no}. The second inequality first removes the part highlighted in blue (which is positive) inside the logarithm, and then uses the fact that $\sqrt{1+x}\leq 1+ x/2$ for all $x\geq 1$. We also use 
    \begin{align}
        \E[\|\mu_s \|_2] = \sqrt{\|\mu_{*}\|^2_2 + \tr(\Sigma_q + \Sigma_0)}\label{eq:normMu}\;.
    \end{align}
    The final inequality replaces the summation by an integral over $s$ and derives the closed form.

\end{proof}

\section{Method of Moments for The Bernoulli bandits}\label{app:MoM-Ber}

Based on the procedure explained in \cref{sec:Examp-Bern}, \expl samples arm 1 in the first $t_0$ rounds of first $m_0/K$ tasks, and arm 2 in the next $m_0/K$ tasks similarly, and so on for arm 3, 4, up to $K$. In other words, \expl samples arm $a\in[K]$ in the first $t_0$ rounds of the $a$'th batch of size $m_0/K$ tasks. Let $X_s$ denote the cumulative reward collected in the first $t_0$ rounds of task $s$. Then, the random variables $X_1, \cdots , X_{m_0/K}$ are i.i.d.\ draws from a Beta-Binomial distribution with parameters $(\alpha^*_1, \beta^*_1, t_0)$, where $t_0$ denotes the number of trials of the binomial component.

For arm 1, we can retrieve $\alpha^*_1, \beta^*_1$ based on the following equations stating the first and second moments of $X_s$,
\begin{align}
\label{eq:BernEst}
\E[X_s]=\frac{t_0\alpha^*_1}{\alpha^*_1+\beta^*_1},
\quad  \E[X_s^2]=\frac{t_0\alpha^*_1(t_0(1+\alpha^*_1)+\beta^*_1)}{(\alpha^*_1+\beta^*_1)(1+\alpha^*_1+\beta^*_1)}\;.
\end{align}
where we assume $t_0\geq 2$. Therefore, we can estimate the prior using estimates of $\E[X_s], \E[X_s^2]$, via the method of moments \citep{tripathi1994estimation}. In particular,
\begin{align}
    \hat{\alpha}^*_{1} &=\frac{t_0\E^2[X_s]-\E[X_s^2]\E[X_s]}{t_0(\E[X_s^2]-\E^2[X_s]-\E[X_s])+\E^2[X_s]}
    \\
    \widehat{\beta}^*_{1} &=\frac{(t_0-E[X_s])(\E[X_s]t_0-\E[X_s^2])}{t_0(\E[X_s^2]-\E^2[X_s]-\E[X_s])+\E^2[X_s]}
\label{eq:MoM}
\end{align}

For the rest of the arms, we can use a similar technique.

\section{Frequentist Analysis}\label{sec:frqPfs}

\todob{*** To be discussed at the next meeting***

I strongly believe that this section can be significantly simplified, because we do not need to go into the details of Simchowitz at all. Think of it as follows. For any task $\mu$, the cumulative regret of the algorithm with the misspecified prior decomposes into two components:

1) Cumulative regret of the algorithm that knows the prior. Let us call it $A(\mu)$.

2) Difference in cumulative rewards of the algorithm with the correct and misspecified priors. Let us call it $B(\mu)$.

By cumulative-to-simple regret reduction, we know that $A(\mu) + B(\mu)$ cumulative regret translates to $(A(\mu) + B(\mu)) / n$ simple regret, when the best arm is chosen proportionally to the number of pulls in $n$ rounds. By existing gap-free analyses of TS, we have a bound on $E_{\mu \sim P_*}[A(\mu)]$. By Simchowitz, we have a bound  on $E_{\mu \sim P_*}[B(\mu)]$.}

In this section, we provide the results needed in \cref{sec:frqAny} from \citet{simchowitz2021bayesian}. The rearrangement of these results here is helpful to understand the analysis of that paper and the way we use them. We let $P_{\theta, \alg(\theta')}(\mu, \tau_n)$ be the joint law over the task mean $\mu$ and the full trajectory of the task, $\tau_n$, when the prior parameter is $\theta$ while $\alg$ is initialized with prior parameter $\theta'$.
Note that since the posterior of $\mu$ given the trajectory of the algorithm, $\tau_n$, is conditionally independent of the algorithm, 
we can use $P_{\theta}(\mu|\tau_t):=P_{\theta,\alg(\theta')}(\mu|\tau_t)$ for any $\mu$ given the trajectory at round $t$.

We define the \emph{Monte Carlo} family of algorithms as follows.

\begin{definition}[Monte-Carlo algorithm~\citealt{simchowitz2021bayesian}]\label{defi:MC}
Given $\gamma>0$, any base algorithm \alg instantiated with the prior parameter $\theta$ is $\gamma$-Monte Carlo if for any $\theta'$ and trajectory $\tau_t,\;t\geq 1$, we have
\begin{align*}
    \tv\left(P_{\alg(\theta)}(A_t|\tau_t)\right.&\left.\parr P_{\alg(\theta')}(A_t|\tau_t)\right)\leq \gamma~\tv\left(P_{\theta}(\mu|\tau_t)\parr P_{\theta'}(\mu|\tau_t)\right)\;.
\end{align*}
where $P_{\alg(\theta)}(A_t|\tau_t)$ is the probability of choosing arm $A_t$ at round $t$ by $\alg(\theta)$, given the trajectory of the task up to the beginning of round $t$, and $P_{\theta}(\mu|\tau_t)$ is the posterior of the mean reward, given the trajectory up to round $t-1$ when the algorithm is initialized with prior $\theta$. 
\end{definition}

An important instance of Monte Carlo algorithms is TS, which is 1-Monte Carlo~\citep{simchowitz2021bayesian}. 

First, we recite the following proposition regarding the TV distance of the trajectories under different prior initialization. The TV distance between trajectories of the algorithm with correct prior and the same algorithm with an incorrect prior has the following upper bound, which results from \cref{defi:MC}.

\begin{proposition}[TV Distance of Two Trajectories, Proposition 3.4., \citet{simchowitz2021bayesian}]\label{prop:TV}
Let $\alg$ be a $\gamma$-Monte Carlo algorithm for $n\in \mathbb{N}$ rounds. Then
\begin{align*}
    \tv\left(P_{\theta_*,\alg(\theta_*)}(\mu,\tau_n)\parr P_{\theta_*,\alg(\theta_*')}(\mu,\tau_n)\right)\leq 2\gamma n\tv(P_{ \theta_*}\parr P_{\theta_*'})
\end{align*}\jtodo[inline]{the n factor comes in.}
holds for any $\mu$ and $\tau_n$.
\end{proposition}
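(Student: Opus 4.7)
The plan is to unroll both joint laws by the chain rule, telescope over the $n$ rounds with a hybrid argument, and use the $\gamma$-Monte Carlo property to convert the per-round action-distribution gap into a posterior-distribution gap that is ultimately controlled by $\tv(P_{\theta_*}\parr P_{\theta_*'})$.

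First, I would write both joints as
\[
P_{\theta_*,\alg(\theta)}(\mu,\tau_n) = P_{\theta_*}(\mu)\prod_{t=1}^{n}P_{\alg(\theta)}(A_t\mid\tau_{t-1})\,L(Y_t\mid A_t,\mu), \qquad \theta\in\{\theta_*,\theta_*'\},
\]
where $L$ is the common reward likelihood. The two joints share the prior factor and every reward-likelihood factor, and differ only in the $n$ action factors. Introducing hybrids $P^{(k)}$ that use $P_{\alg(\theta_*)}$ at rounds $\leq k$ and $P_{\alg(\theta_*')}$ at rounds $>k$, so that $P^{(n)}$ and $P^{(0)}$ recover the two joints in the statement, the triangle inequality yields $\tv(P^{(n)}\parr P^{(0)})\leq\sum_{k=1}^{n}\tv(P^{(k)}\parr P^{(k-1)})$.

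Second, $P^{(k)}$ and $P^{(k-1)}$ agree on everything except the policy factor at round $k$. Integrating out $\mu$, $Y_k$, and the suffix $(A_{k+1:n},Y_{k+1:n})$, whose conditional law given $\tau_k$ is identical under both hybrids, collapses the TV to
\[
\tv(P^{(k)}\parr P^{(k-1)}) = \mathbb{E}_{\tau_{k-1}\sim R}\!\left[\tv\!\left(P_{\alg(\theta_*)}(\cdot\mid\tau_{k-1})\parr P_{\alg(\theta_*')}(\cdot\mid\tau_{k-1})\right)\right],
\]
where $R$ is the common marginal of $\tau_{k-1}$ under either hybrid (produced by prior $\theta_*$ with policy $\alg(\theta_*)$ on the first $k-1$ rounds). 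The $\gamma$-Monte Carlo property of \cref{defi:MC} then upper bounds the inner TV by $\gamma\,\tv(P_{\theta_*}(\mu\mid\tau_{k-1})\parr P_{\theta_*'}(\mu\mid\tau_{k-1}))$, where the posterior of $\mu$ depends on the prior $\theta$ only through the reward likelihood since the policy factors cancel in Bayes' rule.

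Third, I would bound the expected posterior-TV by the prior-TV. Defining $Q_i(\mu,\tau):=P_{\theta_i}(\mu)\prod_{s<k}P_{\alg(\theta_*)}(A_s\mid\tau_{s-1})L(Y_s\mid A_s,\mu)$ for $i\in\{\theta_*,\theta_*'\}$, the two joints share the same $\tau\mid\mu$ conditional, so $\tv(Q_{\theta_*}\parr Q_{\theta_*'})=\tv(P_{\theta_*}\parr P_{\theta_*'})$, and the Bayesian posteriors $Q_i(\mu\mid\tau)$ coincide with $P_{\theta_i}(\mu\mid\tau)$. A one-line triangle inequality
\[
\bigl|Q_{\theta_*}(\tau)\bigl(Q_{\theta_*}(\mu\mid\tau){-}Q_{\theta_*'}(\mu\mid\tau)\bigr)\bigr| \leq \bigl|Q_{\theta_*}(\mu,\tau){-}Q_{\theta_*'}(\mu,\tau)\bigr| + Q_{\theta_*'}(\mu\mid\tau)\bigl|Q_{\theta_*}(\tau){-}Q_{\theta_*'}(\tau)\bigr|
\]
integrated over $(\mu,\tau)$ then yields $\mathbb{E}_{\tau\sim Q_{\theta_*}}\!\left[\tv(Q_{\theta_*}(\mu\mid\tau)\parr Q_{\theta_*'}(\mu\mid\tau))\right]\leq 2\tv(P_{\theta_*}\parr P_{\theta_*'})$. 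Summing over $k=1,\dots,n$ produces the claimed factor $2\gamma n$. The main obstacle is this last step: since conditioning can strictly increase TV in general, one must explicitly exploit the shared-likelihood structure of $Q_{\theta_*}$ and $Q_{\theta_*'}$, and this is precisely where the factor of $2$ in the proposition is born.
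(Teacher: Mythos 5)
Your argument is correct: the hybrid telescoping over the $n$ policy factors, the reduction of each adjacent pair to an expected per-round action TV, the $\gamma$-Monte Carlo step, and the expected-posterior-TV $\leq 2\,\tv(P_{\theta_*}\parr P_{\theta_*'})$ bound (which is where the factor $2$ arises) together give exactly the claimed $2\gamma n$ bound. The paper itself does not prove \cref{prop:TV} but defers to Proposition 3.4 of \citet{simchowitz2021bayesian}, and your proof is essentially a faithful reconstruction of that argument, so there is nothing to add.
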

See Proposition 3.4. of \citet{simchowitz2021bayesian} for the proof.

We also need the following definition to state the next lemma for bounding the regret of our algorithm.

\begin{definition}[Upper Tail Bound, Definition B.2., \citet{simchowitz2021bayesian}]\label{defi:UTB}
Let $X$ be a non-negative random variable defined on probability space $(\Omega,\ca{F})$ with probability law $P$ and expectation $\E[X]<\infty$, and $Y\in[0,1]$ also be another random variable defined on the same probability space, then Upper Tail Bound is defined as 
\begin{align}
    \Psi_X(p)&:=\frac{1}{p}\sup_{Y}\E[XY]\nonumber\\&\text{s.t.}~ \E[Y]\leq p\nonumber
\end{align}
For $p> 1$ we extend the definition by setting $\Psi_X(p)=\E[X]$.
\end{definition}

\begin{lemma}[Relative Regret]\label{lem:MR}
    Let $\Alg= \alg(\theta_*)$ be an algorithm with prior parameter $\theta_*$ and $\Alg'=\alg(\theta_*')$ be the same with different prior parameter $\theta_*'$. Then the difference between their simple regrets in a task with $n$ rounds coming from prior $P_{\theta_*}$ is bounded as follows 
    \begin{align*}
        \E_{\mu\sim P_{\theta_*}}\E[\mu(\Ahat_{\Alg})-\mu(\Ahat_{\Alg'})]\leq \delta\Psi_{\theta_*}(\delta)
    \end{align*}
    when $\delta=\tv(P_{\theta_*,\Alg}\parr P_{\theta_*,\Alg'})$ and $\Psi_{\theta_*}(p):=\Psi_{\diam(\mu)}(p)$ for $\mu\sim P_{\theta_*}$.
\end{lemma}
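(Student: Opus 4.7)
The plan is to express the gap between the two expected recommendations as a single expectation under a coupling, bound the integrand on the disagreement event by $\diam(\mu)$, and then invoke the upper tail bound $\Psi$ from \cref{defi:UTB}. Since the recommendation rule $\rho$ depends only on $\tau_n$ and is shared by $\Alg$ and $\Alg'$, I would first define the common deterministic functional $g(\mu, \tau_n) := \sum_{a \in \A} \rho(a\mid \tau_n)\,\mu(a)$, which lies in $[\inf_a \mu(a), \sup_a \mu(a)]$. This lets me rewrite the target gap as $\E_{P_{\theta_*,\Alg}}[g(\mu,\tau_n)] - \E_{P_{\theta_*,\Alg'}}[g(\mu,\tau_n)]$, and in particular $|g(\mu,\tau_n) - g(\mu,\tau_n')| \le \diam(\mu)$ whenever $\mu$ is shared across the two trajectories.

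Next, since both joint laws share the marginal $P_{\theta_*}$ on $\mu$ (only the acting policy differs), I would use the chain-rule identity
\begin{align*}
\tv(P_{\theta_*,\Alg} \parr P_{\theta_*,\Alg'}) = \E_{\mu\sim P_{\theta_*}}\bigl[\tv(P(\tau_n\mid\mu,\Alg) \parr P(\tau_n\mid\mu,\Alg'))\bigr] = \delta,
\end{align*}
and construct a coupling that keeps $\mu$ shared while taking the maximal coupling of $\tau_n\mid\mu$ conditionally. Under this coupling, $Y := \mathbf{1}\{\tau_n \ne \tau_n'\}$ is a $[0,1]$-valued random variable with $\E[Y] = \delta$, in general correlated with $\mu$.

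Finally, writing the gap as $\E[(g(\mu,\tau_n) - g(\mu,\tau_n'))\,\mathbf{1}\{\tau_n \ne \tau_n'\}]$ (the summand vanishes on the agreement event) and bounding it by $\diam(\mu)\cdot Y$, applying the definition of the upper tail bound with $X = \diam(\mu)$ and $p = \delta$ yields $\E[\diam(\mu)\,Y] \le \delta\,\Psi_{\diam(\mu)}(\delta) = \delta\,\Psi_{\theta_*}(\delta)$, which is exactly the claim. The main subtlety — more bookkeeping than a genuine obstacle — is folding the recommendation-rule randomness into the common functional $g$ and verifying the joint-TV-equals-conditional-TV identity, which holds precisely because only the acting policy, not the distribution of $\mu$, changes between the two laws; once those are in place, the proof reduces to one application of the $\Psi$ definition.
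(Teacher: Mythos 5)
Your proof is correct and follows essentially the same route as the paper's: couple the two joint laws so that $\mu$ is shared and the trajectories disagree with probability $\delta$, bound the gap on the disagreement event by $\diam(\mu)$, and apply the definition of the upper tail bound to get $\delta\Psi_{\theta_*}(\delta)$. The differences are cosmetic: you construct the coupling by hand from the TV chain rule and a conditional maximal coupling rather than invoking \cref{lem:CoupleTransport}, and you fold the randomized recommendation rule into the functional $g(\mu,\tau_n)$, which makes explicit a detail that the paper's step from $\I{\Ahat_{\Alg}\neq\Ahat_{\Alg'}}$ to $\I{\tau_n\neq\tau'_n}$ leaves implicit.
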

\begin{proof}
    Considering $P_{\theta_*,\alg}(\mu,\tau_n)$ and $P_{\theta_*,\alg'}(\mu,\tau_n)$ as $\mu$ is independent of \alg prior given the trajectory then $P_{\tau}:=P_{\theta_*,\alg}(\mu)=P_{\theta_*,\alg'}(\mu)=:P'_{\tau}$. Now by \cref{lem:CoupleTransport} we know there exists a coupling $Q(\mu,\tau_n,\tau'_n)$ such that 
    \begin{align*}
        Q(\mu,\tau_n)=P_{\tau},\quad Q(\mu,\tau'_n)=P'_{\tau},\quad Q[\tau_n\neq \tau'_n]=\tv(P_{\tau},P'_{\tau})=:\delta
    \end{align*}
    Now let $\E_Q$ be the corresponding expectation then
    \begin{align*}
        \E\left[\mu(\Ahat_{\Alg})-\mu(\Ahat_{\Alg'})\right]&\leq \E_Q\left[\diam(\mu)\I{\Ahat_{\Alg}\neq \Ahat_{\Alg'}}\right]\\&\leq
        \E_Q\left[\diam(\mu)\I{\tau_n\neq \tau'_n}\right]\\&\leq
        \delta\Psi_{\theta_*}(\delta)
    \end{align*}
    \jnote[inline]{this is where the $n$ factor comes in by looking at the total trajectory, i.e., $\E_Q\I{\Ahat_{\Alg}\neq \Ahat_{\Alg'}} \leq \E_Q\I{\tau_n\neq \tau'_n}$.
    
    Let $f(\tau)\in\R^K$ be the vector of arms play proportions in trajectory $\tau$. We know 
    \[\E_Q\I{f(\tau_n)\neq f(\tau'_n)} \leq \E_Q\I{\tau_n\neq \tau'_n},\] 
    so if we can bound the former, we might be able to avoid factor $n$.
    
    Note that given $f(\tau_n)$ all $\tau_n$ corresponding to $f(\tau)$ are equally likely. Given a frequency vector $f_n$ there are $\frac{n!}{\prod_{i=1}^K f_n(i)!}$ different trajectories corresponding to it, lets call the set corresponding to these $\ca{T}(f(\tau_n))$.
    \begin{align*}
        \E_Q\I{f_n= f'_n} = \E_Q\I{|\ca{T}(f_n)\cap \ca{T}(f'_n)|\geq 1}\geq 
    \end{align*}}
    where we used the \cref{defi:UTB} in the last inequality and the fact that $\E_Q\left[\I{\tau_n\neq \tau'_n}\right]=Q\left[\tau_n\neq \tau'_n\right]=\delta$ by definition of $Q$.
\end{proof}

The next result is a generic bound on the relative regret of a Monte-Carlo algorithm compared to an oracle which knows the prior.

\thmfrqMSR*

\begin{proof}[Proof of \cref{thm:frqMSR}]
    By \cref{lem:MR} we know 
    $
        \E[\mu(\Ahat_{\Alg^*})-\mu(\Ahat_{\Alg})]\leq \delta\Psi_{\theta_*}(\delta)\nonumber
    $
    where $\delta=\tv(P_{\theta_*,\Alg^*}\parr P_{\theta_*,\Alg})$. Then by \cref{prop:TV} we know $\delta\leq 2\gamma n\epsilon$. Now since $p\mapsto p\Psi_{\theta_*}(p)$ is non-decreasng in $p$ (Lemma B.5 from \citep{simchowitz2021bayesian}) we get
    \begin{align}
        \E[\mu(\Ahat_{\Alg^*})-\mu(\Ahat_{\Alg})]\leq 2\gamma n\epsilon\Psi_{\theta_*}(2\gamma n\epsilon)\nonumber
    \end{align}
    Finally, by \cref{lem:UTBconds} we get  $\Psi_{\theta_*}(p)\leq B$ if $P_{\theta_*}$ satisfies $P_{\theta_*}(\diam(\mu)\leq B)=1$, whic concludes the first part of the proof.
    
    For the second part we make sure $2\gamma n\epsilon\in[0,1]$, by using $\min(1,2\gamma n\epsilon)$, then again \cref{lem:UTBconds} gives
    \[\Psi_{\theta_*}(2\gamma n\epsilon)\leq \diam(\E_{\theta_*}[\mu])+\sigma_0\left(8+5\sqrt{\log \frac{|\A|}{\min(1,2\gamma n\epsilon)}}\right)\]
\end{proof}

\corfrqSR*

\begin{proof}[Proof of \cref{cor:frqSR}]
    The frequentist meta simple regret decomposes in two terms.
    \begin{align*}
        \SR(m,n,P_{\theta_*})&=\sum_{s=1}^m \E_{\mu_s\sim P_{\theta_*}} [\mu_s(A_s^*)-\mu_s(\Ahat_{\alg_s})]
        \\
        &=\sum_{s=1}^m \E_{\mu_s\sim P_{\theta_*}} [\mu_s(A_s^*)-\mu_s(\Ahat_{\alg^*})]+\E[\mu_s(\Ahat_{\alg^*})-\mu_s(\Ahat_{\alg_s})]
    \end{align*}
    where $\alg^*$ is the oracle algorithm that is initialized with the correct prior $P_{\theta_*}$. Now by \cref{prop:cum2sim} and the properties of $\gamma$-Monte Carlo algorithm, \alg, we can bound the  the first term by $O(m\sqrt{|\A|/n})$. This is because per-task cumulative regret of Monte Carlo algorithm is $O(\sqrt{n|\A|}$, e.g., for TS which we use this holds \citep{agrawal2013further}.
    
    The second term is bounded by $\sum_{s=1}^m 2n\gamma\epsilon_s B$ based on \cref{thm:frqMSR}. Now, if $\epsilon_s=O(1/\sqrt{s})$ we know $O(\sum_{s=1}^m 2n\gamma B/\sqrt{s} )=O(\sqrt{m}n\gamma B$.
    
    For a sub-Gaussian prior, we can use the bound for the second term from \cref{thm:frqMSR} to get the following performance guarantee similarly
    \begin{align*}
        &\SR(m,n,P_{\theta_*})
        \\&=O\Bigg(2\sqrt{m}\gamma n\diam(\E_{\theta_*}[\mu])+\sigma_0\sum_{s=1}^m\left(8+5\sqrt{\log\frac{|\A|}{\min(1,2\gamma n /\sqrt{s})}}\right) 
        \\&\hspace{2in}+m \sqrt{|\A|/n}+m_0 B\Bigg)
    \end{align*}
\end{proof}

\subsection{Technical Tools}
In this section we recite some technical tools from \citet{simchowitz2021bayesian} that are used in our proofs.
\begin{lemma}[Coupled Transport Form, Lemma B.4., \citet{simchowitz2021bayesian}]\label{lem:CoupleTransport}
    Let $P$ and $P'$
    be joint distributions over random variables
    $(X, Y)$ with coinciding marginals $P(X) = P'(X)$
    in the first variable. Then there exists a distribution $Q(X, Y, Y')$
    whose marginals satisfy $Q(X, Y ) = P(X, Y )$ and $Q(X, Y'
    ) = P'(X, Y )$, and
    for which we have
    \[
    \tv(P(X, Y ) \parr P'(X, Y)) = Q[Y\neq Y']\].
\end{lemma}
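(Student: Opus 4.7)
The plan is to prove this by conditioning on $X$ and invoking the classical maximal coupling of probability measures on a Polish space slicewise. The intuition is that since $P$ and $P'$ agree on the $X$-marginal $\pi$, the TV distance between the joints decomposes as an average of conditional TV distances, and each conditional TV distance can be attained by a maximal coupling of $P(Y \mid X=x)$ and $P'(Y \mid X=x)$.

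First, I would show the identity
\[
\tv(P(X,Y) \parr P'(X,Y)) \;=\; \int \pi(dx)\, \tv\!\left(P(Y\mid X=x) \parr P'(Y\mid X=x)\right),
\]
which follows from the $L^1$ characterization of TV and the common marginal assumption $P(X)=P'(X)=\pi$. Concretely, writing densities with respect to a dominating measure, $|P(x,y)-P'(x,y)| = \pi(x)\,|P(y\mid x)-P'(y\mid x)|$, and integrating gives the slicewise decomposition.

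Next, for each fixed $x$ I invoke the existence of a maximal coupling: there is a probability kernel $Q_x(dy,dy')$ on the product space with marginals $P(\cdot\mid X=x)$ and $P'(\cdot\mid X=x)$ such that $Q_x\{y\neq y'\} = \tv(P(Y\mid X=x)\parr P'(Y\mid X=x))$. The classical explicit recipe is to put mass $\min(P(y\mid x),P'(y\mid x))$ on the diagonal $\{y=y'\}$ and distribute the remaining mass as an independent product of the residuals; this construction is measurable in $x$ under standard regularity of the conditional distributions (which one may take for granted in a standard Borel setup). I then define
\[
Q(dx, dy, dy') \;:=\; \pi(dx)\, Q_x(dy, dy').
\]

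Finally, I verify the three claims. The marginal $Q(X,Y)$ integrates $Q_x$ over $y'$, returning $\pi(dx)P(dy\mid x) = P(dx,dy)$, and symmetrically $Q(X,Y') = P'(X,Y)$ (up to renaming of the dummy variable). For the disagreement probability,
\[
Q[Y\neq Y'] \;=\; \int \pi(dx)\, Q_x\{y\neq y'\} \;=\; \int \pi(dx)\, \tv(P(Y\mid X{=}x)\parr P'(Y\mid X{=}x)),
\]
which by the first step equals $\tv(P(X,Y)\parr P'(X,Y))$. The only subtle point is the measurability of $x\mapsto Q_x$, which I expect to be the main (but entirely standard) obstacle; it is handled either by the explicit density formula for maximal couplings or, in full generality, by a measurable selection argument.
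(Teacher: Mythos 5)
Your proof is correct: the decomposition of the joint TV into an average of conditional TV distances (valid because the $X$-marginals coincide), followed by a slicewise maximal coupling glued together by the common marginal $\pi$, gives exactly the claimed $Q$ with $Q[Y\neq Y']=\tv(P(X,Y)\parr P'(X,Y))$, and your remark that measurability of $x\mapsto Q_x$ is the only technical point (handled by the explicit $\min$-density construction) is the right caveat. The paper itself states this lemma without proof, importing it verbatim from \citet{simchowitz2021bayesian}, and your argument is essentially the standard one underlying that reference, so there is nothing to reconcile.
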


\begin{definition}[Tail Conditions]\label{defi:Tails}
Let $\bar{\mu}_{\theta}=\E_{\theta}[\mu]$. We say $P_{\theta}$ is 
\begin{enumerate}[label=(\roman*)]
    \item $B$-bounded if $P_{\theta_*}(\diam(\mu)\leq B)=1$
    \item Coordinate-wise $\sigma^2$-sub-Gaussian if for all $a\in\A$,
        \[P_{\theta}(|\mu_a-\bar{\mu}_{\theta}|\geq t)\leq 2\exp(\frac{-t^2}{2\sigma^2})\]
    \item Coordinate-wise $(\sigma^2, v)$-sub-Gamma if for all $a \in \A$,
        \[P_{\theta}(|\mu_a-\bar{\mu}_{\theta}|\geq t)\leq 2\max\{\exp(\frac{-t^2}{2\sigma^2}),\exp(\frac{-t}{2v})\}\]
\end{enumerate}

\end{definition}
\begin{lemma}[Upper Tail Bound under Tail Conditions, Lemma B.6., \citet{simchowitz2021bayesian}]\label{lem:UTBconds}
Let $\bar{\mu}_{\theta}=\E_{\theta}[\mu]$. Then for any $p\in[0,1]$
\begin{enumerate}[label=(\roman*)]
    \item If $P_{\theta}$ is $B$ bounded, then $\Psi_{\theta}(P)\leq B$ for all $p$.
    \item If $P_{\theta}$ is coordinate-wise $\sigma^2$-sub-Gaussian and $\A$ is finite, then
    \[\Psi_{\theta}(P)\leq \diam(\bar{\mu}_{\theta})+\sigma\left(8+5\sqrt{\log \frac{2|\A|}{p}}\right)\]
    \item if $P_{\theta}$ is coordinate-wise $(\sigma^2,v)$-sub-Gamma and $\A$ is finite, then
    \[\Psi_{\theta}(P)\leq \diam(\bar{\mu}_{\theta})+\sigma\left(8+5\sqrt{\log \frac{2|\A|}{p}}\right)+v\left(11+7\log \frac{2|\A|}{p}\right)\]
\end{enumerate}
We can extend these for $p\geq 1$ by replacing $p\gets \min(1,p)$.
    
\end{lemma}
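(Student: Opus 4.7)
The plan is to decompose $\diam(\mu)$ into a deterministic baseline and a zero-mean fluctuation, and then bound the fluctuation's upper-tail functional $\Psi$ via the layer-cake representation combined with the assumed concentration.

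First I would write, for any realization of $\mu$,
\[
\diam(\mu) \le \diam(\bar\mu_{\theta}) + 2M, \qquad M := \max_{a\in\A}|\mu_a - \bar\mu_{\theta,a}|,
\]
obtained by adding and subtracting $\bar\mu_{\theta,a}$ inside the $\max$ and $\min$. Using the variational definition of $\Psi$ together with the fact that constants pass through (since any feasible $Y$ has $\E[Y]\le p$, so $\E[c\cdot Y]/p \le c$) and that $\Psi$ is subadditive (supremum of a sum bounded by sum of suprema), this yields
\[
\Psi_{\theta}(p) = \Psi_{\diam(\mu)}(p) \le \diam(\bar\mu_{\theta}) + 2\,\Psi_M(p).
\]
Part (i) is then immediate: if $\diam(\mu)\le B$ a.s., then $\E[\diam(\mu)\,Y]\le B\,\E[Y]\le Bp$, so $\Psi_{\theta}(p)\le B$.

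For parts (ii) and (iii) the core task is bounding $\Psi_M(p)$. I would use the layer-cake identity $\E[MY]=\int_0^\infty \E[Y\,\mathbb{I}\{M\ge t\}]\,dt$ together with the two bounds $\E[Y\,\mathbb{I}\{M\ge t\}]\le \E[Y]\le p$ and $\E[Y\,\mathbb{I}\{M\ge t\}]\le \P(M\ge t)$, giving
\[
p\,\Psi_M(p) \;\le\; \int_0^\infty \min\bigl(p,\,\P(M\ge t)\bigr)\,dt.
\]
A union bound over $a\in\A$ converts the coordinate-wise tail assumption into a tail on $M$: in case (ii), $\P(M\ge t)\le 2|\A|\exp(-t^2/(2\sigma^2))$; in case (iii), $\P(M\ge t)\le 2|\A|\max\{\exp(-t^2/(2\sigma^2)),\exp(-t/(2v))\}$. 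I would then choose the threshold $t^\star$ where the union-bound tail equals $p$, namely $t^\star=\sigma\sqrt{2\log(2|\A|/p)}$ in the sub-Gaussian case and $t^\star=\max\{\sigma\sqrt{2\log(2|\A|/p)},\,2v\log(2|\A|/p)\}$ in the sub-Gamma case, split the integral as $\int_0^{t^\star}p\,dt + \int_{t^\star}^\infty \P(M\ge t)\,dt$, and use the standard Mills-ratio-type bound $\int_{t^\star}^\infty e^{-x^2/(2\sigma^2)}\,dx\le \frac{\sigma^2}{t^\star}e^{-(t^\star)^2/(2\sigma^2)}$ (and its Laplace analogue for the linear regime in (iii)). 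Dividing by $p$ recovers the leading $\sqrt{\log(2|\A|/p)}$ terms, multiplied by $2$ from the decomposition above.

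The main obstacle, and where the constants $8$, $5$ (and $11$, $7$ in the sub-Gamma case) come from, is the low-information regime where $\log(2|\A|/p)$ is not large: there the Mills-ratio tail bound degrades and the split-integral estimate becomes loose. To absorb this uniformly in $p\in(0,1]$, I would separately upper-bound $\Psi_M(p)\le \E[M]\le \sigma(c_1+c_2\sqrt{\log(2|\A|)})$ using the maximal inequality for sub-Gaussian (resp.\ sub-Gamma) variables, then take the minimum of the two bounds and absorb the overhead into the additive constants $8$ and $11$, and the multiplicative constant $5$ (resp.\ $7$) on the $\sqrt{\log}$ (resp.\ $\log$) term. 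Finally, the extension to $p\ge 1$ follows from the convention $\Psi_X(p)=\E[X]$ together with monotonicity: the right-hand side at $\min(1,p)=1$ upper-bounds $\E[\diam(\mu)]$ by the same decomposition $\E[\diam(\mu)]\le \diam(\bar\mu_{\theta})+2\E[M]$, where $\E[M]$ is controlled by the same maximal inequality used to handle the low-information regime.
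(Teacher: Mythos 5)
The paper does not actually prove this lemma: it is imported verbatim as Lemma B.6 of \citet{simchowitz2021bayesian} in the ``Technical Tools'' subsection, so there is no in-paper argument to compare against. Judged on its own, your proof is correct and follows the standard (and, structurally, the original source's) route: the pointwise bound $\diam(\mu)\le\diam(\bar\mu_{\theta})+2\max_{a}|\mu_a-\bar\mu_{\theta,a}|$, monotonicity and subadditivity of $\Psi$ (both valid here because the test functions $Y$ are nonnegative and constants satisfy $\Psi_c(p)\le c$), the layer-cake identity with $\E[Y\,\mathbb{I}\{M\ge t\}]\le\min(p,\P(M\ge t))$, a union bound to transfer the coordinate-wise tails to $M$, and a split of the integral at the level-$p$ threshold with a Mills-ratio tail estimate. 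Your worry about the ``low-information regime'' is actually unnecessary: since $p\le 1$ and $|\A|\ge 1$, one always has $\log(2|\A|/p)\ge\log 2$, so the residual term $\sigma^2/t^{\star}$ is bounded by $\sigma/\sqrt{2\log 2}<\sigma$ and the resulting constants ($2\sqrt{2}\le 5$ on the $\sqrt{\log}$ term, and an additive overhead well under $8\sigma$, with the analogous slack in the sub-Gamma case) absorb directly without needing a separate maximal-inequality bound and a minimum of two estimates. The $p\ge 1$ extension via $\Psi_X(p)=\E[X]$ and the same decomposition is also fine. In short: a complete and correct self-contained proof of a result the paper only cites.
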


\begin{lemma}[Pinsker's Inequality]\label{lem:Pins}
    If $P$ and $Q$ are two probability distributions on a measurable space $(X,\Sigma)$, then
    \begin{align*}
        \tv(P\parr Q)\leq \sqrt{\frac{1}{2}\KL(P\parr Q)}
    \end{align*}
\end{lemma}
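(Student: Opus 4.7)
The plan is to establish Pinsker's inequality via a standard two-step reduction: first collapse the general measure-theoretic statement to a Bernoulli comparison using the data-processing inequality for KL divergence, then verify the resulting scalar inequality by a short convexity argument.

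For the first step, I would fix an event $A \in \Sigma$ that (essentially) attains the supremum in the definition of total variation, so $P(A) - Q(A) = \tv(P \parr Q)$ (if the supremum is not attained, work with a supremizing sequence and take limits; alternatively, use the Hahn decomposition of $P - Q$). The indicator map $x \mapsto \mathbf{1}_A(x)$ pushes $P$ and $Q$ forward to $\mathrm{Ber}(P(A))$ and $\mathrm{Ber}(Q(A))$. Since KL divergence is monotone under measurable maps (data-processing),
\[
    \KL(P \parr Q) \;\geq\; \KL\bigl(\mathrm{Ber}(P(A)) \parr \mathrm{Ber}(Q(A))\bigr).
\]
Hence it suffices to prove the scalar Bernoulli version: for any $p,q \in (0,1)$,
\[
    p\log\frac{p}{q} + (1-p)\log\frac{1-p}{1-q} \;\geq\; 2(p-q)^2.
\]

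For the scalar bound, I would fix $q$ and set $f(p) := p\log(p/q) + (1-p)\log((1-p)/(1-q)) - 2(p-q)^2$. A direct computation gives $f(q) = 0$, $f'(q) = 0$, and
\[
    f''(p) \;=\; \frac{1}{p} + \frac{1}{1-p} - 4 \;=\; \frac{1}{p(1-p)} - 4 \;\geq\; 0,
\]
using $p(1-p) \leq 1/4$. Thus $f$ is convex with a critical point and zero value at $q$, so $f \geq 0$ on $(0,1)$. Boundary cases $p,q \in \{0,1\}$ are either trivial or follow by a standard continuity/convention argument on the entropy functional.

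Combining the two steps, $\KL(P \parr Q) \geq 2(P(A) - Q(A))^2 = 2\,\tv(P \parr Q)^2$, which rearranges to the claimed bound. The main conceptual obstacle is really just the scalar step, and even that is elementary once the second-derivative estimate is in hand. Routes I considered but would not take here include the variational (Donsker--Varadhan) characterization of KL or the Hoeffding-type moment-generating-function argument: both work, but the reduction-plus-convexity route is the shortest and fully self-contained given only the data-processing inequality.
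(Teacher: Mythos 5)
Your proof is correct. Note, however, that the paper does not prove this lemma at all: it is stated as a standard technical tool (used only to convert the KL bound between Gaussian priors into a TV bound in the frequentist linear-bandit analysis), so there is no in-paper argument to compare against. Your route --- reducing to the Bernoulli case via the Hahn decomposition (or a supremizing sequence) and the data-processing inequality, then verifying the scalar bound $\KL(\mathrm{Ber}(p)\parr\mathrm{Ber}(q))\geq 2(p-q)^2$ by checking $f(q)=f'(q)=0$ and $f''\geq 0$ --- is the classical self-contained proof, and your treatment of the degenerate cases (supremum attainment, $p,q\in\{0,1\}$, and the trivial case of infinite KL when absolute continuity fails) is adequate.
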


\begin{lemma}[Gaussian KL-divergence]\label{lem:KLG}
If $P=\Ngaus(\theta, \Sigma)$ and $\hat{P}=\Ngaus(\hat{\theta}, \hat{\Sigma})$ then
    \begin{align}
        \KL(P\parr\hat{P})=\frac{1}{2}
        \left(
        \tr(\Sigma^{-1/2}\hat{\Sigma}\Sigma^{-1/2}-I)-\log\det(\Sigma^{-1/2}\hat{\Sigma}\Sigma^{-1/2})+ \vnormf{\Sigma^{-1/2}(\hat{\theta}-\theta)}_2^2
        \right)\label{eq:KLG}
    \end{align}
\end{lemma}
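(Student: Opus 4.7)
The plan is to derive this identity directly from the integral definition of Kullback--Leibler divergence and reduce the computation to standard Gaussian moment identities. I would begin with
\[
\KL(P\parr\hat P)=\int p(x)\,[\log p(x)-\log\hat p(x)]\,dx,
\]
plug in the closed-form Gaussian densities, and observe that the $(2\pi)^{-d/2}$ normalization cancels in the log-ratio. What remains is a deterministic contribution $\tfrac12\log\det\hat\Sigma-\tfrac12\log\det\Sigma$ plus a difference of two quadratic forms in $x$, namely $-\tfrac12(x-\theta)\T\Sigma^{-1}(x-\theta)$ and $+\tfrac12(x-\hat\theta)\T\hat\Sigma^{-1}(x-\hat\theta)$, each of which needs to be integrated against $p$.

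The core of the computation is evaluating those Gaussian integrals via two textbook identities: the quadratic-form expectation $\E_{X\sim\Ngaus(\theta,\Sigma)}[(X-\theta)\T A(X-\theta)]=\tr(A\Sigma)$ for any symmetric $A$, together with the centering $\E[X-\theta]=0$. The $(X-\theta)\T\Sigma^{-1}(X-\theta)$ quadratic contributes $\tr(\Sigma^{-1}\Sigma)=d$. For the second quadratic, I would shift $X-\hat\theta=(X-\theta)-(\hat\theta-\theta)$ and expand: the $(X-\theta)\T\hat\Sigma^{-1}(X-\theta)$ piece integrates to $\tr(\hat\Sigma^{-1}\Sigma)$, the linear cross term vanishes by centering, and the deterministic Mahalanobis term passes through untouched.

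Collecting terms yields a preliminary expression involving $\tr(\cdot)$, $\log\det(\cdot)$, and a single Mahalanobis distance. To cast it into the paper's symmetric-root form, I would apply the similarity identities $\tr(AB)=\tr(B^{1/2}AB^{1/2})$ and $\det(AB)=\det(B^{1/2}AB^{1/2})$ to rewrite the trace and log-determinant in terms of the conjugated matrix $\Sigma^{-1/2}\hat\Sigma\Sigma^{-1/2}$ appearing in the statement, absorb the $-d$ into the trace via $\tr(M)-d=\tr(M-I)$, and rewrite the Mahalanobis quadratic as $\vnormf{\Sigma^{-1/2}(\hat\theta-\theta)}_2^2$. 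Doing so recovers the stated expression.

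There is no real mathematical obstacle here: this is the standard textbook derivation of the multivariate Gaussian KL, ultimately reducing to linear algebra with Gaussian second moments. The only care required is bookkeeping, specifically making sure that whichever inverse covariance appears in the Mahalanobis term is also the one being conjugated inside the $\tr$ and $\log\det$, so that the final expression is consistently normalized by $\Sigma^{-1/2}$ throughout, matching the symmetric-root convention on the right-hand side of the lemma.
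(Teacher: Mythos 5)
The paper offers no proof of this lemma (it is dismissed as ``a standard result in statistics''), and your plan is indeed the standard textbook derivation: expand the log-density ratio, kill the cross term by centering, and evaluate the two quadratic forms with $\E[(X-\theta)(X-\theta)\T]=\Sigma$. That computation is correct as far as it goes. However, your final step --- the claim that rewriting in symmetric-root form ``recovers the stated expression'' --- does not close, and the bookkeeping issue you flag in your last paragraph is precisely where it breaks. Carrying your derivation through gives
\[
\KL(P\parr\hat P)=\tfrac12\Big(\tr(\hat\Sigma^{-1/2}\Sigma\hat\Sigma^{-1/2}-I)-\log\det(\hat\Sigma^{-1/2}\Sigma\hat\Sigma^{-1/2})+\vnormf{\hat\Sigma^{-1/2}(\hat\theta-\theta)}_2^2\Big),
\]
i.e.\ everything is conjugated by $\hat\Sigma^{-1/2}$ (the covariance of the \emph{second} argument), whereas the right-hand side of \eqref{eq:KLG} conjugates by $\Sigma^{-1/2}$. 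Under the paper's own convention $D(P\parr Q)=\int\log(\da P/\da Q)\,\da P$, the displayed right-hand side is in fact $\KL(\hat P\parr P)$, not $\KL(P\parr\hat P)$; the two expressions agree only when $\Sigma=\hat\Sigma$. So either you must acknowledge that your derivation produces the formula with the roles of $\Sigma$ and $\hat\Sigma$ exchanged, or the lemma as stated should be read with its arguments reversed. The discrepancy is harmless downstream, since the paper only ever invokes the special case of \cref{lem:KLG-2} with $\Sigma=\hat\Sigma=\sigma_0^2 I_d$, where the trace and log-determinant terms vanish and the surviving Mahalanobis term $\tfrac{1}{2\sigma_0^2}\vnormf{\hat\theta-\theta}_2^2$ is symmetric in the two means --- but a proof that asserts exact equality with \eqref{eq:KLG} as written would be incorrect.
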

The proof is a standard result in statistics.

\subsection{Lower Bound}\label{app:LB}

\begin{theorem}[Lower Bound]
    Consider any $\gamma$-shot TS algorithm $\TS_{\gamma}(\cdot)$ for $\gamma\in\mathbb{N}$ and a task with prior $P_{\theta}$ over bounded mean rewards
    $\mu \in [0, 1]^{|\A|}$ with ${|\A|}=n\lceil\frac{c_0}{\eta}\rceil$. Then there exists universal constant $c_0$ for a fixed $\eta\in(0,1/4)$ such that for any horizon $n \gg \frac{c_0}{\eta}$ and error $\epsilon\leq \frac{\eta}{c_0\gamma n}$, there exists prior $P_{\theta'}$ with $\tv(P_{\theta}\parr P_{\theta'}) =\epsilon$ and
    \begin{align}
        \E_{\mu\sim P_{\theta}}\E[\mu(\Ahat_{\TS_{\gamma}(\theta)})-\mu(\Ahat_{\TS_{\gamma}(\theta')})] \geq  (\frac{1}{2}-\eta) \gamma n\epsilon\nonumber
    \end{align}
\end{theorem}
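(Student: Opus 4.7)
The plan is to reduce this simple regret lower bound to the cumulative regret lower bound of Theorem 3.3 in \citet{simchowitz2021bayesian} via the Cum2Sim reduction of \cref{prop:cum2sim}. The first step is to observe that the recommendation rule $\Ahat_s \sim \rho_s$ with $\rho_s(a) = N_{a,s}/n$ gives the identity
\begin{equation*}
\E_{\mu \sim P_\theta}\E[\mu(\Ahat_{\alg})] \;=\; \frac{1}{n}\, \E_{\mu \sim P_\theta}\E\!\left[\sum_{t=1}^n \mu(A_t^{\alg})\right],
\end{equation*}
so that the difference of expected rewards of the recommended arms between the oracle $\TS_\gamma(\theta)$ and the misspecified $\TS_\gamma(\theta')$ equals $1/n$ times the corresponding cumulative regret difference
\begin{equation*}
\E[\mu(\Ahat_{\TS_\gamma(\theta)})] - \E[\mu(\Ahat_{\TS_\gamma(\theta')})] \;=\; \frac{1}{n}\bigl(R_{\TS_\gamma(\theta')}(n, P_\theta) - R_{\TS_\gamma(\theta)}(n, P_\theta)\bigr).
\end{equation*}

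Next I would import the hard-instance construction from \citet{simchowitz2021bayesian}. Their Theorem 3.3, in exactly the parameter regime $n \geq c_0/\eta$, $|\A| = n\lceil c_0/\eta\rceil$, and $\epsilon \leq \eta/(c_0 \gamma n)$, produces a pair of priors with $\tv(P_\theta \parr P_{\theta'}) = \epsilon$ for which
\begin{equation*}
R_{\TS_\gamma(\theta')}(n, P_\theta) - R_{\TS_\gamma(\theta)}(n, P_\theta) \;\geq\; \left(\tfrac{1}{2} - \eta\right)\gamma n^2 \epsilon.
\end{equation*}
The mechanism is a needle-in-a-haystack: one good arm hidden among $\Theta(n/\eta)$ near-identical bad arms, combined with the $\gamma$-Monte Carlo property (\cref{defi:MC}) and the trajectory-level TV perturbation bound (\cref{prop:TV}) to force the misspecified algorithm to pull a bad arm on an $\Omega(\gamma n \epsilon)$ fraction of rounds, each pull incurring a constant reward gap. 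Dividing this inequality by $n$ using the Step 1 identity yields exactly $(\tfrac{1}{2} - \eta)\gamma n\epsilon$, as claimed.

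The main obstacle is verifying that the Simchowitz et al.\ construction ports over cleanly to our recommendation-based setting. This comes down to two points: (i) their cumulative-regret lower bound depends only on the pull-count distribution of the algorithm, which it does since the task mean $\mu$ is time-invariant, so $R = n\mu(A^*) - \sum_a N_a \mu(a)$ is a linear functional of $(N_a)_{a\in\A}$; and (ii) their prior pair and arm-set construction do not reference the recommendation rule in any way. Once these two bookkeeping checks are done, no additional information-theoretic argument is needed beyond invoking their theorem and applying the $1/n$ normalization supplied by \cref{prop:cum2sim}.
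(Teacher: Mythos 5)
Your proposal matches the paper's proof: the paper likewise invokes the cumulative-regret lower bound of \citet{simchowitz2021bayesian} (stated there as Theorem~D.1, the formal version of their Theorem~3.3), which gives a gap of $(\tfrac{1}{2}-\eta)\gamma n^2\epsilon$ in the same parameter regime, and then applies the cumulative-to-simple-regret reduction of \cref{prop:cum2sim} with linearity of expectation to divide by $n$. Your additional bookkeeping remarks about the recommendation rule being a $1/n$-weighted pull-count average are exactly what makes that reduction valid, so the argument is the same as the paper's.
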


\begin{proof}
    With the assumptions here, Theorem D.1 from \citet{simchowitz2021bayesian} states that
    \begin{align*}
        R(n,P_{\theta})-R(n,P_{\theta'})\geq (\frac{1}{2}-\eta) \gamma n^2\epsilon
    \end{align*}
    for $\TS_{\gamma}(\theta)$ and $\TS_{\gamma}(\theta')$. Now by \cref{prop:cum2sim} and linearity of expectation we get the result as we divide the RHS with $n$.
\end{proof}

\begin{lemma}\label{lem:expX}
    Let $X$ be a random variable supported on $\{b_1,\cdots,b_K\}\subset\R$ with $b_i\leq 1$ and $p_i:=\P(X=b_i)$ for all $i$. Then
    \begin{align*}
        \E[\exp(X)]\leq \exp\left(\sum_{i=1}^Kp_i(b_i+b_i^2)\right)
    \end{align*}
\end{lemma}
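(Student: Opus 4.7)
The plan is to reduce the claim to two elementary scalar inequalities and then apply them termwise. Specifically, I will combine the second-order exponential bound $e^b \le 1 + b + b^2$ (valid whenever $b \le 1$) with the universal bound $1 + x \le e^x$.

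First I would verify the key pointwise inequality $e^b \le 1 + b + b^2$ for all $b \le 1$. Setting $f(b) = 1 + b + b^2 - e^b$, a short calculus argument shows $f(0) = f'(0) = 0$ and $f''(b) = 2 - e^b$, so $f'$ is strictly increasing on $(-\infty, \ln 2)$ and strictly decreasing on $(\ln 2, \infty)$. This makes $f'$ negative on $(-\infty, 0)$ and positive on $(0, \ln 2)$; since $f'(1) = 3 - e > 0$ and $f'$ is monotone on $[\ln 2, 1]$, $f'$ remains positive throughout $(0, 1]$. Hence $f$ attains its minimum on $(-\infty, 1]$ at $b = 0$, giving $f(b) \ge 0$, i.e., $e^b \le 1 + b + b^2$ for $b \le 1$.

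With this in hand, the main inequality follows in two lines. Since every $b_i \le 1$, we apply the pointwise bound and average under $p_i$:
\begin{align*}
\E[\exp(X)] = \sum_{i=1}^K p_i e^{b_i} \le \sum_{i=1}^K p_i (1 + b_i + b_i^2) = 1 + \sum_{i=1}^K p_i(b_i + b_i^2).
\end{align*}
Then using $1 + x \le e^x$ with $x = \sum_i p_i(b_i + b_i^2)$ yields the claimed bound
\begin{align*}
\E[\exp(X)] \le \exp\!\left(\sum_{i=1}^K p_i(b_i + b_i^2)\right).
\end{align*}

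I do not expect any genuine obstacle here; the only thing worth double-checking is the validity of $e^b \le 1 + b + b^2$ on the full half-line $b \le 1$ (rather than, say, only $b \in [-1,1]$ where it is standard), which the monotonicity argument above handles. The rest is a direct termwise application followed by the one-line exponential relaxation.
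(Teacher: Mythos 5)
Your proof is correct and follows essentially the same route as the paper: bound $e^{b}\leq 1+b+b^{2}$ for $b\leq 1$, take the expectation termwise, and finish with $1+x\leq e^{x}$. The only difference is that you explicitly verify the quadratic bound on all of $(-\infty,1]$, which the paper simply asserts.
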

\begin{proof}
    As $e^t\leq 1+t+t^2$ for all $t\leq 1$, we have
    \begin{align*}
        \E[\exp(X)]\leq \E[1+X+X^2]=1+\sum_{i=1}^Kq_i(b_i+b_i^2)
    \end{align*}
    Then we can get the result noting that $1+t\leq e^t$ for any $t\in\R$.
\end{proof}

\subsection{Proofs of Frequentist Bernoulli}\label{sec:freBern-pfs}

We first prove the following result on the relative simple regret of a Monte-Carlo algorithm compared to an oracle algorithm which knows the prior. This algorithm uses the method of moments estimator of \cref{eq:BernEst}. 

\begin{corollary}[Relative Per-task Simple Regret, Bernoulli Bandits]
\label{cor:frqMSR-Bern}
    Let $\alg$ be an $\gamma$-Monte Carlo algorithm. 
    Under the setting of \cref{sec:Examp-Bern}, let $\hat{\beta}_*$ be the estimated prior parameters based on \cref{eq:BernEst},
    and $\Alg=\alg(\theta_*)$ and $\Alg'=\alg(\hat{\theta}_*)$ be oracle \alg and \alg instantiated by the estimated prior in a task after $m_0$ exploration tasks, respectively. Then for any $\epsilon$ there is a constant $C$ such that if 
    $\MBern$,
    we know
    \begin{align}
        \E[\mu(\Ahat_{\Alg})-\mu(\Ahat_{\Alg'})]&\leq 2\gamma n\epsilon\nonumber
    \end{align}
    with probability at least $1-\delta$.
\end{corollary}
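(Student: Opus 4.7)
The proof reduces to a concentration bound on the estimator. By Theorem~\ref{thm:frqMSR} (applied in its $\gamma$-Monte Carlo form, which follows from Proposition~\ref{prop:TV} combined with Lemma~\ref{lem:MR}), and since Bernoulli arm means lie in $[0,1]$ so $\diam(\mu)\leq 1$ almost surely, we get $\E[\mu(\Ahat_{\Alg})-\mu(\Ahat_{\Alg'})]\leq 2\gamma n\,\epsilon'$ whenever
\[
\tv(P_{\theta_*}\parr P_{\hat\theta_*})\leq \epsilon'.
\]
The goal is therefore to show that the method-of-moments estimator in~\eqref{eq:BernEst}, fed by $m_0$ exploration tasks, satisfies $\tv(P_{\theta_*}\parr P_{\hat\theta_*})\leq \epsilon$ with probability $\geq 1-\delta$ as soon as $m_0\geq C|\A|^2\log(|\A|/\delta)/\epsilon^2$.

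\textbf{Product decomposition and per-arm TV.} Because $P_{\theta_*}=\bigotimes_{a\in\A}\text{Beta}(\alpha_a^*,\beta_a^*)$ is a product measure and the estimator treats arms independently, sub-additivity of total variation for product measures gives
\[
\tv(P_{\theta_*}\parr P_{\hat\theta_*})\leq \sum_{a\in\A}\tv\bigl(\text{Beta}(\alpha_a^*,\beta_a^*)\parr\text{Beta}(\hat\alpha_a,\hat\beta_a)\bigr).
\]
I will control each summand by $\epsilon/|\A|$ with probability at least $1-\delta/|\A|$ and union bound. For one arm, combining Pinsker's inequality (Lemma~\ref{lem:Pins}) with the closed-form KL divergence between two Beta distributions, then expanding to second order in the parameters using smoothness of the digamma and trigamma functions on a compact region, yields
\[
\tv\bigl(\text{Beta}(\alpha_a^*,\beta_a^*)\parr\text{Beta}(\hat\alpha_a,\hat\beta_a)\bigr)\leq C_1\bigl(|\hat\alpha_a-\alpha_a^*|+|\hat\beta_a-\beta_a^*|\bigr)
\]
for a constant $C_1$ depending only on bounds on the true hyperparameters.

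\textbf{Moment concentration and inversion.} For each arm $a$, the $m_0/|\A|$ observations $X_1,\dots,X_{m_0/|\A|}\in\{0,1,\dots,t_0\}$ are i.i.d.\ Beta-Binomial, hence bounded. Hoeffding's inequality applied separately to $X_s$ and $X_s^2$ gives, with probability $\geq 1-\delta/|\A|$,
\[
|\hat M_1-\E X|+|\hat M_2-\E X^2|\;\lesssim\;t_0^2\sqrt{\frac{|\A|\log(|\A|/\delta)}{m_0}}.
\]
Solving~\eqref{eq:BernEst} expresses $(\hat\alpha_a,\hat\beta_a)$ as an explicit rational function of $(\hat M_1,\hat M_2)$ whose Jacobian at the population moments is invertible by direct computation, so the map is Lipschitz in a neighborhood of the truth. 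Propagating through this Lipschitz map, combining with the per-arm TV bound, and union-bounding over the $|\A|$ arms gives a total TV of order $C_1 C_2 |\A|^{3/2}\sqrt{\log(|\A|/\delta)/m_0}$. Setting this $\leq \epsilon$ recovers the sample complexity $m_0\geq C|\A|^2\log(|\A|/\delta)/\epsilon^2$ for a universal $C$ that absorbs $C_1$, $C_2$, and $t_0$.

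\textbf{Expected obstacle.} The delicate step is the inversion in~\eqref{eq:BernEst}: both the Lipschitz constant of the moment-to-parameter map and the per-arm constant $C_1$ relating parameter error to TV degenerate near the boundaries $\alpha_a^*,\beta_a^*\to 0$ or $\to\infty$, where Beta densities and digamma values blow up. The cleanest remedy is to assume a compact feasible region for the hyperparameters up front, absorbing the resulting dependence into the universal constant $C$; otherwise, one must track these boundary-dependent factors explicitly throughout the chain above.
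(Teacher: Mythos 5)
Your reduction step is exactly the paper's: apply \cref{thm:frqMSR} with $B=1$ (Bernoulli means lie in $[0,1]$, so $\diam(\mu)\leq 1$ almost surely) to convert a TV bound on the estimated prior into the relative simple-regret bound $2\gamma n\epsilon$. Where you diverge is the estimator guarantee: the paper does not prove it at all, it simply invokes Theorem 4.1 of \citet{simchowitz2021bayesian}, which already states that $m_0\geq C|\A|^2\log(|\A|/\delta)/\epsilon^2$ exploration tasks suffice for $\tv(P_{\theta_*}\parr P_{\hat{\theta}_*})\leq\epsilon$ with probability $1-\delta$. You attempt to re-derive that guarantee, and this is where there is a genuine gap.

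Concretely, your chain "TV subadditivity over the $|\A|$ arms, then Pinsker per arm" does not deliver the advertised rate. With $m_0/|\A|$ samples per arm, Hoeffding gives per-arm parameter (hence per-arm TV) error of order $\sqrt{|\A|\log(|\A|/\delta)/m_0}$; summing over $|\A|$ arms gives total TV of order $|\A|^{3/2}\sqrt{\log(|\A|/\delta)/m_0}$, exactly as you write. But setting $|\A|^{3/2}\sqrt{\log(|\A|/\delta)/m_0}\leq\epsilon$ forces $m_0\gtrsim |\A|^{3}\log(|\A|/\delta)/\epsilon^2$, not $|\A|^{2}$: your final sentence claiming this "recovers" the stated sample complexity is an arithmetic slip, and as written your argument only proves the corollary with a weaker ($|\A|^3$) exploration requirement. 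To recover the $|\A|^2$ rate you should exploit that KL tensorizes over the product prior: bound $\tv(P_{\theta_*}\parr P_{\hat{\theta}_*})\leq\sqrt{\tfrac{1}{2}\sum_{a\in\A}\KL{\bigl(\text{Beta}(\alpha_a^*,\beta_a^*)\parr\text{Beta}(\hat{\alpha}_a,\hat{\beta}_a)\bigr)}}$ (Pinsker applied once, at the product level), with each per-arm KL of order the squared parameter error $|\A|\log(|\A|/\delta)/m_0$; the sum is then $|\A|^2\log(|\A|/\delta)/m_0$ and the square root gives the desired condition. Your "expected obstacle" about degeneracy of the moment-map inversion and of the KL/TV comparison near the boundary of the hyperparameter space is real, and is why the constant $C$ (as in the cited theorem) depends on a bounded feasible region for $(\alpha_a^*,\beta_a^*)$; acknowledging it is fine, but it is an assumption you must state, not merely absorb silently.
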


\begin{proof}[Proof of \cref{cor:frqSR-Bern}]
    By Theorem 4.1 from \citet{simchowitz2021bayesian}, we know if $\MBern$, then $\tv(P_{\theta_*}\parr P_{\hat{\theta}_*})\leq \epsilon$ with probability $1-\delta$. Now, as Bernoulli rewards with beta Priors are bounded by 1, then by \cref{thm:frqMSR} we get the result replacing $B$ with 1.
\end{proof}

\corBernSR*

\begin{proof}[Proof of \cref{cor:frqSR-Bern}]
    The frequentist meta simple regret decomposes in three terms.
    As Bernoulli is a $1$-bounded distribution, the $m_0$ term is an upper bound on the simple regret of the exploration tasks. Then for the rest of the tasks, we can use the following decomposition
    \begin{align*}
        \SR(m,n,P_{\theta_*})&=\sum_{s=1}^m \E_{\mu_s\sim P_{\theta_*}} [\mu_s(A_s^*)-\mu_s(\Ahat_{\alg_s})]
        \\
        &=\sum_{s=1}^m \E_{\mu_s\sim P_{\theta_*}} [\mu_s(A_s^*)-\mu_s(\Ahat_{\alg^*})]+\E[\mu_s(\Ahat_{\alg^*})-\mu_s(\Ahat_{\alg_s})]
    \end{align*}
    where $\alg^*$ is the oracle algorithm. Now by \cref{prop:cum2sim} and a problem-independent cumulative regret bound of TS \citep{agrawal2013further}, \citep[Theorem 36.1]{lattimore-Bandit}, we can bound the first term by $O(m\sqrt{|\A|\log(n)/n})$. The second term is bounded based on \cref{cor:frqMSR-Bern} by $\sum_{s=1}^m 2n\gamma\epsilon=2mn\gamma\epsilon$ for a $\gamma$-Monte Carlo algorithm. For TS $\gamma=1$.
\end{proof}

\subsection{Proofs of Frequentist Linear Bandits}\label{app:fLinGausPfs}

In this section we extend the results of \citet{simchowitz2021bayesian} for meta-learning to linear bandits. First note the following result on the KL-divergence of two Gaussian random variables corresponding to the prior and the estimated prior. 

\begin{lemma}[Gaussian KL-divergence]\label{lem:KLG-2}
If $P=\Ngaus(\theta, \sigma_0^2I_d)$ and $\hat{P}=\Ngaus(\hat{\theta}, \sigma_0^2I_d)$ then
    \begin{align}
        \KL(P\parr\hat{P})=\frac{1}{2\sigma_0^2}
         \vnormf{\hat{\theta}-\theta}_2^2\label{eq:KLG-2}
    \end{align}
\end{lemma}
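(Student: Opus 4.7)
The plan is to obtain this identity as an immediate specialization of the general multivariate Gaussian KL formula stated in Lemma \ref{lem:KLG}, namely
\[
\KL(P\parr\hat{P})=\tfrac{1}{2}\Bigl(\tr(\Sigma^{-1/2}\hat{\Sigma}\Sigma^{-1/2}-I)-\log\det(\Sigma^{-1/2}\hat{\Sigma}\Sigma^{-1/2})+\vnormf{\Sigma^{-1/2}(\hat{\theta}-\theta)}_2^2\Bigr).
\]
First, I would set $\Sigma=\hat{\Sigma}=\sigma_0^2 I_d$ in that expression. Because the two covariances coincide, the matrix $\Sigma^{-1/2}\hat{\Sigma}\Sigma^{-1/2}$ equals $I_d$, which I would use to show that the trace term vanishes (since $\tr(I_d-I_d)=0$) and the log-determinant term vanishes (since $\log\det(I_d)=0$).

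Second, I would simplify the Mahalanobis term. With $\Sigma^{-1/2}=\sigma_0^{-1} I_d$, the quadratic form reduces to
\[
\vnormf{\Sigma^{-1/2}(\hat{\theta}-\theta)}_2^2=\sigma_0^{-2}\vnormf{\hat{\theta}-\theta}_2^2.
\]
Combining this with the prefactor $\tfrac{1}{2}$ from Lemma \ref{lem:KLG} yields exactly \eqref{eq:KLG-2}.

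There is essentially no obstacle here: once Lemma \ref{lem:KLG} is granted, the result follows by substitution. Alternatively, if one wanted a self-contained derivation, I would expand the definition $\KL(P\parr\hat{P})=\E_{X\sim P}[\log(P(X)/\hat{P}(X))]$ using the Gaussian densities, observe that the quadratic terms $\vnormf{X-\theta}_2^2/(2\sigma_0^2)$ and $\vnormf{X-\hat\theta}_2^2/(2\sigma_0^2)$ have identical normalizing constants, and use $\E_{X\sim P}[X-\theta]=0$ together with $\E_{X\sim P}[\vnormf{X-\theta}_2^2]=d\sigma_0^2$ to cancel cross and variance terms, leaving only $\vnormf{\hat{\theta}-\theta}_2^2/(2\sigma_0^2)$.
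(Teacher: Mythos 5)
Your proposal is correct and matches the paper's own treatment: the paper proves \cref{lem:KLG-2} simply by noting it is the special case $\Sigma=\hat{\Sigma}=\sigma_0^2 I_d$ of \cref{lem:KLG}, which is exactly the substitution you carry out (with the trace and log-determinant terms vanishing and the Mahalanobis term reducing to $\sigma_0^{-2}\vnormf{\hat{\theta}-\theta}_2^2$). Your optional self-contained derivation from the Gaussian densities is a fine, slightly more elementary alternative, but it is not needed and does not differ in substance.
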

This is a special case of \cref{lem:KLG}. \cref{lem:KLG-2} along with Pinsker's inequality (\cref{lem:Pins}), implies that we need to design an estimator such that the RHS of \cref{eq:KLG-2} is bounded. 

\begin{lemma}\label{lem:LinGaus-est-norm}
    \jtodo{sub-Gaussian}
    Consider a Gaussian prior $P_*=\Ngaus(\theta_*,\sigma_0^2I_d)$ and consider the setting of \cref{sec:LinGaus}, then
    \begin{align*}
        (\mu_1,a_1,y_{1,1}),&\cdots,(\mu_1,a_d,y_{1,d})\\
        (\mu_2,a_1,y_{2,1}),&\cdots,(\mu_1,a_d,y_{2,d})\\
        &\vdots\\
        (\mu_{m_0},a_1,y_{{m_0},1}),&\cdots,(\mu_1,a_d,y_{{m_0},d})
    \end{align*}
    for some ${m_0}\leq m$ be random variables such that
    \begin{align*}
        \mu_s\overset{\text{i.i.d}}{\sim} P_*,\quad y_{s,i}|(\mu_s,a_i)\overset{\text{i.i.d}}{\sim} \Ngaus(a_i\T\mu_s,\sigma^2)
    \end{align*}
    
    and finally define
    \begin{align*}
        \hatTheta\;.
    \end{align*}
    where again $\VmDef$ is the outer product of the basis.
    \todob{*** To be discussed at the next meeting***
    
    The estimator of $\theta_*$ may be invalid. The reason is that we analyze a regression problem where the first $d$ observations in each task are additionally correlated because of the task. The MLE of $\theta_*$ in this setting is the same as in Basu. There are two ways out of this:
    
    1) Use the estimator of Basu.
    
    2) De-correlate the observations. One approach would be that only the first round in each task $s$ is exploratory, with pulled arm $a_i$ for $i = ((s - 1) \mod d) + 1$.
    
    In addition to the above, it is unclear how the actions $a_1, \dots, a_d$ are chosen in the current proof. This should be some basis. No need for optimal designs.} 
    \jtodo{
    1) sigma q = inf
    
    2) Solve OLS for d samples in each task, then OLS estimate is Gauss, then it is centered at theta, better bound}
    
    Then for any $\delta\in(2e^{-d},1)$
    \begin{align*}
        \vnormf{\theta-\hat{\theta}}_2\leq \tnorm
    \end{align*}
    with probability at least $1-\delta$.
\end{lemma}
\begin{proof}
    We can write $y_{s,i}=a_i\T\theta_*+a_i\T\xi_{s,2}+\xi_{s,1}$ where $\xi_{s,1}\sim\Ngaus(0,\sigma^2)$ and $\xi_{s,2}\sim\Ngaus(0,\sigma_0^2I_d)$ are independent. Now by an Ordinary Least Squares estimator we constructs the estimator as follows 
    \begin{align*}
        \hat{\theta}_*=\Vmi\sum_{s=1}^{m_0}\sum_{i=1}^da_iy_{s,i}
    \end{align*}
    and 
    \begin{align*}
        \E[\hat{\theta}_*]&=\E[\Vmi\sum_{s=1}^{m_0}\sum_{i=1}^d a_i( a_i\T\theta_*+a_i\T\xi_{s,2}+\xi_{s,1})]
        \\&=\E[\Vmi(V_{m_0}\theta_*+\sum_{s,i} a_ia_i\T\xi_{s,2}+ a_s\xi_{s,1})]
        \\&=\theta_*+\sum_{s,i}\E[\Vmi a_ia_i\T\xi_{s,2}]+\sum_{s,i}\E[\Vmi a_i\xi_{s,1}]
        \\&=\theta_*+\sum_{s,i}\Vmi a_ia_i\T\E[\xi_{s,2}]+\sum_{s,i}\Vmi a_i\E[\xi_{s,1}]=\theta_*
    \end{align*}
    
    Now we bound $\vnormf{\hat{\theta}_*-\theta_*}_2$ as follows
    
    \begin{align*}
        \vnormf{\hat{\theta}_*-\theta_*}_2&=\vnormf{\Vmi V_{m_0}(\hat{\theta}_*-\theta_*)}_2\\
        &=\vnormf{\Vmi\Big(\sum_{s,i}a_i(a_i\T\theta_*+a_i\T\xi_{s,2}+\xi_{s,1})-V_{m_0}\theta_*\Big)}_2\\
        &=\vnormf{\Vmi\sum_{s,i}a_i(a_i\T\xi_{s,2}+\xi_{s,1})}_2\\
        &\leq \vnormf{\Vmi}_2\vnormf{\sum_{s,i}a_i(a_i\T\xi_{s,2}+\xi_{s,1})}_2\\
    \end{align*} 
    
    Now note that $\vnormf{\Vmi}_2$ is the square root of the largest eigenvalue of $\Vmi\Vmi$ which since $V_{m_0}$ is positive definite (by assumption), it equals $\lambda_d^{-1}(V_{m_0})=\frac{1}{{m_0}}\lambda_d^{-1}(\sum_{i=1}^d a_i a_i\T)$. Also, $Z_{s,i} = a_i(a_i\T\xi_{s,2}+\xi_{s,1})$ is a vector with independent $\sigma_i:=\left(\sqrt{\sigma_0^2 \vnormf{a_i}_2^4+ \sigma^2\vnormf{a_i}_2^2}\right)$-sub-Gaussian coordinates (by independence of $\xi_{s,1}$ and $\xi_{s,2}$). We know $Z_{s,i}$'s are independent since the chosen arms are fixed. Then $Z=\sum_{s=1}^{m_0}\sum_{i=1}^d Z_{s,i}\in\mathbb{R}^d$ is a vector with $(\sqrt{{m_0}\sum_{i=1}^d \sigma_i^2})$-sub-Gaussian coordinates and we know  
    \begin{align*}
        \vnormf{\sum_{s,i}a_i(a_i\T\xi_{s,2}+\xi_{s,1})}_2=\vnormf{Z}_2=\sqrt{\sum_{l=1}^d Z_l^2}
    \end{align*}
    where $Z_l$ is the $l$'th coordinate of $Z$. Therefore, by Bernstein’s inequality (Theorem 2.8.1 of
    \citet{Vershynin-HDP-2019}) we know
    \begin{align*}
        \P(\sum_{l=1}^d Z_l^2\geq t)\leq 2\exp(-\min\{\frac{t^2}{d{m_0}\sum_{i=1}^d \sigma_i^2}, \frac{t}{\sqrt{{m_0}\sum_{i=1}^d \sigma_i^2}}\}) 
    \end{align*}
    Thus $\vnormf{Z}_2\leq (d {m_0} \sum_{i=1}^d\sigma_i^2\log(2/\delta))^{1/4}$ with probability at least $1-2\exp(-\min\{\log(2/\delta), \sqrt{d \log(2/\delta)}\})$ which is $1-\delta$ if $\delta\geq 2\exp(-d)$ and 1-$\exp(-\sqrt{d\log(2/\delta)})$ otherwise. Therefore
    \begin{align*}
        \vnormf{\Vmi}_2\vnormf{\sum_{s,i}a_i(a_i\T\xi_{s,2}+\xi_{s,1})}_2 &\leq \frac{1}{{m_0}}\lambda_d^{-1}(\sum_{i=1}^d a_i a_i\T) (d {m_0} \log(2/\delta) \sum_{i=1}^d\sigma_i^2)^{1/4}\\
        &= \tnorm
    \end{align*}
    with the probability discussed above.
\end{proof}

Next, we prove the following for \expl of \cref{eq:LinGaus-theta-hat}.

\thmlinBanf*
\begin{proof}[Proof of \cref{thm:linBan-f-est}]
    By Pinsker's inequality (\cref{lem:Pins}) and \cref{lem:KLG-2} we know
    \begin{align}
        \tv(P_{\theta_*}\parr P_{\hat{\theta}_*})&\leq \sqrt{\frac{1}{2}\KL(P_{\theta_*}\parr P_{\hat{\theta}_*})}\nonumber\\
        &=\frac{1}{2\sigma_0}\vnormf{\hat{\theta}_*-\theta_*}_2\label{eq:rhs-tv}
    \end{align}
Then by \cref{lem:LinGaus-est-norm} we know for 
\begin{align*}
\MLin
\end{align*}
bounds the RHS of \cref{eq:rhs-tv} with the corresponding probability.

\end{proof}

The following statement immediately follows.

\begin{corollary}[Frequentist Relative Simple Regret, Linear Bandits]\label{cor:frqMSR-LinGaus}
    Let $\alg$ be an $\gamma$-Monte Carlo algorithm and $\hat{\theta}_*$ be the estimated prior parameter in \cref{eq:LinGaus-theta-hat},
    and $\Alg=\alg(\theta_*)$ and $\Alg'=\alg(\hat{\theta}_*)$ be the be the oracle \alg algorithm and \alg instantiated by the estimated prior in a task after $m_0$ exploration tasks, respectively. Then for any $\epsilon$ if 
    $\MLin$,
    we know
    \begin{align}
        \E[\mu(\Ahat_{\Alg})-\mu(\Ahat_{\Alg'})]\leq 2\gamma n\epsilon 
        \left(\diam(\E_{\theta_*}[\mu])+\sigma_0 \left( 8+5\sqrt{\log\frac{|\A|}{\min(1,2\gamma n \epsilon)}}\right)\right)\nonumber
    \end{align}
    with probability $1-\delta$ for $\delta\in(2e^{-d},1)$.
\end{corollary}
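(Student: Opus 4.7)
The plan is to chain together two earlier results: Theorem~\ref{thm:linBan-f-est}, which controls the TV distance between the true and estimated priors with high probability, and the sub-Gaussian case of Theorem~\ref{thm:frqMSR}, which converts such a TV bound into the desired per-task simple regret bound relative to the oracle.

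First I would invoke Theorem~\ref{thm:linBan-f-est}. Under the stated hypothesis on $m_0$, this yields $\tv(P_{\theta_*}\parr P_{\hat{\theta}_*})\le\epsilon$ with probability at least $1-\delta$ for $\delta\in(2e^{-d},1)$. I condition the remainder of the argument on this high-probability event, so that all subsequent steps become deterministic.

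Second, I would verify that the linear Gaussian task prior $P_{\theta_*}=\Ngaus(\theta_*,\sigma_0^2 I_d)$, when viewed through the $|\A|$ arm-rewards $\mu(a):=a\T\mu$, is coordinate-wise $\sigma_0^2$-sub-Gaussian in the sense of \cref{defi:Tails}. This is immediate: for each $a\in\A$, $a\T\mu - a\T\E_{\theta_*}[\mu]\sim\Ngaus(0,\sigma_0^2\vnormf{a}_2^2)$, and the normalization $\max_{a\in\A}\vnormf{a}_2\le 1$ (assumed in \cref{sec:LinGaus}) gives the uniform $\sigma_0^2$-sub-Gaussian tail required by the definition. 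With this in hand, the sub-Gaussian part of \cref{thm:frqMSR} applies and directly produces the claimed bound, up to the $\gamma$ factor discussed next.

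Third, I would account for the $\gamma$-Monte-Carlo generality. Theorem~\ref{thm:frqMSR} is stated for TS (which is $1$-Monte Carlo), but inspection of its proof shows that the $\gamma$ enters only through \cref{prop:TV}, giving $\tv(P_{\theta_*,\Alg}\parr P_{\theta_*,\Alg'})\le 2\gamma n\epsilon$ in place of $2n\epsilon$. Combined with the monotonicity of $p\mapsto p\Psi_{\theta_*}(p)$ (Lemma B.5 of \citet{simchowitz2021bayesian}) and the sub-Gaussian upper tail bound from \cref{lem:UTBconds}(ii), this replaces $n$ by $\gamma n$ wherever it appears in the final inequality, yielding exactly
\[
\E[\mu(\Ahat_{\Alg})-\mu(\Ahat_{\Alg'})]\le 2\gamma n\epsilon\left(\diam(\E_{\theta_*}[\mu])+\sigma_0\Bigl(8+5\sqrt{\log\tfrac{|\A|}{\min(1,2\gamma n\epsilon)}}\Bigr)\right).
\]

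The main obstacle is bookkeeping rather than conceptual novelty: one must (i) make sure that $\sigma_0^2$-sub-Gaussianity of the joint prior survives projection onto arbitrary unit-norm arm directions (handled by the $\vnormf{a}_2\le 1$ scaling), (ii) correctly propagate the $\gamma$ factor through the TV-lifting and the monotone envelope $p\Psi_{\theta_*}(p)$, and (iii) keep the failure probability $\delta$ from Theorem~\ref{thm:linBan-f-est} attached to the whole bound, which is immediate since every step after it is deterministic given the good event.
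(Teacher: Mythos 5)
Your proposal is correct and follows the same route as the paper: the paper treats this corollary as an immediate consequence of chaining \cref{thm:linBan-f-est} (the high-probability TV bound on the estimated prior) with the sub-Gaussian case of \cref{thm:frqMSR}, whose proof already carries the $\gamma$ factor via \cref{prop:TV}, the monotonicity of $p\mapsto p\Psi_{\theta_*}(p)$, and \cref{lem:UTBconds}. Your added bookkeeping (checking $\sigma_0^2$-sub-Gaussianity of $a\T\mu$ under $\vnormf{a}_2\le 1$ and conditioning on the estimator's good event) simply makes explicit what the paper leaves implicit.
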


Now we can bound the meta simple regret as follows.

\corfLinSR*

\begin{proof}
     First assume we have $m_0$ exploration tasks, for $m\geq m_0$.
    We decompose the frequentist meta simple regret in three terms.
    As Gaussian is a $\sigma$-Sub-Gaussian distribution, then by Hoeffding's inequality we can upper bound the simple regret of the exploration tasks as follows. We know
    \begin{align*}
        |\mu(\Ahat_{\alg_s})-\mu(\Ahat^*_s)|\leq \sqrt{\sigma_0^2\log(\frac{2}{\sqrt{\delta}})}
    \end{align*}
    with probability at least $1-\sqrt{\delta}$. Then for the rest of the tasks, we can use the following decomposition
    \begin{align*}
        \SR(m,n,P_{\theta_*})&=\sum_{s=1}^m \E_{\mu_s\sim P_{\theta_*}} [\mu_s(A_s^*)-\mu_s(\Ahat_{\alg_s})]
        \\
        &=\sum_{s=1}^m \E_{\mu_s\sim P_{\theta_*}} [\mu_s(A_s^*)-\mu_s(\Ahat_{\alg^*})]+\E[\mu_s(\Ahat_{\alg^*})-\mu_s(\Ahat_{\alg_s})]
    \end{align*}
    where $\alg^*$ is the algorithm that knows the correct prior $P_{\theta_*}$. Now by \cref{prop:cum2sim} and a problem-independent cumulative regret bound of TS for linear bandits \citep{abeille17linear}, we can bound the first term by $O(m d^{3/2}\sqrt{n}\log K/n)=O(m \frac{d^{3/2}\log K}{\sqrt{n}})$. The second term is bounded for any $\gamma$-Monte Carlo algorithm based on \cref{cor:frqMSR-LinGaus} by 
    \begin{align*}
        &\sum_{s=1}^m  2\gamma n\epsilon 
        \left(\diam(\E_{\theta_*}[\mu])+\sigma_0 \left( 8+5\sqrt{\log\frac{|\A|}{\min(1,2\gamma n \epsilon)}}\right)\right)\\
        &\leq 2 m n \gamma\epsilon 
        \left(\diam(\E_{\theta_*}[\mu])+\sigma_0 \left( 8+5\sqrt{\log\frac{|\A|}{\min(1,2\gamma n \epsilon)}}\right)\right)\;.
    \end{align*}
    with probability at least $1-\sqrt{\delta}$ if ${m_0}\geq \left(\frac{  d \log(2/\sqrt{\delta}) \sum_{i=1}^d\sigma_i^2 }{2\sigma_0\lambda_d^{4}(\sum_{i=1}^d a_i a_i\T) \epsilon^4}\right)^{1/3}$.
    Therefore, putting these together we get
    \begin{align*}
        \SR(m,n,P_*)=O\Bigg(2 m n \gamma\epsilon 
        \left(\diam(\E_{\theta_*}[\mu])+\sigma_0 \left( 8+5\sqrt{\log\frac{|\A|}{\min(1,2\gamma n \epsilon)}}\right)\right)\\ + m \frac{d^{3/2}\log K}{\sqrt{n}} + m_0 \sqrt{\sigma_0^2\log(\frac{2}{\sqrt{\delta}})}\Bigg)
    \end{align*}
    with probability $1-\delta$. Now note that $\gamma=1$ for TS.

    Note that $\epsilon\propto m_0^{-3/4}$, and we know $\sum_{s=1}^m s^{-3/4}=O(m^{1/4})$. Therefore, if the exploration continues in all the tasks, the regret bound above becomes $\tilde{O}\left(2 m^{1/4} n ~\diam(\E_{\theta_*}[\mu]) + m \frac{d^{3/2}\log K}{\sqrt{n}}\right)$.
\end{proof}

\section{Experimental Details and Further Results}\label{app:FurtherExperiments}

We used a combination of computing resources. The main resource we used is the USC Center for Advanced Research Computing (https://carc.usc.edu/). Their typical compute node has dual 8 to 16 core processors and resides on a 56 gigabit FDR InfiniBand backbone, each having 16 GB memory. We also used a PC with 16 GB memory and Intel(R) Core(TM) i7-10750H CPU. 

\cref{fig:Gaus-extra,fig:Lin-extra1} show the results for $n=20$ with $m=200$ tasks for MAB and $m=20$ tasks for the linear experiments, where $\sigma_q=1$, $\sigma_0=0.1$, and $\sigma=1$. Note that these are shorter tasks than in \cref{sec:experiment} and thus harder.

In \cref{fig:Gaus-extra}, note that increasing $K$ tightens the relative gap between \TS and \OTS as the tasks become harder and all of the algorithms act closer to each other.

For \cref{fig:Lin-extra1}, note that the gap between \OTS and \TS is more apparent than in \cref{fig:Gaus-extra}. This is probably because the prior over the mean parameter carries more information here as it determines the whole mean reward for $K$ arms using only $d$ dimensions. \fMetaSRM takes a while to outperform \TS as its estimation takes a while to converge to the true prior.

\begin{figure*}[tb]
    \centering
    \begin{minipage}{.45\textwidth}
        \includegraphics[width=\textwidth]{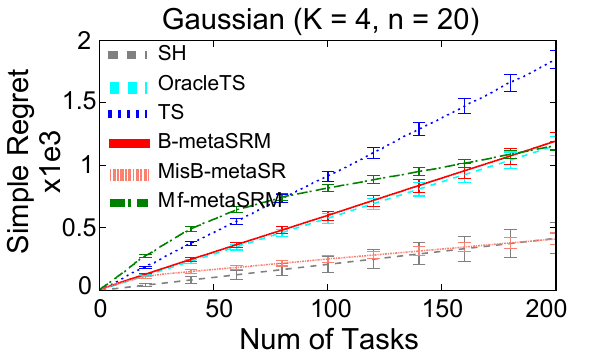}
    \end{minipage}
    \begin{minipage}{.45\textwidth}
        \includegraphics[width=\textwidth]{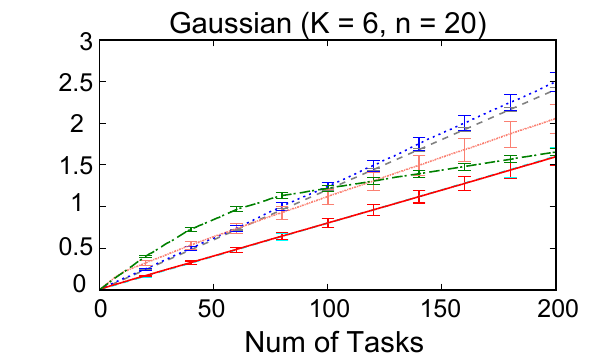}
    \end{minipage}
    \begin{minipage}{.45\textwidth}
        \includegraphics[width=\textwidth]{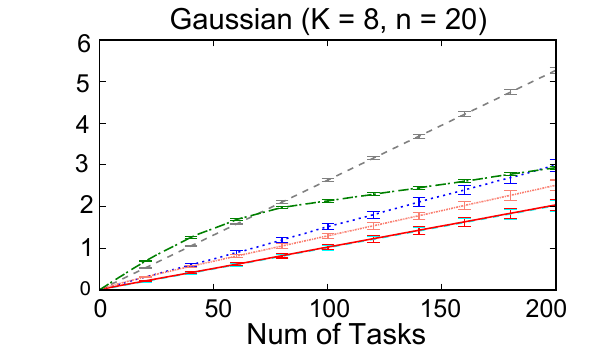}
    \end{minipage}
    \caption{Learning curves for MAB Gaussian bandit experiments. The error bars are the standard deviation of the 100 runs.}
    \label{fig:Gaus-extra}
\end{figure*}

\begin{figure*}[tb]
    \centering
    \begin{minipage}{.45\textwidth}
        \includegraphics[width=\textwidth]{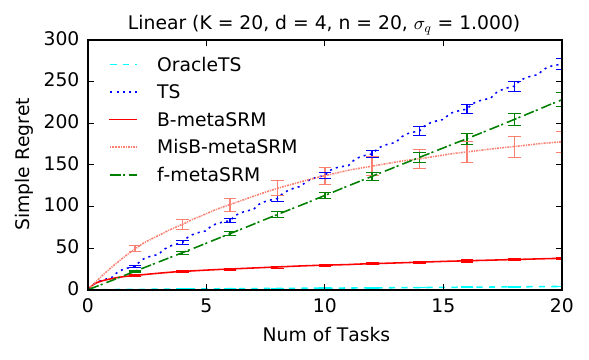}
    \end{minipage}
    \begin{minipage}{.45\textwidth}
        \includegraphics[width=\textwidth]{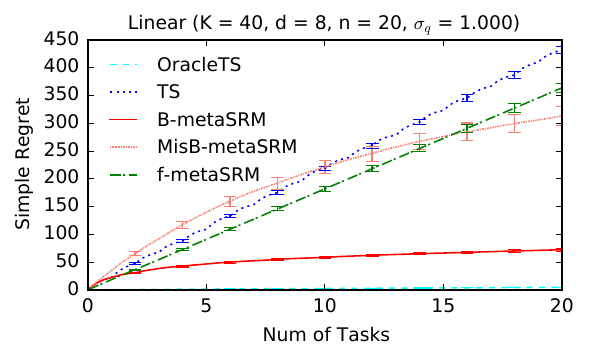}
    \end{minipage}
    \begin{minipage}{.45\textwidth}
        \includegraphics[width=\textwidth]{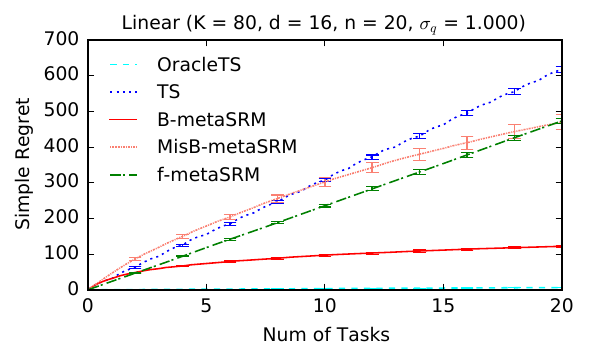}
    \end{minipage}
    \caption{Learning curves for linear Gaussian bandits experiments. The error bars are the standard deviation of the 100 runs.}
    \label{fig:Lin-extra1}
\end{figure*}

\cref{fig:LinExtra} shows further experiments for linear Gaussian bandits with larger $K$ compared to \cref{sec:LinExp}, and $K=10d$. The same setting is used.

\begin{figure}[H]
    \centering
    \begin{minipage}{.45\textwidth}
        \includegraphics[width=\textwidth]{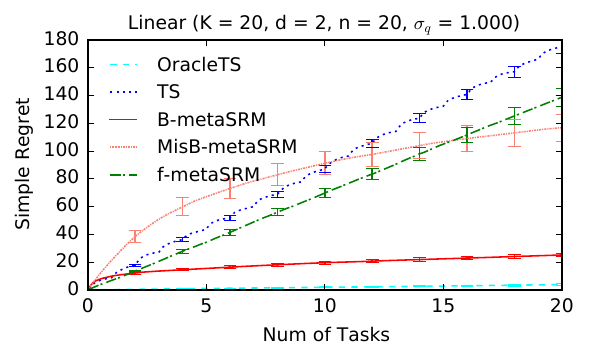}
    \end{minipage}
    \begin{minipage}{.45\textwidth}
        \includegraphics[width=\textwidth]{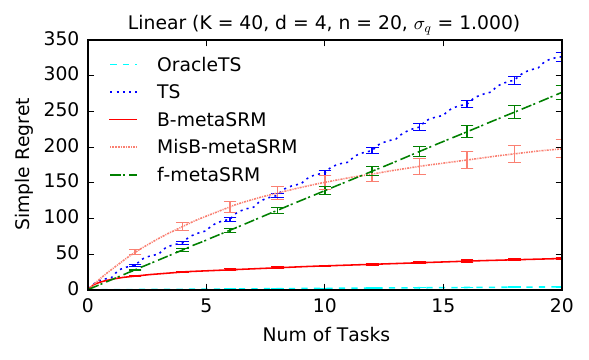}
    \end{minipage}
    \begin{minipage}{.45\textwidth}
        \includegraphics[width=\textwidth]{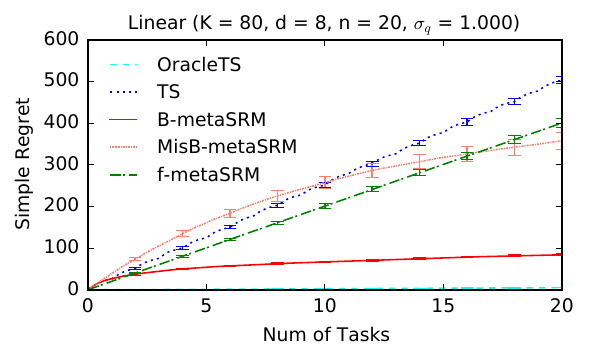}
    \end{minipage}
    \begin{minipage}{.45\textwidth}
        \includegraphics[width=\textwidth]{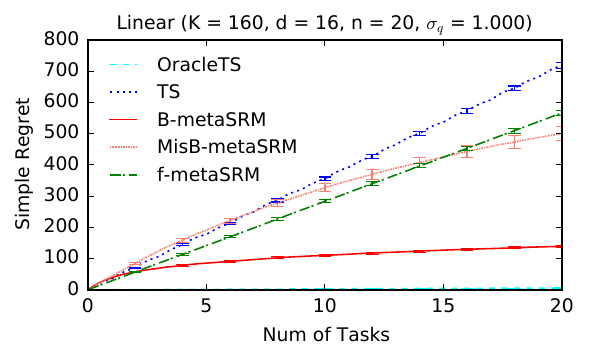}
    \end{minipage}
    \caption{Linear Gaussian bandits experiments with $K=10d$.}
    \label{fig:LinExtra}
\end{figure}

In the next experiment, we evaluate the algorithms based on their average per-task simple regret under a frequentist setting, i.e., the prior is fixed over runs. For Gaussian MAB, we use $\theta_*=[0.5, 0, 0, 0.1, 0, 0]$ with a block structured covariance so that arms 1, 2, 3 are highly correlated, and analogously for arms 4, 5, 6. The rewards are Gaussian with variance 1, which is known to all learners. For the linear case, we set the prior to be $\Ngaus(1, \Sigma_0)$ where $\Sigma_0$ is a scaled-down version of the block diagonal matrix used for the Gaussian MAB case.

\cref{fig:BML} shows the cumulative average per-task simple regret of our meta learning algorithms for Gaussian and linear Gaussian for larger number of tasks. {\tt MisTS} is a TS that uses the misspecified prior of $\Ngaus(0,I)$. Also, {\tt metaTS-SRM} is the MetaTS algorithm \citep{kveton2021metathompson} turned into a SRM algorithm. We can observe that our algorithms asymptotically achieve smaller meta simple regret over the tasks and learn the prior. Notice that \fMetaSRM has the same performance as {\tt metaTS-SRM} after convergence. This is expected as its prior estimation is updated after each task, the same as {\tt metaTS-SRM}.

\begin{figure}[H]
    \centering
    \begin{minipage}{.45\textwidth}
        \includegraphics[width=\textwidth]{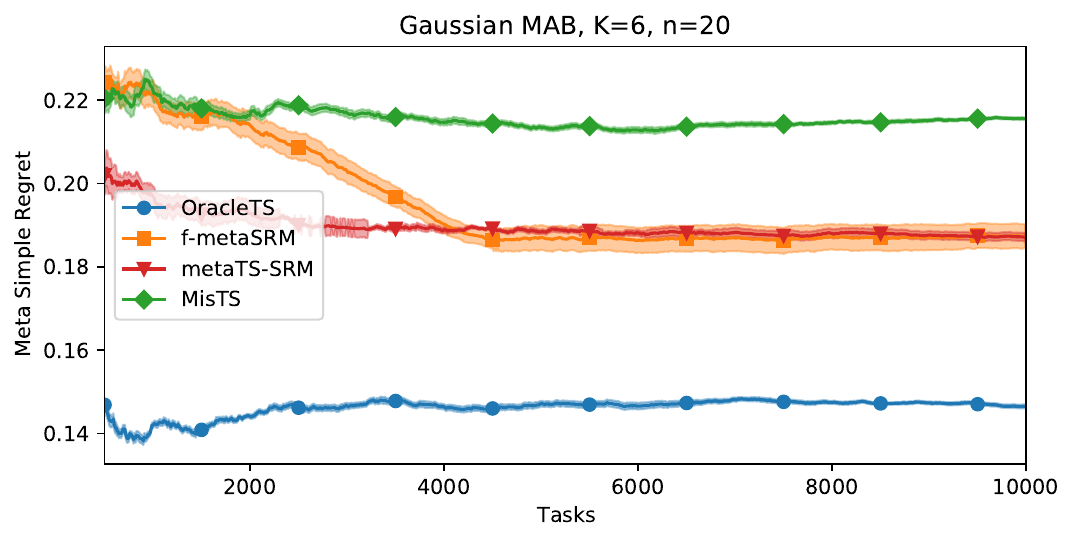}
    \end{minipage}
    \begin{minipage}{.45\textwidth}
        \includegraphics[width=\textwidth]{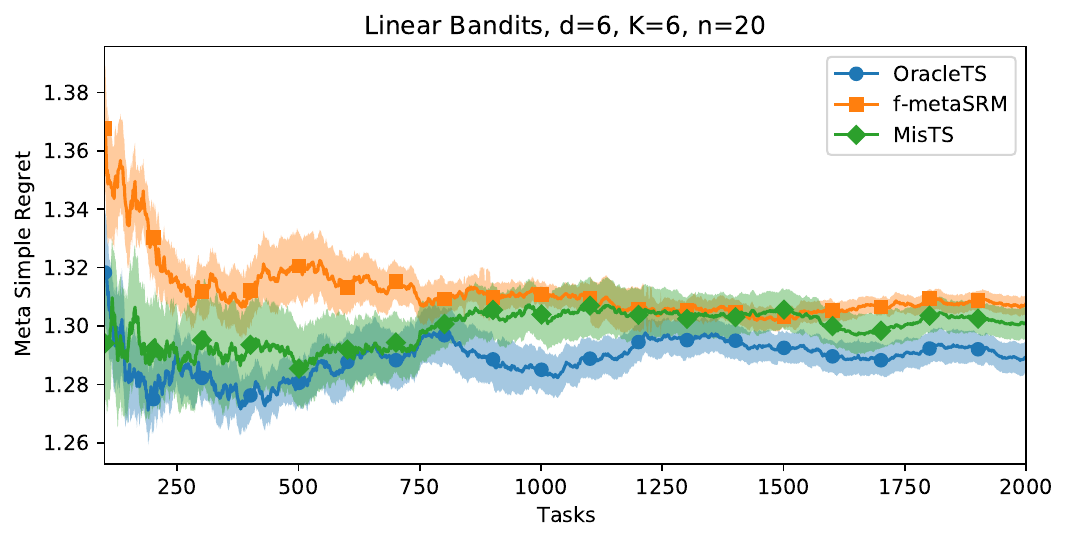}
    \end{minipage}
    \caption{Cumulative average per-task simple regret.}
    \label{fig:BML}
\end{figure}

\subsection{Real-world Experiment}\label{app:mnist}
We experimented with the MNIST\footnote{Accessed at \url{https://www.tensorflow.org/datasets/catalog/mnist}} dataset, in the same setting as in Appendix E.2 of \citet{basu2021no}. This bandit classification problem is cast as a multi-task linear bandit with Bernoulli rewards. We have a sequence of image
classification tasks where one class is selected to be positive. In each task, at every round, $K$ random images are selected as the arms and the goal is to identify the arm corresponding to an image from the positive class. The reward of an image from the selected class is Bernoulli with a mean of $0.9$. For all other classes, it is Bernoulli with mean of $0.1$.

We ran several experiments and present one representative
experiment below. When digit 0 is selected as the positive
class, the simple regret at the end of $m = 10$ tasks, each
with length n = 200, is shown in \cref{fig:mnist}.
Here $K = 30$ and the experiment is averaged over 100
random runs. We observe that \bMetaSRM outperforms the benchmarks. The BAI algorithm \LinGapE yields linear simple regret. \fMetaSRM outperforms \TS but still seems to yield linear simple regret. A larger number of tasks could help confirm this.

\cref{fig:mnist-ps} illustrates the posterior of each algorithm from the best digit (which is 0 here). As we can see \bMetaSRM quickly catches up with the \OTS and grasps the correct posterior. However, other algorithms fail to do it quickly enough and even get trapped in a false posterior. As we can see \fMetaSRM also fails which shows its linear estimation is not robust to model misspecification and underperforms in a non-linear environment like this setting.

\begin{figure}[H]
    \centering
    \begin{minipage}{.53\textwidth}
        \includegraphics[width=\textwidth]{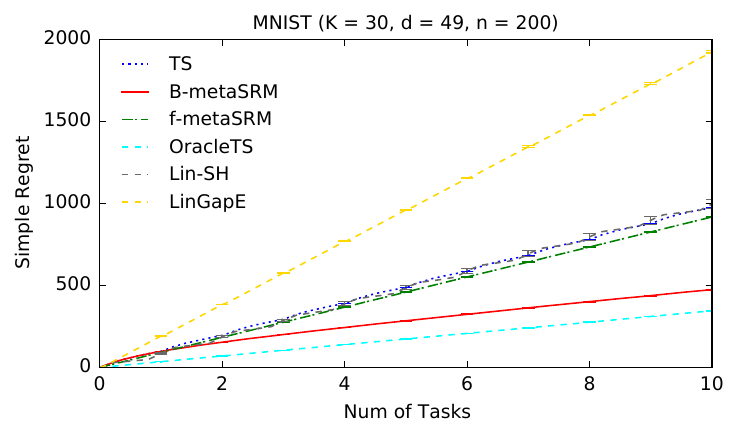}
        \caption{Cumulative average per-task simple regret for the MNIST experiment.}
        \label{fig:mnist}
    \end{minipage}
    \begin{minipage}{.4\textwidth}
        \includegraphics[width=\textwidth]{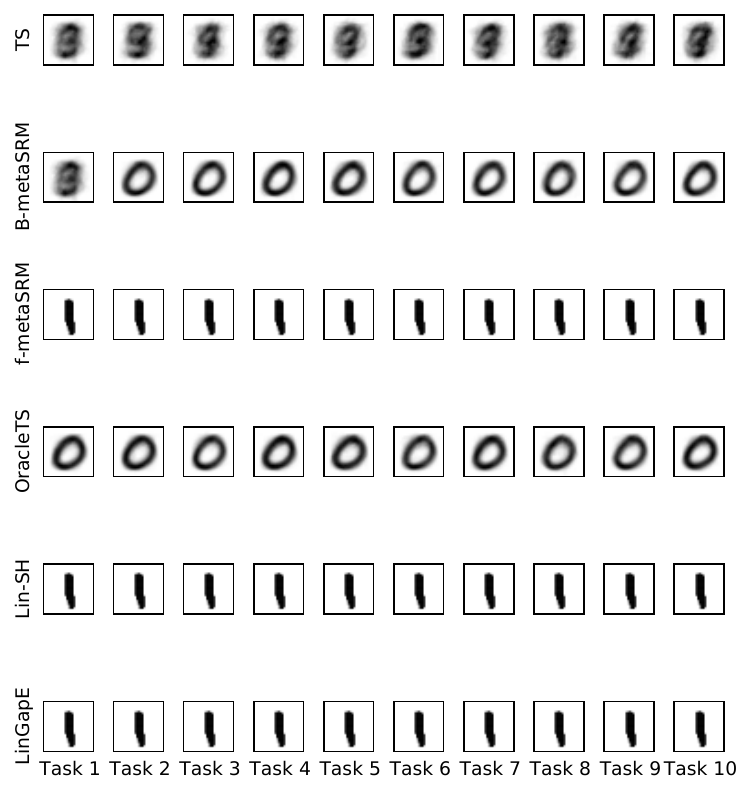}
        \caption{Cumulative average per-task simple regret for the MNIST experiment.}
        \label{fig:mnist-ps}
    \end{minipage}    
\end{figure}

\end{document}